\tikzstyle{startstop} = [rectangle, rounded corners, minimum width=3.5cm, minimum height=1cm, text width=3cm, text centered, draw=black, fill=red!30]
\tikzstyle{process} = [rectangle, rounded corners, minimum height=1cm, text width=3.5cm, text centered, draw=black, fill=green!30]
\tikzstyle{decision} = [trapezium, trapezium stretches=true, shape border rotate=180, text width=2cm, minimum height=1.1cm, text centered, draw=black, fill=yellow!30]
\tikzstyle{arrow} = [thick,->,>=stealth]
\definecolor{myblue}{rgb}{0.39215686274509803, 0.5843137254901961, 0.9294117647058824}
\definecolor{myorange}{rgb}{1.0, 0.6470588235294118, 0.0}
\definecolor{mycoral}{rgb}{0.9411764705882353, 0.5019607843137255, 0.5019607843137255}
\definecolor{mylightgreen}{rgb}{0.596078431372549, 0.984313725490196, 0.596078431372549}
\definecolor{mydarkgreen}{rgb}{0.1803921568627451, 0.5450980392156862, 0.3411764705882353}
\definecolor{mylightorchid}{rgb}{0.8470588235294118, 0.7490196078431373, 0.8470588235294118}
\definecolor{mydarkorchid}{rgb}{0.7294117647058823, 0.3333333333333333, 0.8274509803921568}
\definecolor{mythistle}{rgb}{0.8470588235294118, 0.7490196078431373, 0.8470588235294118}
\definecolor{mymediumorchid}{rgb}{0.7294117647058823, 0.3333333333333333, 0.8274509803921568}
\definecolor{mypalegreen}{rgb}{0.596078431372549, 0.984313725490196, 0.596078431372549}
\definecolor{myseagreen}{rgb}{0.1803921568627451, 0.5450980392156862, 0.3411764705882353}
\theoremstyle{plain}
\newtheorem{theorem}{Theorem}[section]
\newtheorem{proposition}[theorem]{Proposition}
\theoremstyle{definition}
\newtheorem{definition}[theorem]{Definition}
\newtheorem{assumption}[theorem]{Assumption}
\theoremstyle{remark}
\newtheorem{remark}[theorem]{Remark}
\newtheorem{example}[theorem]{Example}
\DeclareMathOperator*{\cov}{Cov}
\DeclareSIUnit{\nothing}{\relax}
\newcommand{\ud}{\mathrm{d}}
\newcommand{\R}{\mathbb R}
\newcommand{\X}{\R^d}
\newcommand{\E}{\mathbb E}
\newcommand{\cN}{\mathcal N}
\newcommand{\cX}{\mathcal X}
\newcommand{\ps}[1]{\langle #1 \rangle}
\newcommand{\cF}{\mathcal{F}}
\newcommand{\cG}{\mathcal{G}}
\newcommand{\cP}{\mathcal{P}}
\newcommand{\KL}{\mathop{\mathrm{KL}}\nolimits}
\newcommand{\FD}{\mathop{\mathrm{FD}}\nolimits}
\newcommand{\TV}{\mathop{\mathrm{TV}}\nolimits}
\DeclareMathOperator*{\argmin}{argmin}
\newcommand{\pdata}{p_{\text{data}}}
\newcommand{\pref}{p_{\text{ref}}}
\renewcommand{\top}{\intercal}
\begin{document}

\twocolumn[

\aistatstitle{Implicit Diffusion: Efficient optimization through stochastic sampling}

\aistatsauthor{Pierre Marion \addtocounter{footnote}{-1}\footnotemark \footnote{}\And Anna Korba}
\aistatsaddress{Institute of Mathematics, EPFL\\ Lausanne, Switzerland \And ENSAE CREST\\ IP Paris, France}

\aistatsauthor{Peter Bartlett, Mathieu Blondel, Valentin De Bortoli,}
\aistatsauthor{Arnaud Doucet, Felipe Llinares-Lopez , Courtney Paquette,}
\aistatsauthor{Quentin Berthet\addtocounter{footnote}{-5}\footnote{}}
\aistatsaddress{Google DeepMind}

\runningauthor{Marion, Korba, Bartlett, Blondel, De Bortoli, Doucet, Llinares-Lopez, Paquette, Berthet}]

\footnotetext{Corresponding authors. Address correspondance at \texttt{pierre.marion@epfl.ch} and  \texttt{qberthet@google.com}.}
\addtocounter{footnote}{1}
\footnotetext{Work mostly done while a student researcher at Google DeepMind.}

\begin{abstract}

Sampling and automatic differentiation are both ubiquitous in modern machine learning. At its intersection, differentiating through a sampling operation, with respect to the parameters of the sampling process, is a problem that is both challenging and broadly applicable. We introduce a general framework and a new algorithm for first-order optimization of parameterized stochastic diffusions, performing jointly, in a single loop, optimization and sampling steps. This approach is inspired by recent advances in bilevel optimization and automatic implicit differentiation, leveraging the point of view of sampling as optimization over the space of probability distributions. We provide theoretical and experimental results showcasing the performance of our method.
\end{abstract}

\section{INTRODUCTION}
\label{sec:intro}

Sampling from a target distribution is a ubiquitous task at the heart of various methods in machine learning, optimization, and statistics. Increasingly, sampling algorithms rely on iteratively applying large-scale parameterized functions (e.g. neural networks), as in  denoising diffusion models \citep{ho2020denoising}. 
\begin{figure}[h]
\begin{center}
\includegraphics[width=0.75\columnwidth]{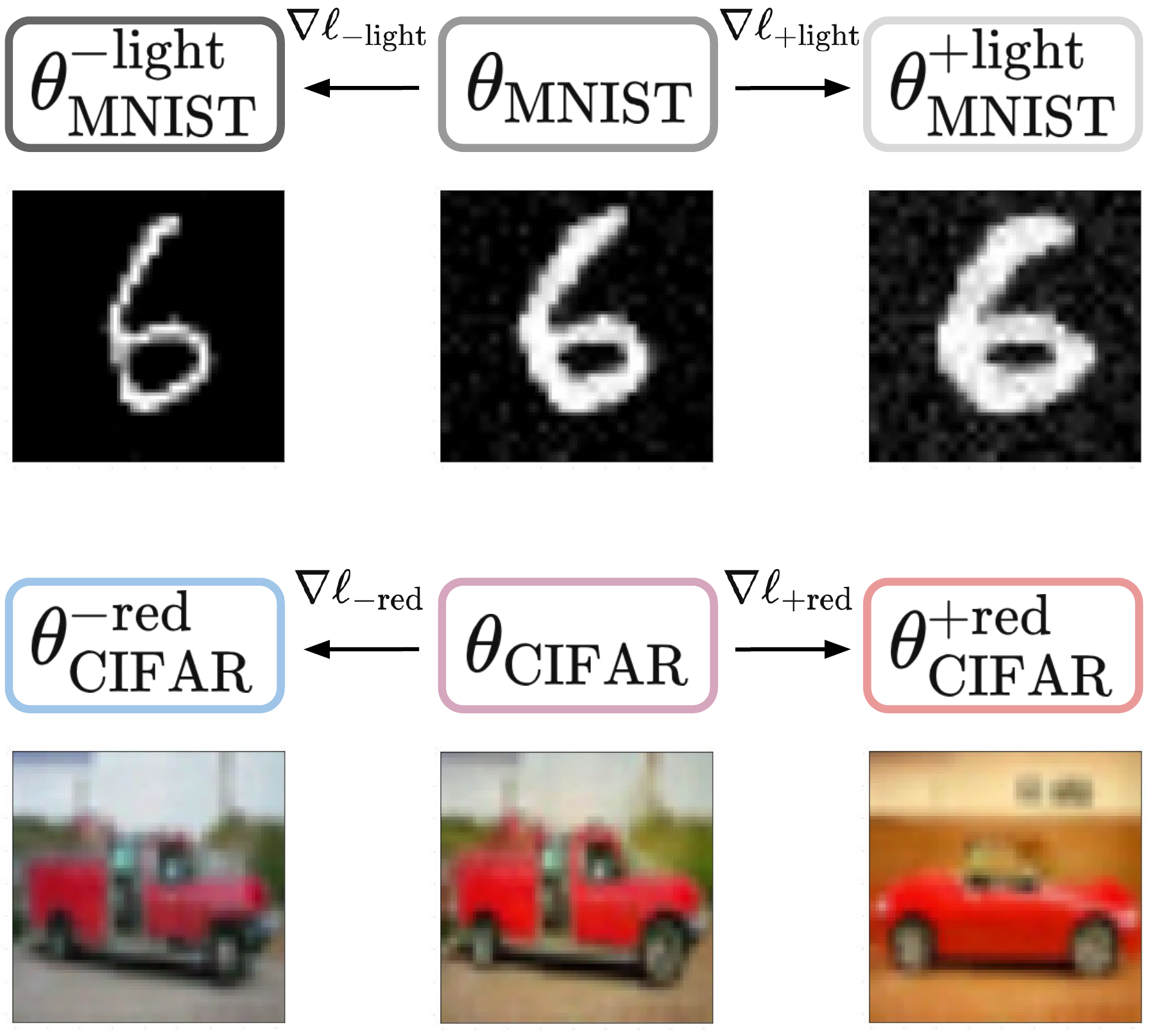}
\caption{Optimizing through sampling with \textbf{Implicit Diffusion} to finetune denoising diffusion models. Reward is brightness for MNIST and red for CIFAR-10.
\vspace{-0.8cm}
\label{fig:intro-examples}}
\end{center}
\end{figure}

This iterative {\em sampling operation} implicitly maps a parameter~$\theta \in \R^p$ to some distribution $\pi^\star(\theta)$ in $\cP$. %

In this work, we focus on optimization problems over these implicitly parameterized distributions. For a space of distributions $\cP$ (e.g. over $\R^d$), and a function $\cF: \cP \to \R$, our main problem is
\begin{equation*}
    \min_{\theta \in \R^p} \ell(\theta):= \min_{\theta \in \R^p}\cF(\pi^\star(\theta))
\end{equation*}
This setting encompasses for instance learning parameterized Langevin diffusions, contrastive learning of energy-based models \citep{gutmann2012noise} or finetuning denoising diffusion models \citep[e.g.,][]{dvijotham2023algorithms,clark2024directly}, as illustrated by Figure \ref{fig:intro-examples}.  Applying first-order optimizers to this problem raises the challenge of computing gradients of functions of the target distribution with respect to the parameter: we have to {\em differentiate through a sampling operation}, where the link between $\theta$ and $\pi^\star(\theta)$ can be {\em implicit} (see, e.g., Figure~\ref{fig:algo}). %

To this aim, we propose to exploit the perspective of {\em sampling as optimization}, where the task of sampling is seen as an optimization problem over the space of probability distributions $\cP$ \citep[see][and references therein]{korba2022sampling}. Typically, approximating a target probability distribution $\pi$ can be cast as the minimization of a dissimilarity functional between probability distributions w.r.t.~$\pi$, that only vanishes at the target. For instance, Langevin diffusion dynamics follow a gradient flow of a Kullback-Leibler (KL) objective w.r.t.~the Wasserstein-2 distance~\citep{jordan1998variational}.

This allows us to draw a link between optimization through stochastic sampling and \textit{bilevel optimization}, which often involves computing derivatives of the solution of a parameterized optimization problem obtained after iterative steps of an algorithm. 
Bilevel optimization is an active area of research with many relevant applications in machine learning, such as hyperparameter optimization \citep{franceschi2018bilevel} or meta-learning \citep{liu2018darts}. In particular, there is a significant effort in the literature for developing tractable and provably efficient algorithms in a large-scale setting \citep{pedregosa2016hyperparameter,chen2021closing,arbel2022amortized,blondel2022efficient,dagreou2022framework}--see Appendix~\ref{apx:additional-related-work} for additional related work. This literature focuses mostly on problems with finite-dimensional variables, in contrast with our work where the solution of the inner problem is a distribution in $\cP$.

These motivating similarities, while useful, are not limiting. We also consider settings where the sampling iterations are not readily interpretable as an optimization algorithm. %
Denoising diffusion cannot directly be formalized as descent dynamics of an objective functional over $\cP$, but its output is determined by a parameter $\theta$ (i.e. weights of the score matching neural networks). 

\paragraph{Main Contributions.} In this work, we introduce the algorithm of \textbf{Implicit Diffusion}, an effective and principled technique for optimizing through a sampling operation. %
More precisely,

\begin{itemize}[topsep=0pt,itemsep=2pt,parsep=2pt,leftmargin=10pt]
    \item[-] We present a general framework describing parameterized sampling algorithms, and introduce Implicit Diffusion optimization, a \textbf{single-loop} optimization algorithm to optimize through sampling.
    \item[-] %
    We provide \textbf{theoretical guarantees} in the continuous and discrete time settings in Section \ref{sec:theory}.
    \item[-] We showcase in Section \ref{sec:experiments} its performance and efficiency in \textbf{experimental settings}. Applications include finetuning denoising diffusions and training energy-based models.
\end{itemize}

To allow for reproducibility, we provide an implementation\footnote{\url{https://github.com/google-deepmind/implicit_diffusion}} of our algorithm in JAX \citep{jax2018github}.

\paragraph{Notation.} For a set $\cX$ (such as $\R^d$), we write $\cP$ for the set of probability distributions on $\cX$, omitting reference to $\cX$. For $f$ a differentiable function on~$\R^d$, we denote by $\nabla f$ its gradient function. %
If $f$ is a differentiable function of $k$ variables, we let $\nabla_i f$ denote its gradient w.r.t.~its $i$-th variable.

\section{PROBLEM PRESENTATION}
\label{sec:problem-presentation}

\subsection{Sampling and optimization perspectives} \label{subsec:sampling-optimization}
The core operation that we consider is sampling by running a stochastic diffusion process that depends on a parameter $\theta \in \R^p$. We consider {\em iterative sampling operators} that map from a {\em parameter space} to a {\em space of probabilities}. We denote by $\pi^\star(\theta) \in \cP$ the outcome distribution of this sampling operator. This parameterized distribution is defined in an {\em implicit} manner: there is not always an explicit way to write down its dependency on~$\theta$. More formally, it is defined as follows.
\begin{definition}[Iterative sampling operators]
\label{def:sampling}
    For a parameter $\theta \in \R^p$, a sequence of parameterized functions $\Sigma_s(\cdot, \theta)$ from $\cP$ to $\cP$ defines a diffusion {\em sampling process}, from $p_0 \in \cP$ iterating
    \begin{equation}
\label{eq:iter-sampling}    
p_{s+1} = \Sigma_s(p_s, \theta)\, .
\end{equation}
The outcome $\pi^\star(\theta) \in \cP$, when $s\to \infty$, or for some $s=T$ defines a {\em sampling operator}
$
\pi^\star: \R^p \to \cP \, .
$ \qed
\end{definition}
We embrace the formalism of stochastic processes as acting on probability distributions. This perspective focuses on the {\em dynamics} of the distribution $(p_s)_{s \ge 0}$, and allows us to more clearly present our optimization problem and algorithms. In practice, however, in all the examples that we consider, this is realized by an iterative process on some random variable~$X_s$ such that $X_s \sim p_s$. 

\begin{example}
Consider the process defined by $X_{s+1} = X_s - 2\delta (X_s-\theta) + \sqrt{2 \delta} B_s$,  where the $B_s$ are i.i.d. standard Gaussian, $X_0 \sim p_0 := \cN(\mu_0, \sigma_0^2)$, and $\delta \in (0, 1)$. This is the discrete-time version of Langevin dynamics for  $V(x, \theta) = 0.5(x - \theta)^2$ (see Section~\ref{subsec:examples}). The dynamics induced on probabilities $p_s = \cN(\mu_s, \sigma_s^2)$ are $\mu_s = \theta + (1 -2\delta)^s(\mu_0 -\theta)$ and $\sigma_s^2 = 1 + (1 -2\delta)^{2s}(\sigma^2_0 -1)$.
The sampling operator for $s \to \infty$ is therefore defined by $\pi^\star:\theta \to \cN(\theta, 1)$.\qed
\end{example}
More generally, we may consider the iterates $X_s$ of the  process defined for noise variables $(B_s)_{s \ge 0}$
\begin{equation}
\label{eq:variable-dym}
X_{s+1} = f_s(X_s, \theta) + B_s\, .
\end{equation}
Applying $f_s(\cdot, \theta)$ to $X_s \sim p_s$ implicitly defines a dynamic $\Sigma_s(\cdot, \theta)$ on the distribution. The dynamics on the variables in~(\ref{eq:variable-dym})
induce dynamics on the distributions described in~(\ref{eq:iter-sampling}).
Note that, in the special case of normalizing flows \citep{kobyzev2019normalizing,papamakarios_2021},
explicit formulas for~$p_s$ can be derived and evaluated.

\begin{remark}
{\em i)} We consider settings with discrete time steps, fitting our focus on algorithms to sample and optimize through sampling. This encompasses in particular the discretization of many continuous-time stochastic processes of interest. Most of our motivations are of this latter type, and we describe these distinctions in our examples (see Section~\ref{subsec:examples}).

{\em ii)} As noted above, these dynamics are often realized by an iterative process on variables $X_s$, or even on an i.i.d. batch of samples $(X^{1}_s, \ldots, X^{n}_s)$. 
When the iterates $\Sigma_s(p_s, \theta)$ are written in our presentation (e.g.~in optimization algorithms in Section \ref{sec:methods}), it is often a shorthand to mean that we have access to samples from $p_s$, or equivalently to an empirical version $\hat p_{s, (n)}$ of the population distribution $p_s$.
Sample versions of our algorithms are described in Appendix~\ref{apx:algos}.

{\em iii)} One of the \textbf{special cases} considered in our analysis are stationary processes with infinite time horizon, where the sampling operation can be interpreted as optimizing over the set of %
distributions 
\begin{equation} \label{eq:inner-sampling}
    \pi^\star(\theta) = \argmin_{p \in \cP} \cG(p, \theta)\, , \quad \text{for some $\cG: \cP \times \R^p \to \R$}\, .
\end{equation}
In this case, the iterative operations in~(\ref{eq:iter-sampling}) can often be directly interpreted as descent steps for the objective $\cG(\cdot, \theta)$. However, our methodology is \textbf{not limited} to this setting: we also consider general sampling schemes with no stationarity and no inner $\cG$, but only a sampling process defined by $\Sigma_s$.
\end{remark}

\paragraph{Optimization objective.} We aim to optimize with respect to $\theta$ the output of the sampling operator, for a function $\cF: \cP \to \R$. In other words, we consider the optimization problem
\begin{equation}
    \label{eq:optim-main}
    \min_{\theta \in \R^p} \ell(\theta) := \min_{\theta \in \R^p} \cF(\pi^\star(\theta))\, .
\end{equation}
This formulation transforms a problem over distributions in $\cP$ to a finite-dimensional problem over $\theta \in \R^p$.  Further, this allows for convenient post-optimization sampling: for some $\theta_{\text{opt}} \in \R^p$ obtained by solving~(\ref{eq:optim-main}), one can sample from $\pi^\star(\theta_{\text{opt}})$. This is the common paradigm in model finetuning. 

\begin{figure*}[!ht]
    \centering
  \includegraphics[width=0.98\textwidth]{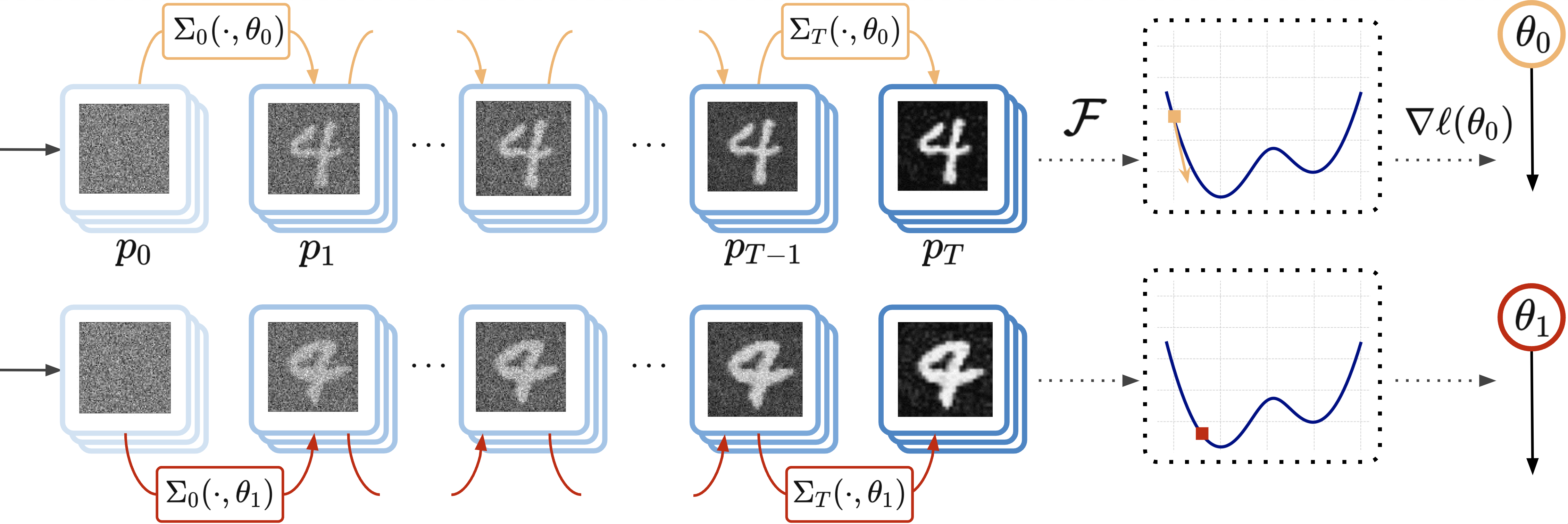}
  \caption{A step of optimization through sampling. For a given parameter $\theta_0$, the sampling process is defined by applying~$\Sigma_s$ for $s \in [T]$, producing $\pi^\star(\theta_0)$. The goal of optimization through sampling is to update $\theta$ to minimize $\ell = \cF \circ \pi^\star$. Here the objective $\cF$ corresponds to having lighter images (on average), which produces thicker digits.}
  \label{fig:algo}
\end{figure*}
\subsection{Examples}   \label{subsec:examples}

\paragraph{Langevin dynamics.}
They are defined by the stochastic differential equation (SDE) \citep{roberts1996exponential}
\begin{equation}
\label{eq:langevin}
\ud X_t = -\nabla_1 V(X_t, \theta) \ud t + \sqrt{2} \ud B_t\, , 
\end{equation}
where $V$ and $\theta \in \R^p$ are such that this SDE has a solution for $t > 0$ that converges in distribution. Here $\pi^\star(\theta)$ is the limiting distribution of $X_t$ when $t \to \infty$, which is the Gibbs distribution
\begin{equation}\label{eq:gibbs}
\pi^\star(\theta)[x] = \exp(-V(x, \theta)) / Z_\theta\, ,
\end{equation}
where $Z_\theta = \int \exp(-V(x, \theta)) \ud x$ is the normalization factor.
To fit our setting of iterative sampling algorithms \eqref{eq:variable-dym}, one can consider the discretization for $\gamma>0$
\[
X_{k+1} = X_k - \gamma \nabla_1 V(X_k, \theta) + \sqrt{2 \gamma} \Delta B_{k+1} \, .
\]
For $\cG(p, \theta) = \KL(p || \pi^\star(\theta))$, the outcome of the sampling operator $\pi^\star(\theta)$ is a minimum of $\cG(\cdot, \theta)$, and the SDE \eqref{eq:langevin} %
implements a gradient flow for $\cG$ in the space of measures, with respect to the Wasserstein-2 distance \citep{jordan1998variational}. 
Two optimization objectives~$\cF$ are of particular interest in this case. First, we may want to maximize some reward~$R: \R^d \to \R$ over our samples, in which case the objective writes $\cF(p) := -\E_{x\sim p}[R(x)]$. Second, to approximate a reference distribution $p_{\text{ref}}$ with sample access, it is possible to take $\cF(p) := \text{KL}(\pref \,|| \, p)$. This case corresponds to training energy-based models \citep{gutmann2012noise}. It is also possible to consider a linear combination of these two objectives.
\paragraph{Denoising diffusion.}   \label{subsec:diffusion-denoising}
It consists in running the SDE for $Y_0 \sim \cN(0, I)$,
\begin{equation}
\label{eq:diffusionback}
\ud Y_t = \{Y_t + 2 s_\theta(Y_t, T-t)\} \ud t + \sqrt{2} \ud B_t \, ,
\end{equation}
where $s_\theta: \R^d \times [0,T] \to \R^d$ is a parameterized %
{\em score function} \citep{hyvarinen2005estimation,vincent2011connection,ho2020denoising}. %
Its aim is to reverse a forward process
$\ud X_t = -X_t \ud t + \sqrt{2} \ud B_t$,  
where we have sample access to $X_0 \sim \pdata \in \cP$. More precisely, denoting by $p_t$ the distribution of $X_t$, if $s_\theta \approx \nabla \log p_t$, then the distribution of $Y_T$ is close to $\pdata$ for large $T$ \citep{anderson1982reverse}, which allows approximate sampling from $\pdata$.
Implementations of $s_\theta$ include U-Nets \citep{ronneberger2015u} or Vision Transformers that split the image into patches \citep{peebles2023scalable}. We present for simplicity an unconditioned model, but conditioning (on class, prompt, etc.) also falls in our framework.

We are interested in optimizing through diffusion sampling and consider $\pi^\star(\theta)$ as the distribution of $Y_T$. A key example is when $\theta_0$ represents the weights of a model $s_{\theta_0}$ that has been pretrained by score matching %
and one wants to finetune the target distribution $\pi^\star(\theta)$, e.g.~in order to increase a reward $R: \R^d \to \R$. 
Figure~\ref{fig:flowchart} situates our contribution within the broader literature on this problem (see details in Appendix \ref{apx:additional-related-work}).
Note that this finetuning step does not require access to $\pdata$.
As for Langevin dynamics, we consider in our algorithms discrete approximations of the process \eqref{eq:diffusionback}.
However in this case, there exists no natural functional $\cG$ minimized by the sampling process.
An alternative to \eqref{eq:diffusionback} producing the same marginal distributions is the ordinary differential equation (ODE)
\begin{equation}
\label{eq:diffusion-ode-back}
Y_0 \sim \cN(0, I) \, , \quad \ud Y_t = \{Y_t + s_\theta(Y_t, T-t)\} \ud t\, .
\end{equation}

\section{METHODS}
\label{sec:methods}

The objective \eqref{eq:optim-main} %
presents several challenges, that we review here. We then introduce an overview of our approach, before detailing our algorithms.

\subsection{Overview}   \label{subsec:overview}

\paragraph{Estimation of gradients through sampling.} Even with samples from $\pi^\star(\theta)$, applying a first-order method to \eqref{eq:optim-main} requires evaluating gradients of $\ell = \cF \circ \pi^\star$. Since there is no closed form for $\ell$ and no explicit computational graph, we consider the following alternative setting to evaluate gradients. %
\begin{definition}[Implicit gradient estimation]    \label{def:implicit-gradient-estimation}
The gradient of $\ell$ can be \textit{implicitly estimated} if $\Sigma_s, \cF$ are such that there exists $\Gamma: \cP \times \R^p \to \R^p$ such that $\nabla \ell(\theta) = \Gamma(\pi^\star(\theta), \theta)$. \qed
\end{definition}

In practice we rarely reach exactly the distribution~$\pi^\star(\theta)$, e.g.~because a finite number of iterations of sampling is performed. Then, if $\hat \pi \approx \pi^\star(\theta)$, the gradient can be approximated by $\hat g = \Gamma(\hat \pi, \theta)$. 
In particular, given access to approximate samples of $\pi^\star(\theta)$, it is possible to compute an estimate of $\nabla \ell(\theta)$, and this is at the heart of our methods--see Appendix~\ref{subsec:gradient-estimation-abstraction} for more details. 
Note that when $\Gamma$ is linear in its first argument, sample access to $\pi^\star(\theta)$ yields unbiased estimates of the gradient. This case has been studied with various approaches \citep[see][and Appendix~\ref{apx:additional-related-work}]{sutton1999policy,fu2012conditional,pflug2012optimization,debortoli2021efficient}.
\begin{figure}[h!]
    \centering
    \includegraphics[width=0.4\textwidth]{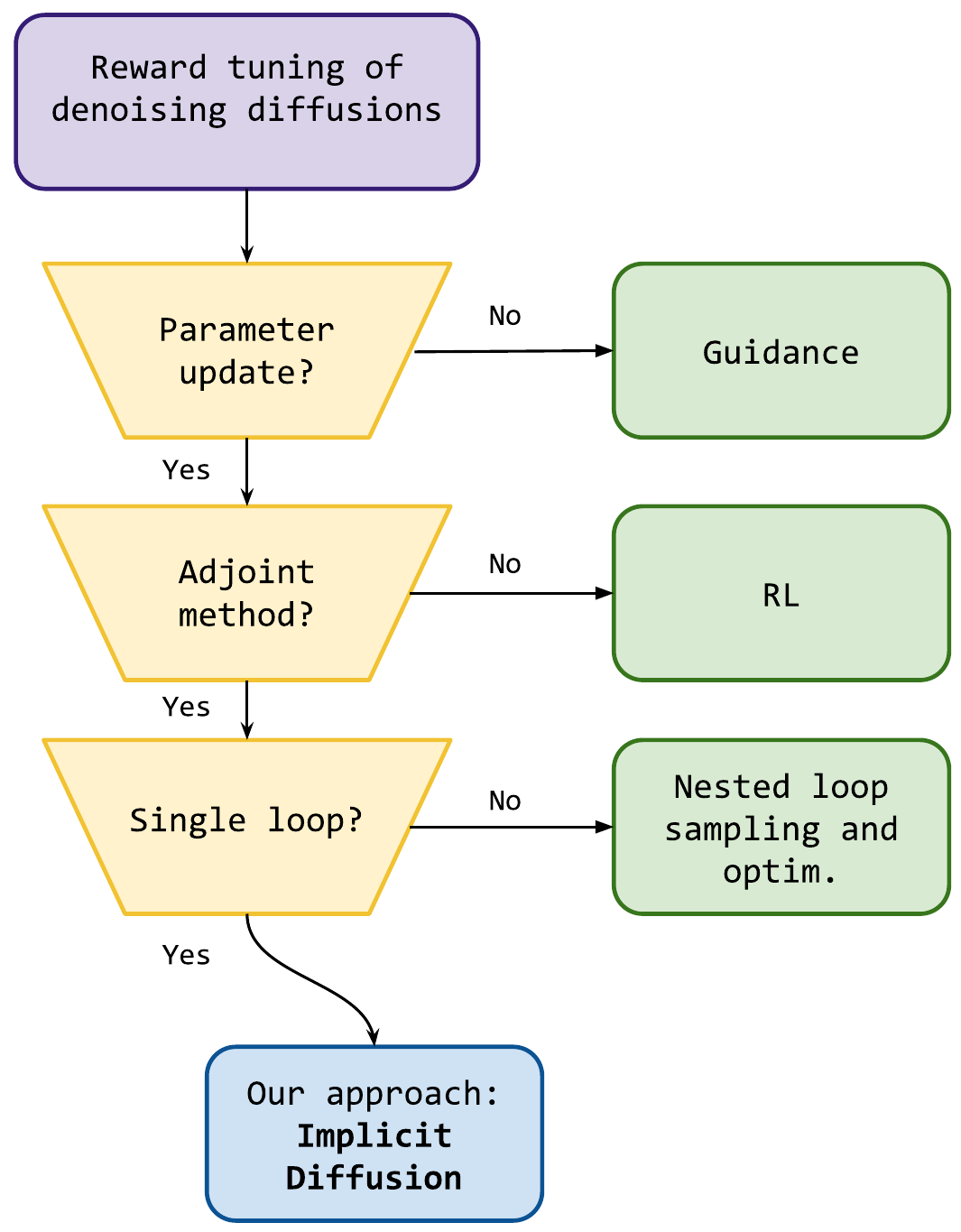}
    \caption{Main approaches for reward tuning of denoising diffusions. References are given in Appendix~\ref{apx:additional-related-work}.
    \label{fig:flowchart}}
\end{figure}
\begin{remark}
Definition \ref{def:implicit-gradient-estimation} is in fact always satisfied by the tautological definition $\Gamma(p,\theta) := \nabla \ell(\theta)$. Rather than the existence of $\Gamma$, key questions are whether $\Gamma$ is easily computable, can be stochastically approximated as explained above, or enjoys theoretical guarantees. We present several sampling problems in Section~\ref{subsubsec:implicit-diff} where this is the case. This point of view should also extend beyond sampling to any setting that involves optimizing over a parameterized distribution, as for instance in Wasserstein Distributionally Robust Optimization \citep{mohajerin2018data}, a method for robust learning. We leave these extensions to future investigations.
\end{remark}

\paragraph{Beyond nested-loop approaches.}
Sampling from $\pi^\star(\theta)$ is usually only feasible via iterations of the sampling process~$\Sigma_s$. The most straightforward method is then a nested loop: at each optimization step $k$, run an inner loop for a large number $T$ of steps of $\Sigma_s$ to produce $\hat \pi_{k} \approx \pi^\star(\theta_k)$, and use it to evaluate a gradient. Algorithm~\ref{alg:naive-nested} formalizes this baseline. 
This approach can be inefficient for two reasons: first, it requires solving the inner sampling problem {\em at each optimization step}. 
Further, nested loops are typically impractical with accelerator-oriented hardware. 
These issues can be partially alleviated by techniques like gradient checkpointing (see references in Appendix~\ref{apx:additional-related-work}).

\begin{algorithm}[H]
\caption{Vanilla nested-loop approach (Baseline)} \label{alg:naive-nested}
\begin{algorithmic}
\INPUT $\theta_0 \in \R^p$, $p_0 \in \cP$
\FOR{$k \in \{0, \dots, K-1\}$ (outer optimization loop)} 
    \STATE $p_k^{(0)} \gets p_0$
    \FOR{$s \in \{0, \dots, T-1\}$ (inner sampling loop)}
        \STATE $p_{k}^{(s+1)} \gets \Sigma_{s}(p_{k}^{(s)}, \theta_k)$
    \ENDFOR
    \STATE $\hat \pi_k \gets p_k^{(T)}$
    \STATE $\theta_{k+1} \gets \theta_k - \eta \Gamma(\hat \pi_k, \theta_k)$ (or another optimizer)
\ENDFOR
\OUTPUT $\theta_K$
\end{algorithmic}
\end{algorithm}

We rather %
follow an approach inspired by methods in bilevel optimization, aiming to {\em jointly} iterate on both the sampling problem (evaluation of $\pi^\star$--the inner problem), and the optimization problem over $\theta \in \R^p$ (the outer objective~$\cF$). We describe these methods in Section~\ref{subsec:implicit-diff} and Algorithms~\ref{alg:implicit-diff} and~\ref{alg:implicit-diff-maxi}. The connection with bilevel optimization is especially seamless when sampling can indeed be cast as an optimization problem over distributions in $\cP$, as in~(\ref{eq:inner-sampling}). However, as noted above, our approach generalizes beyond.

\subsection{Gradient estimation through sampling}
\label{sec:grad-sampling}

We explain how to perform implicit gradient estimation as in Definition \ref{def:implicit-gradient-estimation}, that is, how to derive expressions for the function $\Gamma$, in several cases of interest.

\paragraph{Direct analytical derivation.}
For Langevin dynamics, it is possible to derive analytical expressions for~$\Gamma$ depending on the outer objective $\cF$. 
We illustrate this idea for the two objectives introduced in Section~\ref{subsec:examples}. First, in the case where $\cF_{\text{rew}}(p) = -\E_{x\sim p}[R(x)]$, a computation detailed in Appendix \ref{subsec:apx:joint-optimization-algos} and the use of Definition \ref{def:implicit-gradient-estimation} show that, for $\ell_{\text{rew}}(\theta) = \cF_{\text{rew}}(\pi^\star(\theta))$,
\begin{align}
\begin{split}
\label{eq:gamma-1}
\nabla \ell_{\text{rew}}(\theta) &= \cov\nolimits_{X \sim \pi^\star(\theta)}[R(X), \nabla_2 V(X, \theta)] \, , \\
    \Gamma_{\text{rew}}(p, \theta) &= \cov\nolimits_{X \sim p}[R(X), \nabla_2 V(X, \theta)] \, .
\end{split}
\end{align}
Note that this formula does not involve gradients of~$R$, hence our approach handles any non-differentiable reward.
Second, in the case where $\cF_{\text{ref}}(p) = \text{KL}(\pref \,|| \, p)$, we then have \citep{gutmann2012noise}, for $\ell_{\text{ref}}(\theta) = \cF_{\text{ref}}(\pi^\star(\theta))$,
\begin{align*}
\nabla \ell_{\text{ref}}(\theta) 
&= \E_{X \sim p_{\text{ref}}}[\nabla_2 V(X, \theta)] - \E_{X \sim \pi^\star(\theta)}[\nabla_2 V(X, \theta)].
\end{align*}
This is known as contrastive learning when $p_{\text{ref}}$ is given by data, and suggests taking $\Gamma$ as
\begin{equation}    \label{eq:gamma-2}
    \Gamma_{\text{ref}}(p, \theta)  :=  \E_{X \sim p_{\text{ref}}}[\nabla_2 V(X, \theta)] - \E_{X \sim p}[\nabla_2 V(X, \theta)] \, .
\end{equation}
This extends to linear combinations of $\Gamma_{\text{rew}}$ and~$\Gamma_{\text{ref}}$.

\paragraph{Implicit differentiation.}    
\label{subsubsec:implicit-diff}
When, as in \eqref{eq:inner-sampling},  $\pi^\star(\theta) = \argmin \cG(\cdot, \theta)$, under generic assumptions on $\cG$ the implicit function theorem (see \citealp{krantz2002implicit,blondel2022efficient} and Appendix~\ref{apx:implicit-differentiation}) shows that
$
\nabla \ell(\theta)
= \Gamma(\pi^\star(\theta), \theta)$ with %
\[
\Gamma(p, \theta) = \int \cF'(p)[x] \gamma(p, \theta)[x] \ud x \, .
\]
Here $\cF'(p):\cX \to  \R$ denotes the first variation of $\cF$ at $p\in \cP$ (see \Cref{def:first_var}) and $\gamma(p, \theta)$ is the solution of the linear system
$
\int \nabla_{1, 1} \cG(p, \theta)[x, x'] \gamma(p, \theta)[x'] \ud x' = - \nabla_{1, 2} \cG(p, \theta) [x]
$.
Although this gives us a general way to define gradients of $\pi^\star(\theta)$ with respect to $\theta$, solving this linear system is generally not feasible. One exception is when sampling over a finite state space $\cX$, in which case~$\cP$ is finite-dimensional, and the integrals boil down to matrix-vector products. 

\paragraph{Differential adjoint method.} The adjoint method allows computing gradients through differential equation solvers \citep{pontryagin2018mathematical,li2020scalable}, applying in particular for denoising diffusion. It can be connected to implicit differentiation, by defining~$\cG$ over a measure path instead of a single measure $p$ (see, e.g.,~\citealp{kidger2022neural}). 
To introduce this method, consider the ODE $\ud Y_t = \mu(t, Y_t, \theta) \ud t$ integrated between~$0$ and some $T>0$. This setting encompasses the denoising diffusion ODE~\eqref{eq:diffusion-ode-back}. 
Assume that the outer objective $\cF$ writes as the expectation of some differentiable reward $R$, namely $\cF(p) = \E_{x\sim p}[R(x)]$. Let $ Z_0 \sim p, A_0= \nabla R(Z_0), G_0= 0$, and consider the ODE system
\begin{align*}
   \ud Z_t = - \mu(t, Z_t, \theta) &\ud t \, , \quad 
   \ud A_t = A_t^\top \nabla_2 \mu(T-t, Z_t, \theta) \ud t \, , \\
   \ud G_t &= A_t^\top \nabla_3 \mu(T-t, Z_t,\theta) \ud t \,  .    
\end{align*}
Defining $\Gamma(p, \theta) := G_T$, the adjoint method shows that $\Gamma(\pi^\star(\theta), \theta)$ is an unbiased estimate of $\nabla \ell(\theta)$. We refer to Appendix~\ref{apx:adjoint-method} for details and explanations on how to differentiate through the SDE sampler~\eqref{eq:diffusionback} and to incorporate a KL term in the reward by using Girsanov's theorem.

\subsection{Implicit Diffusion optimization algorithm} \label{subsec:implicit-diff}
To circumvent solving the inner sampling problem in Algorithm \ref{alg:naive-nested}, we propose in Algorithm~\ref{alg:implicit-diff} a \textbf{joint single-loop approach} that keeps track of a single dynamic of probabilities $(p_k)_{k \ge 0}$. At each optimization step, the probability $p_k$ is updated with one sampling step that depends on the current parameter $\theta_k$. As noted in Section \ref{subsec:overview}, there are parallels with approaches in the literature when $\Gamma$ is linear, but we go beyond in making no such assumption.

\begin{figure*}[!ht]
    \centering
  \includegraphics[width=\textwidth]{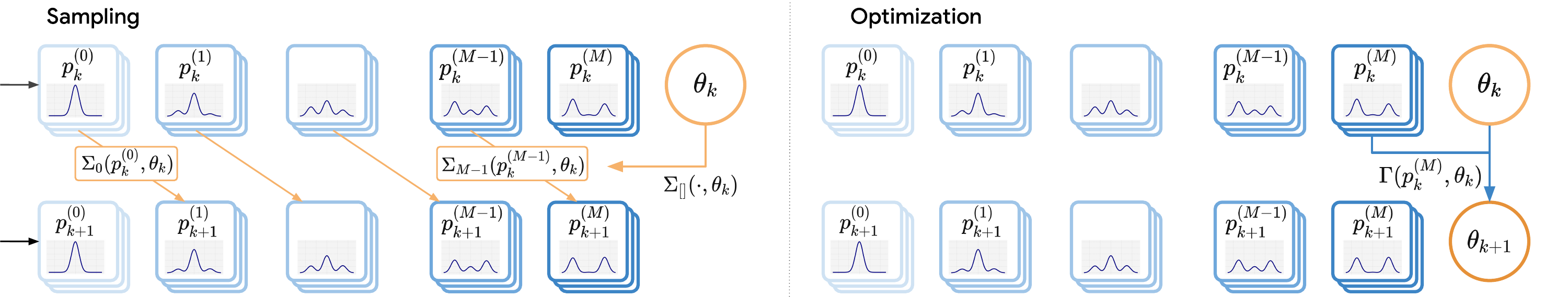}
  \caption{Illustration of the Implicit Diffusion algorithm, in the finite time setting. \underline{Left:} Sampling - one step of the parameterized sampling scheme is applied in parallel to all distributions in the queue. \underline{Right:} Optimization - the last element of the queue is used to compute a gradient for the parameter.}
  \label{fig:maxi-batch}
\end{figure*}

\begin{algorithm}[H]
\caption{Implicit Diff. optimization, infinite time} \label{alg:implicit-diff}
\begin{algorithmic}
\INPUT $\theta_0 \in \R^p$, $p_0 \in \cP$
\FOR{$k \in \{0, \dots, K-1\}$ (joint single loop)} 
    \STATE $p_{k+1} \gets \Sigma_{k}(p_k, \theta_k)$
    \STATE $\theta_{k+1} \gets \theta_k - \eta \Gamma(p_k, \theta_k)$ (or another optimizer)
\ENDFOR
\OUTPUT $\theta_K$
\end{algorithmic}
\end{algorithm}

This point of view is well-suited for stationary processes with infinite-time horizon, but does not directly adapt to differentiation through diffusions with a finite-time horizon (and no stationary property). Indeed, it does not make sense in this case to run sampling for an arbitrary number of steps. Our approach can then be adapted as follows.

\begin{algorithm}[H]
\caption{Implicit Diff. optimization, finite time} \label{alg:implicit-diff-maxi}
\begin{algorithmic}
\INPUT $\theta_0 \in \R^p$, $p_0 \in \cP$
\INPUT $P_M = [p_0^{(0)},\ldots, p_0^{(M)}]$
\FOR{$k \in \{0, \dots, K-1\}$ (joint single loop)}
    \STATE $p_{k+1}^{(0)} \gets p_0$
    \STATE \textbf{parallel} $p_{k+1}^{(m+1)} \gets \Sigma_{m}(p_k^{(m)}, \theta_k)$ for $m \in [M-1]$
    \STATE $\theta_{k+1} \gets \theta_k - \eta \Gamma(p_k^{(M)}, \theta_k)$ (or another optimizer)
\ENDFOR
\OUTPUT $\theta_K$
\end{algorithmic}
\end{algorithm}

\paragraph{Finite time-horizon: queuing trick.}
When $\pi^\star(\theta)$ is obtained by a large, but {\em finite} number $T$ of iterations of the operator $\Sigma_s$, we leverage hardware parallelism to evaluate in parallel several, say $M$, steps of the dynamics of the distribution $p_k$, through a queue of length $M$. We present for simplicity in Figure~\ref{fig:maxi-batch} and in Algorithm \ref{alg:implicit-diff-maxi} the case where $M=T$ and discuss extensions in Appendix~\ref{apx:adjoint-method}.
At each step, the last element of the queue $p_k^{(M)}$ provides a distribution to update $\theta$ through evaluation of $\Gamma$. Note that its dynamics in the previous $M$ steps, from $p_{k-M}^{(0)}$ to $p_k^{(M)}$, used the $M$ previous values of the parameter $\theta_{k-M}, \ldots ,\theta_{k-1}$. In particular, it provides a biased estimate of $\nabla \ell(\theta_k)$.
Importantly, leveraging parallelism, the runtime of our algorithm is $\mathcal{O}(K)$, gaining a factor of $T$ compared to the nested-loop approach, but at a higher memory cost. %

\section{THEORETICAL ANALYSIS}  \label{sec:theory}

Our theoretical guarantees cover Langevin diffusions in the continuous and discrete settings, and a simple case of denoising diffusion. Proofs are given in Appendix \ref{apx:proofs}.

\subsection{Langevin with continuous flow}  \label{subsec:theory-langevin-continuous}

The continuous-time equivalent of Algorithm \ref{alg:implicit-diff} in the case of Langevin dynamics is
\begin{align}   \label{eq:langevin-theory}
\begin{split}
    \ud X_t &= -\nabla_1 V(X_t, \theta_t) \ud t + \sqrt{2} \ud B_t \, , \\
    \ud\theta_t &= - \varepsilon_t \Gamma (p_t, \theta_t) \ud t \, ,
\end{split}
\end{align}
where $p_t$ denotes the distribution of $X_t$ and $\varepsilon_t > 0$ corresponds to the ratio of learning rates between the inner and the outer problems. 
In practice $\Gamma(p_t, \theta_t)$ is approximated on a finite sample, making the dynamics in $\theta_t$ stochastic. We leave the analysis of these stochastic dynamics for future work. 
A possible tool to do so is the \textit{propagation of chaos} \citep{chaintron2022propagation,suzuki2023uniform}, a theory which aims at quantifying the deviation between the dynamics of a system of finitely many interacting particles and the limiting behavior described by a mean-field density.

Recalling the definition \eqref{eq:gibbs} of $\pi^\star(\theta)$, we require the following assumptions.
\begin{assumption}\label{ass:potential_log_sobolev}
    $\pi^\star(\theta_t)$ verifies the Log-Sobolev inequality with constant $\mu > 0$ for all $t\ge 0$, i.e., for all $p\in \cP$,
    $
        \KL(p \, || \, \pi^\star(\theta_t))\le \frac{1}{2\mu} \Vert \nabla \log ( \frac{p}{\pi^\star(\theta_t)})  \Vert^2_{L^2(p)} \, .
    $
\end{assumption}
\begin{assumption}\label{ass:gradient_bounded}
$V$ is continuously differentiable and for $\theta \in \R^p, x \in \R^d$, $\|\nabla_2 V(x, \theta)\| \leq C$, for some $C>0$.
\end{assumption}
Assumption \ref{ass:potential_log_sobolev} generalizes $\mu$-strong convexity of the potentials $(V(\cdot,\theta_t))_{t\ge 0}$, including for instance distributions $\pi$ whose potentials are bounded perturbations of a strongly convex potential \citep{bakry2014analysis,vempala2019rapid}.
Assumptions \ref{ass:potential_log_sobolev} and \ref{ass:gradient_bounded} hold for example when the potential defines a mixture of Gaussians and the parameters~$\theta$ determine the weights of the mixture (see Appendix~\ref{apx:gaussian-mixture} for details).
We also assume that the outer updates are bounded and Lipschitz continuous for the KL divergence.
\begin{assumption}\label{ass:Gamma_Lipschitz}
For $p, q \in \cP$, $\theta \in \R^p$, 
$\|\Gamma(p,\theta)\| \leq C$ and $\|\Gamma(p, \theta) - \Gamma(q, \theta)\| \leq K_\Gamma \sqrt{\KL(p || q)}$, for some $C, K_\Gamma >0$.
\end{assumption}
The next proposition shows that this assumption holds for the examples of interest given in Section \ref{sec:problem-presentation}.
\begin{proposition} \label{prop:verif-ass-gamma-lip}
Consider a bounded function $R: \R^d \to \R$. Then, under Assumption \ref{ass:gradient_bounded}, functions $\Gamma_{\text{rew}}$ and $\Gamma_{\text{ref}}$ defined by \eqref{eq:gamma-1}--\eqref{eq:gamma-2} satisfy Assumption~\ref{ass:Gamma_Lipschitz}.
\end{proposition}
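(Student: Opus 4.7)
The plan is to verify the two requirements of Assumption \ref{ass:Gamma_Lipschitz} separately (boundedness and KL-Lipschitz continuity), for each of the two candidate functions $\Gamma_{\text{rew}}$ and $\Gamma_{\text{ref}}$. The key analytic tool throughout will be Pinsker's inequality: for a bounded vector-valued function $f:\R^d\to\R^p$ with $\sup_x \|f(x)\|\le M$, one has
\begin{equation*}
\bigl\|\E_p[f] - \E_q[f]\bigr\| \;\le\; M\!\int |p(x)-q(x)|\,\ud x \;=\; 2M\,\TV(p,q) \;\le\; M\sqrt{2\,\KL(p\,\|\,q)}\, .
\end{equation*}
Given Assumption \ref{ass:gradient_bounded} ($\|\nabla_2 V(x,\theta)\|\le C$) and the hypothesis that $R$ is bounded (say by $\|R\|_\infty$), every integrand that appears below is uniformly bounded, and this displayed inequality is the only nontrivial estimate we will need.

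For $\Gamma_{\text{ref}}$, boundedness is immediate since $\|\Gamma_{\text{ref}}(p,\theta)\|\le 2C$ by the triangle inequality. For the Lipschitz property, the reference term cancels and we are left with $\Gamma_{\text{ref}}(p,\theta)-\Gamma_{\text{ref}}(q,\theta) = \E_q[\nabla_2 V(\cdot,\theta)] - \E_p[\nabla_2 V(\cdot,\theta)]$, which is directly bounded by $C\sqrt{2\,\KL(p\,\|\,q)}$ via the display above.

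For $\Gamma_{\text{rew}}$, we write the covariance as $\cov_{X\sim p}[R,\nabla_2 V] = \E_p[R\,\nabla_2 V] - \E_p[R]\,\E_p[\nabla_2 V]$. Boundedness then follows from the triangle inequality: $\|\Gamma_{\text{rew}}(p,\theta)\|\le 2\|R\|_\infty C$. For the Lipschitz estimate, the main (but routine) step is a telescoping decomposition of the product term:
\begin{equation*}
\E_p[R]\E_p[\nabla_2 V] - \E_q[R]\E_q[\nabla_2 V] \;=\; \E_p[R]\bigl(\E_p[\nabla_2 V] - \E_q[\nabla_2 V]\bigr) + \bigl(\E_p[R]-\E_q[R]\bigr)\E_q[\nabla_2 V]\, .
\end{equation*}
Applying the Pinsker-based bound to each of the three differences $\E_p[R\,\nabla_2 V]-\E_q[R\,\nabla_2 V]$, $\E_p[\nabla_2 V]-\E_q[\nabla_2 V]$, and $\E_p[R]-\E_q[R]$, with the integrands bounded by $\|R\|_\infty C$, $C$, and $\|R\|_\infty$ respectively, yields the desired bound with $K_\Gamma = 3\sqrt{2}\,\|R\|_\infty C$ (up to absorbing constants).

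I do not foresee a real obstacle: the essential content is Pinsker's inequality together with the uniform bounds provided by the assumptions and the boundedness of $R$. The only mildly delicate step is the telescoping for the product term in $\Gamma_{\text{rew}}$, which must be arranged so that each factor multiplying a $(p-q)$-difference is itself uniformly bounded.
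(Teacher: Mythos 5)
Your proof is correct and follows essentially the same route as the paper's: both rest on the observation that for a uniformly bounded integrand $f$, $\|\E_p[f]-\E_q[f]\|$ is controlled by the total variation distance (an integral probability metric for bounded functions), which Pinsker's inequality then bounds by $\sqrt{\KL(p\,\|\,q)}$. The only thing you make explicit that the paper leaves as ``unfolds similarly'' is the telescoping decomposition of the covariance in the $\Gamma_{\text{rew}}$ case, which is indeed the right bookkeeping; your constant $K_\Gamma = 3\sqrt{2}\,\|R\|_\infty C$ is correct under the normalized TV convention.
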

Since we make no strong convexity assumption, we cannot hope to prove convergence to a global minimizer, rather convergence of the average objective gradients. Note that, with strong convexity, it is a rather standard extension to prove convergence to the global minimizer \citep[see, e.g., in a similar context,][]{dagreou2022framework}.
\begin{theorem} \label{thm:langevin-continuous}
Take $\varepsilon_t = \min(1, 1/\sqrt{t})$in \eqref{eq:langevin-theory}. Then, under Assumptions \ref{ass:potential_log_sobolev}, \ref{ass:gradient_bounded}, and \ref{ass:Gamma_Lipschitz},
\[
\frac{1}{T} \int_0^T \|\nabla \ell(\theta_t)\|^2 \ud t \leq \frac{c(\ln T)^2}{T^{1/2}}  \, , \qquad \textnormal{for some } c > 0 \, .
\]
\end{theorem}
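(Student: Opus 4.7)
The argument has two coupled ingredients: a descent bound on the outer objective $\ell$ and a tracking bound on the inner KL gap $H(t) := \KL(p_t \,\|\, \pi^\star(\theta_t))$. I will combine them using the schedule $\varepsilon_t = \min(1, 1/\sqrt t)$.

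\textbf{Step 1: descent on $\ell$ via the Lipschitz-in-KL property of $\Gamma$.} Using the identity $\nabla \ell(\theta) = \Gamma(\pi^\star(\theta),\theta)$ from Definition~\ref{def:implicit-gradient-estimation} and the second line of \eqref{eq:langevin-theory}, the chain rule gives
\[
\tfrac{\D}{\D t}\ell(\theta_t) = -\varepsilon_t \nabla\ell(\theta_t)^\top \Gamma(p_t,\theta_t) = -\varepsilon_t\|\nabla\ell(\theta_t)\|^2 + \varepsilon_t\nabla\ell(\theta_t)^\top\bigl[\Gamma(\pi^\star(\theta_t),\theta_t)-\Gamma(p_t,\theta_t)\bigr].
\]
Cauchy–Schwarz, Assumption~\ref{ass:Gamma_Lipschitz}, and Young's inequality yield
\[
\tfrac{\D}{\D t}\ell(\theta_t) \leq -\tfrac{\varepsilon_t}{2}\|\nabla\ell(\theta_t)\|^2 + \tfrac{\varepsilon_t K_\Gamma^2}{2} H(t).
\]
Integrating between $0$ and $T$ and using that $\ell$ is bounded below (since $\Gamma$ is bounded, $\ell$ is Lipschitz, but the decisive fact is just a lower bound; this can be obtained from $\|\Gamma\|\le C$ via the identity above, or simply assumed as an additive constant) gives
\[
\int_0^T \varepsilon_t \|\nabla\ell(\theta_t)\|^2\,\D t \leq 2\bigl(\ell(\theta_0)-\inf\ell\bigr) + K_\Gamma^2\int_0^T \varepsilon_t H(t)\,\D t.
\]

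\textbf{Step 2: tracking the inner KL gap $H(t)$.} Writing $\KL(p_t\|\pi^\star(\theta_t)) = \int p_t\log p_t + \E_{p_t}[V(\cdot,\theta_t)] + \log Z_{\theta_t}$ and differentiating, the Fokker–Planck piece (from $\D X_t$ at fixed $\theta$) produces the relative Fisher information, which is controlled by the log-Sobolev inequality of Assumption~\ref{ass:potential_log_sobolev} to give a $-2\mu H(t)$ term. The $\theta$-drift piece equals $\bigl(\E_{p_t}[\nabla_2 V] - \E_{\pi^\star(\theta_t)}[\nabla_2 V]\bigr)\cdot\dot\theta_t$, which is bounded by $2C\cdot\|\dot\theta_t\| \leq 2C^2\varepsilon_t$ by Assumptions~\ref{ass:gradient_bounded} and~\ref{ass:Gamma_Lipschitz}. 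Altogether,
\[
H'(t) \leq -2\mu H(t) + 2C^2\varepsilon_t.
\]
Gronwall's inequality yields $H(t) \leq H(0) e^{-2\mu t} + 2C^2\int_0^t e^{-2\mu(t-s)}\varepsilon_s\,\D s$. Splitting the convolution integral at $t/2$ and using $\varepsilon_s \leq 1/\sqrt{s}$ gives $H(t) \lesssim \varepsilon_t$ for $t\geq 1$, with exponential decay of the initial error.

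\textbf{Step 3: combine and choose $\varepsilon_t$.} Plugging this back,
\[
\int_0^T \varepsilon_t H(t)\,\D t \lesssim \int_0^1 \D t + \int_1^T \varepsilon_t^2\,\D t \lesssim 1 + \ln T.
\]
Because $\varepsilon_t \geq 1/\sqrt{T}$ on $[0,T]$, we have $\int_0^T \|\nabla\ell(\theta_t)\|^2\,\D t \leq \sqrt{T}\int_0^T \varepsilon_t\|\nabla\ell(\theta_t)\|^2\,\D t$, so dividing by $T$ gives
\[
\tfrac{1}{T}\int_0^T \|\nabla\ell(\theta_t)\|^2\,\D t \lesssim \tfrac{1+\ln T}{\sqrt T},
\]
which implies the claimed $c(\ln T)^2/\sqrt{T}$ bound (with room to spare; the paper's statement is a loosening, and the extra $\ln T$ factor can naturally arise if one instead bounds $H(t)$ on an initial window using the bound $H'\le 2C^2$ rather than $H\lesssim\varepsilon_t$, or tracks the implicit constants more conservatively).

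\textbf{Main obstacle.} The delicate point is the cross term in Step~2: one must carefully differentiate $\KL(p_t\|\pi^\star(\theta_t))$ in \emph{both} arguments simultaneously, noticing that the inner argument's motion creates a source term proportional to $\|\dot\theta_t\|$ that is cleanly controlled by the boundedness of $\nabla_2 V$ and of $\Gamma$. Without both of those bounds one cannot close the Gronwall argument, and without the LSI the $-2\mu H$ dissipation disappears. Balancing the sizes of the two source terms (one in $\ell$, one in $H$) is what forces the square-root schedule $\varepsilon_t = 1/\sqrt{t}$.
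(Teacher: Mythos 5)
Your proof is correct and follows essentially the same strategy as the paper: a descent inequality on $\ell$, a tracking bound on the KL gap obtained from the Fokker--Planck equation and the log-Sobolev inequality, Gr\"onwall, and a two-timescale balancing of the error via the choice of $\varepsilon_t$. All the key identities in Step~2 (the $-2\mu H$ dissipation from Assumption~\ref{ass:potential_log_sobolev}, the source term $(\E_{p_t}[\nabla_2 V]-\E_{\pi^\star(\theta_t)}[\nabla_2 V])\cdot\dot\theta_t$ from the motion of the target, and the resulting differential inequality $H'\le -2\mu H + 2C^2\varepsilon_t$) match the paper's computations exactly.

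The one genuine difference is in the final estimate of the convolution integral. The paper splits $\int_0^t \varepsilon_s e^{2\mu(s-t)}\,\ud s$ at $\alpha(t)=t-\tfrac{\ln t}{2\mu}$ so as to make the exponential tail polynomially small, but this incurs an extra $\ln t$ on the dominant part, giving $\int_0^T\int_0^t\varepsilon_t\varepsilon_s e^{2\mu(s-t)}\,\ud s\,\ud t = \mathcal O\bigl((\ln T)^2\bigr)$ and the stated $(\ln T)^2/\sqrt T$. You split at $t/2$, which makes the tail super-exponentially small and avoids the spurious $\ln t$, so your $H(t)\lesssim \varepsilon_t$ yields $\int_0^T\varepsilon_t H(t)\,\ud t=\mathcal O(\ln T)$ and the sharper bound $\mathcal O\bigl((1+\ln T)/\sqrt T\bigr)$, which implies the theorem. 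In short, your version is a modest tightening of the same argument.

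Two small cautions. First, your parenthetical justification that $\ell$ is bounded below because $\|\Gamma\|\le C$ makes it Lipschitz is not by itself sufficient: a Lipschitz function on $\R^p$ need not be bounded below. You should instead note, as the paper implicitly does, that $\inf\ell>-\infty$ holds because the objectives $\cF$ considered (e.g., $\cF(p)=-\E_p[R]$ with $R$ bounded, or $\cF(p)=\KL(\pref\|p)\ge 0$) are bounded below. Second, your split-at-$t/2$ bound on $\int_0^{t/2}e^{-2\mu(t-s)}\varepsilon_s\,\ud s$ should be stated as $e^{-\mu t}\int_0^{t/2}\varepsilon_s\,\ud s\lesssim e^{-\mu t}\sqrt t$, which is indeed $o(\varepsilon_t)$; this is what you implicitly use when asserting $H(t)\lesssim\varepsilon_t$ for $t\ge 1$.
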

The proof starts by noting that updates in $\theta$ would follow the gradient flow for $\ell$ if $p_t = \pi^\star(\theta_t)$. 
The deviation to these ideal dynamics can be controlled by the KL divergence of $p_t$ from $\pi^\star(\theta_t)$, which can itself be bounded since updates in~$X_t$ are gradient steps for the KL (see Section~\ref{subsec:examples}). 
Finally, the decay of the ratio of learning rates $\varepsilon_t$ ensures that $\pi^\star(\theta_t)$ is not moving away from $p_t$ too fast due to updates in~$\theta_t$.
Taking $\varepsilon_t$ small amounts to a \textit{two-timescale approach}, a tool commonly used to tackle non-convex optimization problems in machine learning \citep{heusel2017gans,arbel2022amortized,hong2023two,marion2023}.

\subsection{Langevin with discrete flow}

We now consider the discrete version of \eqref{eq:langevin-theory}, namely
\begin{align}\label{eq:langevin-discrete}
\begin{split}
    X_{k+1} &= X_k - \gamma_k \nabla_1 V(X_k, \theta_k) + \sqrt{2 \gamma_k} \Delta B_{k+1} \, , \\
    \theta_{k+1} &= \theta_k - \gamma_k \varepsilon_k \Gamma (p_k, \theta_k) \, , 
\end{split}
\end{align}
where $p_k$ denotes the distribution of $X_k$.
This setting is more challenging due to the discretization bias \citep[][]{d-tgasslcd-17}. 
We make classical smoothness assumptions to analyze discrete gradient descent \citep[e.g.,][]{cb-clmcmckld-18}:
\begin{assumption}  \label{ass:smoothness}
The functions $\nabla_1 V(\cdot,\theta)$, $\nabla_1 V(x, \cdot)$ and~$\nabla \ell$ are respectively $L_X$-Lipschitz for all $\theta \in \R^p$, $L_\Theta$-Lipschitz for all $x \in \R^d$, and $L$-Lipschitz.
\end{assumption}
We can then show the following convergence result.
\begin{theorem}     \label{thm:langevin-discrete}
Take $\gamma_k = c_1/\sqrt{k}$ and $\varepsilon_k = 1/\sqrt{k}$ in \eqref{eq:langevin-discrete}.
Under Assumptions \ref{ass:potential_log_sobolev}, \ref{ass:gradient_bounded}, \ref{ass:Gamma_Lipschitz}, and~\ref{ass:smoothness},
\[
\frac{1}{K} \sum_{k=1}^{K} \|\nabla \ell(\theta_k)\|^2 \leq \frac{c_2 \ln K}{K^{1/3}} \, , \qquad \textnormal{for some } c_1,c_2 > 0 \, .
\]
\end{theorem}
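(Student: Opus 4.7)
The strategy parallels the continuous-time analysis of Theorem~\ref{thm:langevin-continuous}, with the extra complication of a discretization bias in the inner sampling step. The key object to control is $h_k := \KL(p_k \,\|\, \pi^\star(\theta_k))$, which measures how much the current distribution lags behind the current target. The plan has three ingredients: a descent inequality for $\ell$, a one-step recursion for $h_k$, and a balancing/summation argument.

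First I would establish a descent lemma for the outer objective. By the $L$-smoothness of $\ell$ (Assumption~\ref{ass:smoothness}), the update $\theta_{k+1} = \theta_k - \gamma_k \varepsilon_k \Gamma(p_k,\theta_k)$, the identity $\nabla \ell(\theta_k) = \Gamma(\pi^\star(\theta_k),\theta_k)$, and Assumption~\ref{ass:Gamma_Lipschitz}, I would split the cross term via Cauchy--Schwarz and a weighted Young inequality to obtain
\[
\ell(\theta_{k+1}) - \ell(\theta_k) \leq -\tfrac{1}{2}\gamma_k\varepsilon_k \|\nabla\ell(\theta_k)\|^2 + \tfrac{1}{2} K_\Gamma^2 \gamma_k \varepsilon_k\, h_k + \tfrac{L C^2}{2}\gamma_k^2 \varepsilon_k^2.
\]
This is the discrete analog of the gradient-flow computation behind Theorem~\ref{thm:langevin-continuous}, now paying an extra $O(\gamma_k^2\varepsilon_k^2)$ price per step from the smoothness step.

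Next I would derive a recursion for $h_k$ by splitting one iteration into (i) an LMC step with frozen target $\pi^\star(\theta_k)$ and (ii) the shift of the target from $\pi^\star(\theta_k)$ to $\pi^\star(\theta_{k+1})$. For (i), under the log-Sobolev inequality (Assumption~\ref{ass:potential_log_sobolev}) and the Lipschitzness of $\nabla_1 V$ (Assumption~\ref{ass:smoothness}), a Vempala--Wibisono-style analysis gives $\KL(p_{k+1} \,\|\, \pi^\star(\theta_k)) \leq (1 - c\mu\gamma_k) h_k + C_1 \gamma_k^2$ for $\gamma_k$ small enough, where $C_1$ depends on $L_X$ and $d$. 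For (ii), writing $\KL(p_{k+1}\,\|\,\pi^\star(\theta)) = \E_{p_{k+1}}[\log p_{k+1}] + \E_{p_{k+1}}[V(\cdot,\theta)] + \log Z_\theta$ and using $\|\nabla_2 V\|\leq C$ (Assumption~\ref{ass:gradient_bounded}) together with $\|\theta_{k+1}-\theta_k\|\leq C \gamma_k\varepsilon_k$ gives $|\KL(p_{k+1}\,\|\,\pi^\star(\theta_{k+1})) - \KL(p_{k+1}\,\|\,\pi^\star(\theta_k))| \leq C_2 \gamma_k\varepsilon_k$. Combining,
\[
h_{k+1} \leq (1 - c\mu\gamma_k)\, h_k + C_1 \gamma_k^2 + C_2 \gamma_k\varepsilon_k,
\]
and a discrete Gr\"onwall argument (or comparison with the associated ODE) yields the steady-state bound $h_k = \mathcal{O}(\gamma_k + \varepsilon_k)$ up to logarithmic factors.

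Finally, I would telescope the descent inequality, insert $h_k = \mathcal{O}(\gamma_k+\varepsilon_k)$, and use the chosen step-size decay to obtain a weighted bound $\sum_{k=1}^K \gamma_k\varepsilon_k \|\nabla\ell(\theta_k)\|^2 = \mathcal{O}(\log K)$, with both $\sum \gamma_k\varepsilon_k h_k$ and $\sum \gamma_k^2\varepsilon_k^2$ contributing only subdominant terms. Converting this weighted bound into the unweighted average $\tfrac{1}{K}\sum_{k=1}^K \|\nabla\ell(\theta_k)\|^2$ then uses the bound $\|\nabla\ell(\theta_k)\|^2 \leq (\max_{j\le K} 1/(\gamma_j\varepsilon_j))\cdot (\gamma_k\varepsilon_k\|\nabla\ell(\theta_k)\|^2)$, which is the discrete counterpart of the $\tfrac{1}{T}\int \|\nabla\ell\|^2 \leq \tfrac{\max_t 1/\varepsilon_t}{T}\int \varepsilon_t\|\nabla\ell\|^2$ step used implicitly in Theorem~\ref{thm:langevin-continuous}. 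The main obstacle I expect is this balancing: the discrete setting introduces the extra $\gamma_k^2$ bias term absent in the continuous analysis, and the three error contributions (discretization $\gamma_k^2$, two-timescale shift $\gamma_k\varepsilon_k$, and smoothness $\gamma_k^2\varepsilon_k^2$) must be simultaneously controlled by the chosen schedule so that, after multiplying by $\max_k 1/(\gamma_k\varepsilon_k)$ and dividing by $K$, the stated $\ln K/K^{1/3}$ rate emerges. Getting the dimension- and Lipschitz-dependent constants correct in the Vempala--Wibisono step, and making sure that the $\theta$-shift truly contributes only $O(\gamma_k\varepsilon_k)$ and no larger term (given that Assumption~\ref{ass:gradient_bounded} controls $\nabla_2 V$ only in norm), is the most delicate part of the bookkeeping.
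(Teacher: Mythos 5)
Your proposal is correct and follows the same three-part structure as the paper's proof: a descent inequality for $\ell$ from $L$-smoothness, a one-step recursion for $h_k = \KL(p_k \,\|\, \pi^\star(\theta_k))$, and a telescoping/balancing argument. The descent inequality you obtain is exactly the paper's, and the telescoping and rate-conversion step (multiplying by $1/(K\gamma_K\varepsilon_K)$) matches as well.

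The one genuine difference is in how you derive the one-step recursion for $h_k$. The paper follows the interpolation technique of Cheng and Bartlett: it writes the LMC step as a continuous-time flow on $[0,\gamma_k]$ with a time-varying drift $\E_{\rho_{0|t}}[\nabla_1 V(x_0,\theta_k)\mid x_t]$, and simultaneously evolves the target by $\vartheta_t = \theta_k - t\varepsilon_k\Gamma(p_k,\theta_k)$. It then differentiates $\KL(\rho_t\,\|\,\pi^\star(\vartheta_t))$ and splits the result into four pieces ($a_1$ the Fisher term, $a_2$ the discretization bias, $a_3$ the drift mismatch due to $\theta$ moving, and $b$ the target shift). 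Crucially, $a_3$ is controlled by the $L_\Theta$-Lipschitzness of $\theta\mapsto\nabla_1 V(x,\theta)$ in Assumption~\ref{ass:smoothness}. You instead decouple the step into (i) a pure LMC move with frozen target $\pi^\star(\theta_k)$, handled by a Vempala--Wibisono-type bound $\KL(p_{k+1}\,\|\,\pi^\star(\theta_k)) \leq (1-c\mu\gamma_k)h_k + O(\gamma_k^2)$, and (ii) a target change from $\pi^\star(\theta_k)$ to $\pi^\star(\theta_{k+1})$, handled by noting $\theta\mapsto\KL(p\,\|\,\pi^\star(\theta))$ is $2C$-Lipschitz (from $\|\nabla_2 V\|\le C$ and $\nabla_\theta\log Z_\theta = -\E_{\pi^\star(\theta)}[\nabla_2 V]$), so the shift is $O(C^2\gamma_k\varepsilon_k)$. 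Your decoupling reaches the same recursion $h_{k+1} \le (1-c\mu\gamma_k)h_k + O(\gamma_k^2) + O(\gamma_k\varepsilon_k)$ but is arguably cleaner and avoids invoking $L_\Theta$ at all. Both arguments do need the $L_X$-Lipschitzness of $\nabla_1 V(\cdot,\theta)$ and log-Sobolev for the LMC bias term.

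One point you flag but leave open, and that you should nail down: the schedule. With $\gamma_k\sim\varepsilon_k\sim k^{-1/2}$ (as literally written in the theorem statement), you get $h_k = O(k^{-1/2})$, $\sum_k \gamma_k\varepsilon_k h_k = O(1)$, and $K\gamma_K\varepsilon_K = O(1)$, so the final bound is $O(1)$ and does not decay. The paper's own proof (\eqref{eq:def-gamma-k-epsilon-k}) actually uses $\gamma_k\sim\varepsilon_k\sim k^{-1/3}$, which gives $\sum\gamma_k\varepsilon_k h_k=O(\ln K)$, $K\gamma_K\varepsilon_K\sim K^{1/3}$, and hence the stated $\ln K/K^{1/3}$. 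You should use $k^{-1/3}$ in the balancing step; the $k^{-1/2}$ in the theorem statement appears to be a typo in the paper.
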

The proof technique to bound the KL in discrete iterations is inspired by \cite{cb-clmcmckld-18}. Comparing to the continuous case, this step incurs a discretization error proportional to $\gamma_k \to 0$, inducing a slower convergence rate. 
The result is similar to \citet{dagreou2022framework} for finite-dimensional bilevel optimization, albeit our final convergence rate is slower.

\subsection{Denoising diffusion}
\label{theory:diffusion-denoising}

The case of denoising diffusion is more challenging since $\pi^\star(\theta)$ can \textbf{not} be readily characterized as the stationary point of an iterative process. We study a 1D Gaussian case and leave more general analysis for future work. Considering $p_{\text{data}} = \cN(\theta_{\textnormal{data}}, 1)$ and the forward process of Section \ref{subsec:examples}, a straightforward computation shows that the score is given by $\nabla \log p_t(x) = -(x-\theta_{\textnormal{data}} e^{-t})$. A natural parameterization of the score function is therefore $s_{\theta}(x, t) := -(x-\theta e^{-t})$.
With this parameterization, the output of the sampling process~\eqref{eq:diffusionback} is $\pi^\star(\theta) = \cN(\theta (1-e^{-2T}), 1)$.
Remarkably, $\pi^\star(\theta)$ is Gaussian for all $\theta \in \R$, making the analytical study tractable.

Assume that pretraining with samples of $p_{\text{data}}$ yields $\theta = \theta_0$, and we want to finetune the model towards some other $\theta_{\textnormal{target}} \in \R$ by optimizing the reward $R(x)=-(x- \theta_{\textnormal{target}})^2$ over samples of $\pi^\star(\theta)$.
A short computation shows that $\nabla \ell(\theta) = -\E_{x \sim \pi^\star(\theta)} R'(x) (1 - e^{-2T})$, hence one can take  $\Gamma(p, \theta) = -\E_{x \sim p} R'(x) (1 - e^{-2T})$.
In this setting, we study a continuous-time version of Algorithm~\ref{alg:implicit-diff-maxi}, where~$\Sigma$ is the denoising diffusion~\eqref{eq:diffusionback} and~$\Gamma$ is given above, and show convergence of $\theta$ to $\theta_{\textnormal{target}}$. This shows that Algorithm \ref{alg:implicit-diff-maxi} successfully finetunes the parameter to optimize the reward. We refer to Appendix~\ref{apx:theory-diffusion-denoising} for details.

\begin{proposition} (informal)    \label{prop:diffusion-denoising-proof}
Let $(\theta_t)_{t \geq 0}$ be given by the continuous-time equivalent of Algorithm \ref{alg:implicit-diff-maxi}. Then
$\|\theta_{2T} - \theta_{\textnormal{target}}\| = \mathcal{O}(e^{-T})$,
and $\pi^\star(\theta_{2T}) = \cN(\mu_{2T}, 1)$ with $\mu_{2T} = \theta_{\textnormal{target}} + \mathcal{O}(e^{-T})$.
\end{proposition}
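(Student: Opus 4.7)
The plan is to reduce the problem to a scalar integro-differential equation by exploiting Gaussian closure, then read off exponential contraction from the linearized spectrum.

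First, I would observe that the backward SDE~\eqref{eq:diffusionback} with the affine score $s_\theta(x,t) = -(x - \theta e^{-t})$ is an inhomogeneous Ornstein--Uhlenbeck process, so starting from $Y_0 \sim \cN(0,1)$ its law remains Gaussian with variance identically $1$ regardless of how $\theta$ is varied (the variance ODE is autonomous and has $1$ as its unique fixed point). In the continuous-time version of Algorithm~\ref{alg:implicit-diff-maxi} the queue corresponds, at outer time $t$, to one completed backward trajectory started at outer time $t - T$ and evolved with parameter $\theta_{t - T + s}$ at inner time $s \in [0, T]$. Solving the resulting mean ODE with integrating factor $e^s$ and changing variables gives the closed form
\[
\mu_t := \E_{X \sim p_t}[X] = 2 \int_{t-T}^{t} e^{-2(t-v)}\, \theta_v \, \ud v, \qquad t \geq T,
\]
and $p_t = \cN(\mu_t, 1)$.

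Next I would assemble the outer ODE. Since $R'(x) = -2(x - \theta_{\textnormal{target}})$, a direct computation gives $\Gamma(p_t, \theta_t) = 2(\mu_t - \theta_{\textnormal{target}})(1 - e^{-2T})$, hence $\dot\theta_t = -c\,(\mu_t - \theta_{\textnormal{target}})$ with $c := 2\eta(1 - e^{-2T})$. The unique steady state of the coupled system is $\theta^\star = \theta_{\textnormal{target}} / (1 - e^{-2T})$, which already satisfies $|\theta^\star - \theta_{\textnormal{target}}| = \mathcal{O}(e^{-2T})$. Setting $u_t := \theta_t - \theta^\star$ and $w_t := \int_{t-T}^t e^{-2(t-v)} u_v \, \ud v$, the cancellation $\theta^\star(1 - e^{-2T}) = \theta_{\textnormal{target}}$ together with Leibniz' rule produces the closed delay system
\[
\dot u_t = -2c\, w_t, \qquad \dot w_t = -2 w_t + u_t - e^{-2T} u_{t-T}.
\]

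The main step, and the one I expect to be delicate, is to establish exponential contraction of this delayed system at a rate bounded below independently of $T$. Dropping the exponentially small delayed term gives an autonomous $2 \times 2$ linear ODE with characteristic polynomial $\lambda^2 + 2\lambda + 2c$ and eigenvalues $-1 \pm \sqrt{1 - 2c}$; choosing $\eta$ so that $c > 1/2$ places both eigenvalues on the line $\mathrm{Re}\,\lambda = -1$ and yields decay at rate $1$. To handle the delay I would either apply a Rouch\'e-type argument to the characteristic quasi-polynomial $\lambda^2 + 2\lambda + 2c(1 - e^{-(2+\lambda)T}) = 0$ of the full system, showing that its rightmost root is $-1 + \mathcal{O}(e^{-T})$, or construct a Lyapunov--Krasovskii functional that augments the modified quadratic form $u_t^2 + 2c\, w_t^2 - 2\delta u_t w_t$ (needed to capture the oscillatory decay) with a history integral $\kappa \int_{t-T}^t e^{-2(t-v)} u_v^2 \, \ud v$ tuned to absorb the delayed cross term via Young's inequality. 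Either approach should yield $|u_t| = \mathcal{O}(e^{-(t-T)})$ for $t \in [T, 2T]$. The obstacle is technical: the oscillatory eigenvalues prevent naive quadratic Lyapunov functions from capturing the rate-$1$ decay without the $uw$ cross term, and the delay term requires the history augmentation to close the estimate uniformly in $T$.

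Finally, since the queue on $[0, T]$ warms up from an initialization (say $\theta_s = \theta_0$ for $s \in [0, T]$), one has $|u_T|, |w_T| = \mathcal{O}(1)$. Exponential contraction on $[T, 2T]$ then gives $|\theta_{2T} - \theta^\star| = \mathcal{O}(e^{-T})$, and combining with $|\theta^\star - \theta_{\textnormal{target}}| = \mathcal{O}(e^{-2T})$ yields the first assertion $|\theta_{2T} - \theta_{\textnormal{target}}| = \mathcal{O}(e^{-T})$. The second assertion is immediate: $\pi^\star(\theta_{2T}) = \cN(\theta_{2T}(1 - e^{-2T}), 1)$, whose mean equals $\theta_{\textnormal{target}} + (\theta_{2T} - \theta^\star)(1 - e^{-2T}) = \theta_{\textnormal{target}} + \mathcal{O}(e^{-T})$.
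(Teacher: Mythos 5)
Your calculations all check out and run closely parallel to the paper's: you derive the same formula $\mu_t = 2\int_{t-T}^t e^{-2(t-v)}\theta_v\,\ud v$ for the mean of the queue output, identify the same $\Gamma$, shift to the fixed point $\theta^\star = \theta_{\textnormal{target}}/(1-e^{-2T})$, and reduce to the $2\times 2$ system in $(u,w)$ with the delayed term $e^{-2T}u_{t-T}$. Where you diverge is in how you propose to close the argument. You treat the system on $[T,2T]$ as a genuine delay differential equation and reach for a Rouch\'e argument on the characteristic quasi-polynomial or a Lyapunov--Krasovskii functional, flagging the cross term and the history integral as the delicate points. But the proposition only concerns $\theta_{2T}$, and on the interval $[T,2T]$ the delayed index $t-T$ ranges precisely over the warm-up window $[0,T]$ on which the algorithm keeps $\theta$ frozen at $\theta_0$. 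You yourself invoke this fact, but only to bound the initial data $|u_T|, |w_T| = \mathcal O(1)$; you do not push it one step further to notice that $u_{t-T} \equiv \theta_0 - \theta^\star$ is a \emph{constant} on the whole interval of interest. That turns your system into a constant-coefficient affine linear ODE, $\dot\xi = A\xi + b$ with $A = \bigl(\begin{smallmatrix}0 & -2c\\ 1 & -2\end{smallmatrix}\bigr)$, whose solution $\xi_t = -A^{-1}b + e^{(t-T)A}(A^{-1}b + \xi_T)$ is explicit. This is exactly what the paper does: it computes $A^{-1}b$, checks that the eigenvalues of $A$ (your $-1 \pm \sqrt{1-2c}$) have negative real part, and reads off $\theta_{2T} = \theta_{\textnormal{target}} + \mathcal O(e^{-T})$ from the exponential decay of $e^{TA}$.

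So there is no gap: your plan would in principle work, and the quasi-polynomial/Lyapunov--Krasovskii machinery you sketch is the right tool if one wanted uniform-in-$T$ contraction for $t \gg 2T$, where the delay is truly active. But for the result as stated it is heavy artillery, and the technical obstacles you worry about (the $uw$ cross term in the quadratic form, absorbing the delayed term via the history integral) evaporate once you observe that the delay term is constant on $[T,2T]$. The remaining steps of your proposal (the steady-state estimate $|\theta^\star - \theta_{\textnormal{target}}| = \mathcal O(e^{-2T})$, and the conversion from $\theta_{2T}$ to $\mu_{2T}$ via $\mu(\theta) = \theta(1-e^{-2T})$) are correct and match the paper.
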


\section{EXPERIMENTS}   \label{sec:experiments}

We empirically illustrate the performance of Implicit Diffusion. Details are given in Appendix~\ref{apx:experimental-details}.

\subsection{Reward training of Langevin processes}
\label{sec:experiments-langevin}

We consider the case %
where the potential $V(\cdot, \theta)$ is a logsumexp of quadratics---so that the outcome distributions are mixtures of Gaussians. We optimize the reward $R(x) = 
\mathbf{1}(x_1 >0) \exp(-\|x-\mu\|^2)$, for $\mu \in \R^d$, thereby illustrating the ability of our method to optimize rewards that are not differentiable. 
We run six\linebreak sampling algorithms, including the infinite time-horizon version of \textbf{Implicit Diffusion} (Algorithm~\ref{alg:implicit-diff}), all starting from $p_0 = \cN(0, I_d)$ and for $K=5,000$ steps. 
\begin{itemize}[topsep=0pt,itemsep=2pt,parsep=2pt,leftmargin=15pt]
    \item[${\color{gray} \blacksquare}$] Langevin diffusion \eqref{eq:langevin} with potential $V(\cdot, \theta_0)$ for some fixed $\theta_0 \in \R^p$, no reward.
    \item[${\color{myblue} \bigstar}$] Implicit Diffusion with $\cF(p) = - \E_{X\sim p}[R(X)]$, yields both a sample $\hat p_K$ and parameters~$\theta_{\text{opt}}$.
    \item[${\color{myorange} \blacktriangledown}$] Langevin \eqref{eq:langevin} with potential $V(\cdot, \theta_0) - \lambda R_{\text{smooth}}$, where $R_{\text{smooth}}$ is a smoothed version of $R$.
    \item[${\color{mycoral} \CIRCLE}$] Langevin \eqref{eq:langevin} with potential $V(\cdot, \theta_{\text{opt}})$. This is inference post-training with Implicit Diffusion.
    \item[${\color{mypalegreen}  \blacklozenge}$] ${\color{mydarkgreen}\blacklozenge }$
    Nested loop (Algorithm \ref{alg:naive-nested}) with $T$ inner sampling steps for each gradient step.
    \item[${\color{mythistle} \blacktriangle}$] ${\color{mydarkorchid} \blacktriangle}$ Unrolling through the last step of sampling with $T$ inner sampling steps for each outer step.
\end{itemize}

\begin{figure}[ht]
\begin{center}
\centerline{\includegraphics[width=\columnwidth]{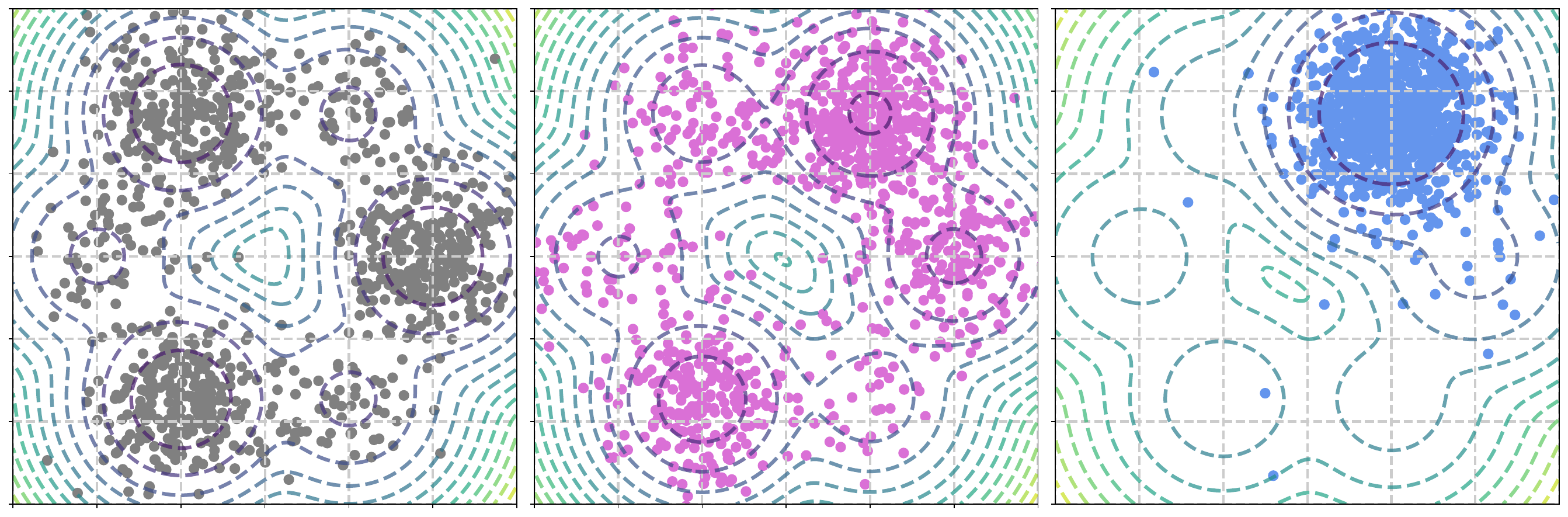}}
\caption{Contour lines and samples for $({\color{gray} \CIRCLE})$: Langevin $\theta_0$ - $({\color{mydarkorchid} \CIRCLE})$ Unrolling with $T=100$ inner sampling steps - $({\color{myblue} \CIRCLE})$
Implicit Diffusion. 
\label{fig:contour-langevin}}
\end{center}
\end{figure}

\begin{figure*}[!ht]
  \begin{center}
\begin{subfigure}{0.74\textwidth}
\centerline{\includegraphics[width=\textwidth]{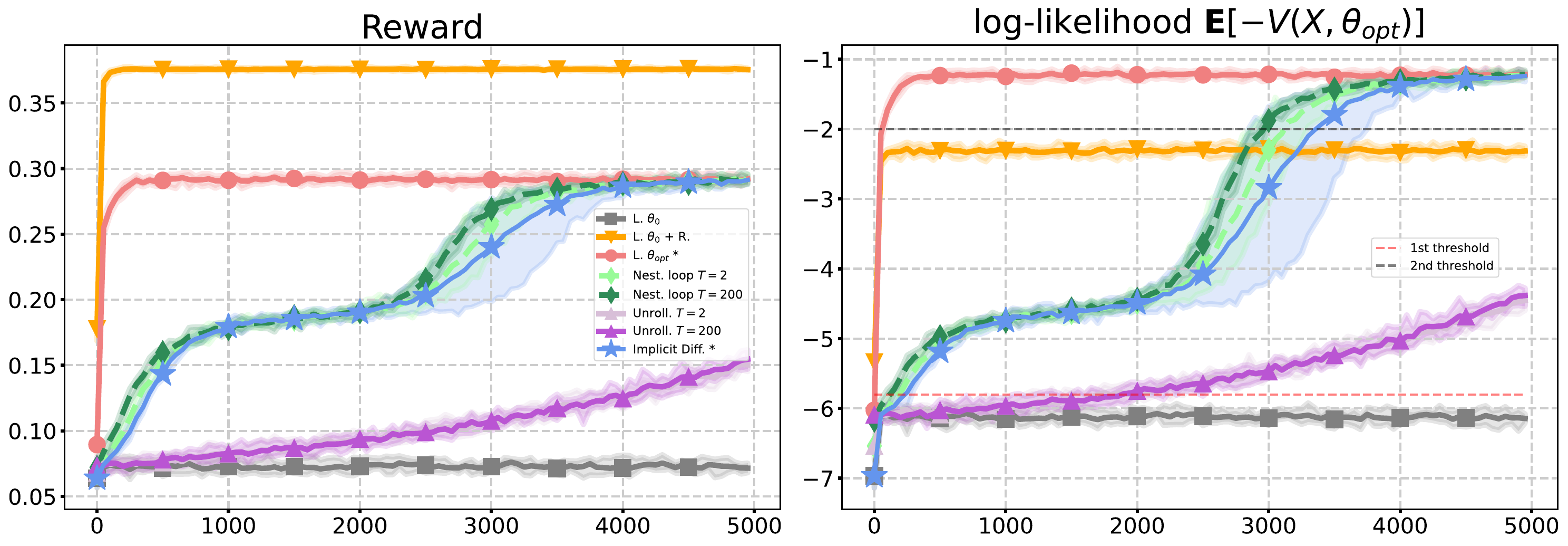}}
\end{subfigure}
\begin{subfigure}{0.25\textwidth}
\centerline{\includegraphics[width=\textwidth]{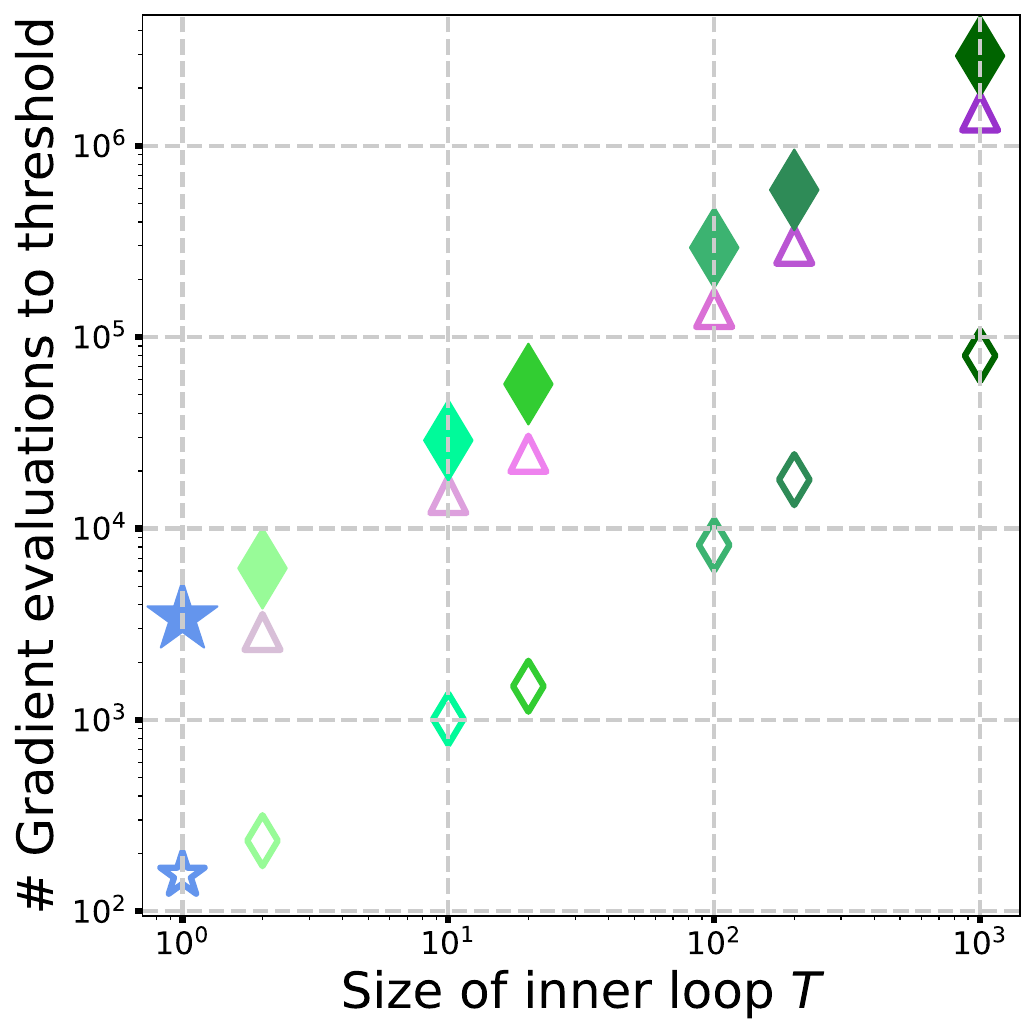}}
\end{subfigure}
\end{center}
  \caption{Metrics for reward training of Langevin processes, 10 runs.  The setting and color code are detailed in  Section~\ref{sec:experiments-langevin}. \textbf{Left:} Reward on the sample distribution, at each outer objective step, averaged on a batch. \textbf{Middle:} Log-likelihood of $\pi^\star(\theta_\text{opt})$ on the sample distribution, at each outer step, averaged on a batch--\underline{higher is better}. \textbf{Right:} Number of gradient evaluations needed to reach a given log-likelihood threshold (y-axis)--\underline{lower is better}, for various sizes of inner loop $T$ (x-axis) and various methods (same symbols and colors as in left plots). Implicit Diffusion is $T=1$. White symbols with colored edges correspond to a log-likelihood threshold of $-5.8$ (red dashed line in the middle plot) and fully-colored symbols to a threshold of $-2.0$ (black dashed line in the middle plot).}
  \label{fig:rewards-langevin}
\end{figure*}
Both qualitatively (Figure~\ref{fig:contour-langevin}) and quantitatively (Figure~\ref{fig:rewards-langevin}), we observe that our approach efficiently optimizes through sampling. We analyze performance both in terms of {\em steps} (number of optimization steps--updates in $\theta$) and {\em gradient evaluations} (number of sampling steps). %
After $K$ optimization steps, our algorithm yields both $\theta_{\text{opt}} := \theta_K$ and a sample $\hat p_K$ approximately from $\pi^\star(\theta_{\text{opt}})$. Then, it is convenient and fast to sample post hoc, with a Langevin process using $\theta_{\text{opt}}$--as observed in Figure~\ref{fig:rewards-langevin}. This is similar in spirit to inference with a finetuned model, post-reward training. We compare our approach with several baselines. First, directly adding a reward term (${\color{myorange} \blacktriangledown}$) %
is less efficient: it tends to overfit on the reward, as the target distribution of this process is out of the family of $\pi^\star(\theta)$'s.
Second, unrolling through the last step of sampling~(${\color{mythistle} \blacktriangle}, {\color{mydarkorchid} \blacktriangle}$) leads to much slower optimization. Finally, using the nested loop approach (${\color{mypalegreen} \blacklozenge }, {\color{mydarkgreen} \blacklozenge }$) makes each optimization step~$T$ times more costly, while barely improving the performance after a given number of steps (due to less biased gradients). 
When comparing in terms of number of gradient evaluations, Implicit Diffusion strongly outperforms the nested loop approach. In other words, for a given computational budget, it is optimal to take $T=1$ and perform more optimization steps, which corresponds to the Implicit Diffusion single-loop approach.
Further comparison plots, as well as a variant of this experiment where we learn a reference distribution (i.e., train from scratch an energy-based model) are included in Appendix~\ref{apx:experimental-details}. We also include an experiment where the potential $V$ parameterizes the means and the covariances of the Gaussians in addition to the weights of the mixture.

\begin{figure}[ht]%
\centering
\includegraphics[width=\columnwidth]{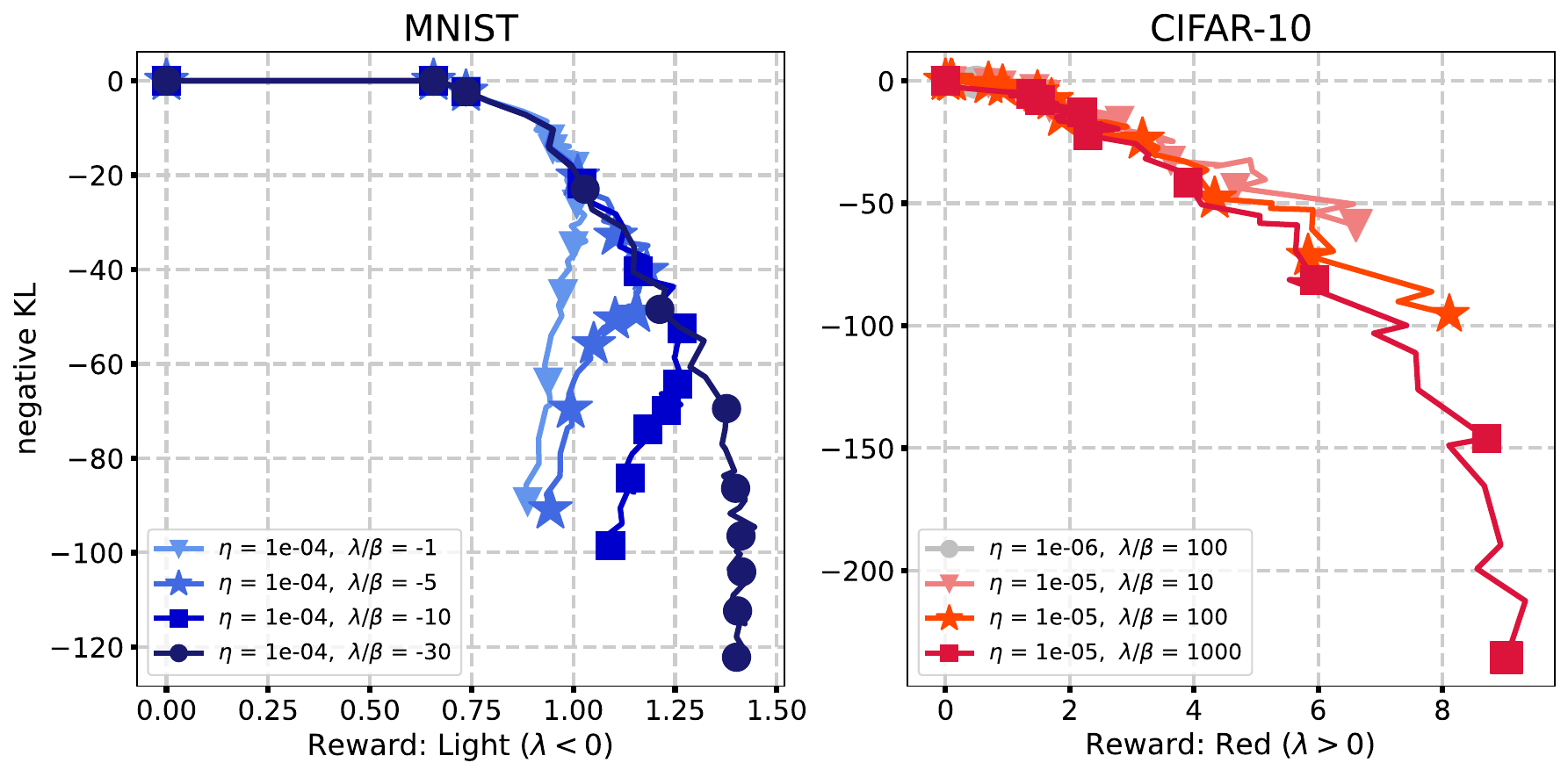}
\caption{Reward training with \textbf{Implicit Diffusion} for various learning rates $\eta$ and reward strengths $\lambda/\beta$. For each dataset, we plot together the reward and the negative KL divergence w.r.t.~$\pi^\star(\theta_0)$.}
\label{fig:metrics-mnist-cifar}
\end{figure}

\subsection{Reward training of denoising diffusion}
We also apply \textbf{Implicit Diffusion} for reward finetuning of denoising diffusion models pretrained on image datasets.  
We denote by $\theta_0$ the weights of a pretrained model, such that $\pi^\star(\theta_0) \approx \pdata$. For various reward functions on the samples $R : \R^d \to \R$, we consider
\[
\cF(p) := - \lambda \E_{x\sim p}[ R(x)] + \beta \KL(p \, || \, \pi^\star(\theta_0))\, ,
\]
common in reward finetuning \citep[see, e.g.,][and references therein]{ziegler2019fine}, for positive and negative values of~$\lambda$. We run \textbf{Implicit Diffusion} using the finite time-horizon variant (Algorithm~\ref{alg:implicit-diff-maxi}), applying the adjoint method on SDEs for gradient estimation. We report selected samples of $\pi^\star(\theta_t)$, as well as reward and KL divergence estimates (see Figures~\ref{fig:intro-examples} and~\ref{fig:metrics-mnist-cifar}--\ref{fig:mnist-neg}). %

\begin{figure}[!ht]
\begin{center}
\centerline{\includegraphics[width=\columnwidth]{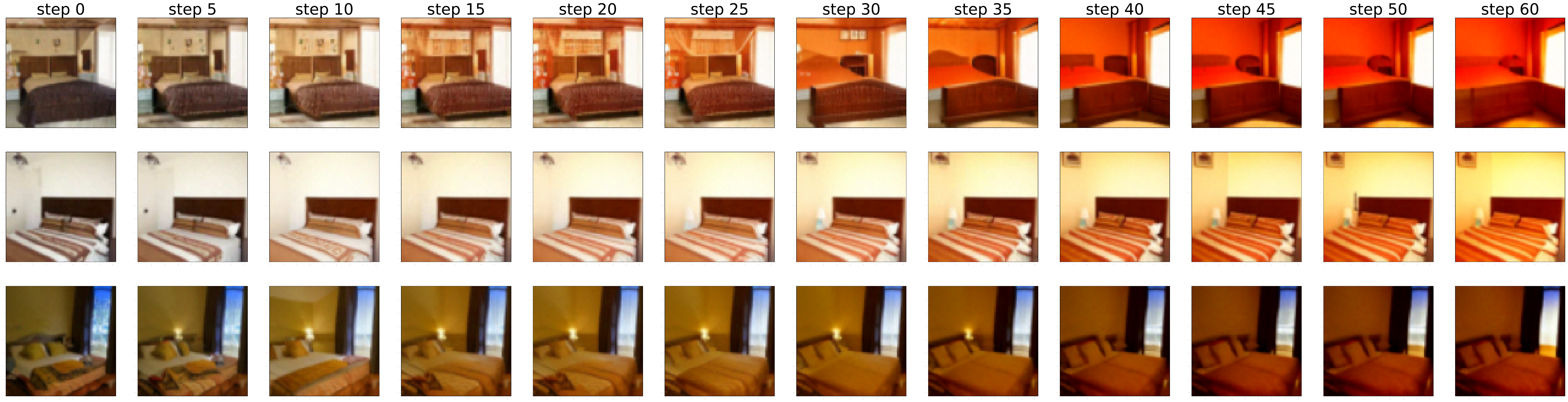}}
\caption{Samples of reward training after pretraining on LSUN ($\lambda / \beta = 10$). The reward incentives for redder images. Images are re-sampled with the same seed every five steps (see Appendix~\ref{apx:exp-denoising}).
\label{fig:lsun-pos}}
\end{center}
\end{figure}

\begin{figure}[!ht]
\begin{center}
\centerline{\includegraphics[width=\columnwidth]{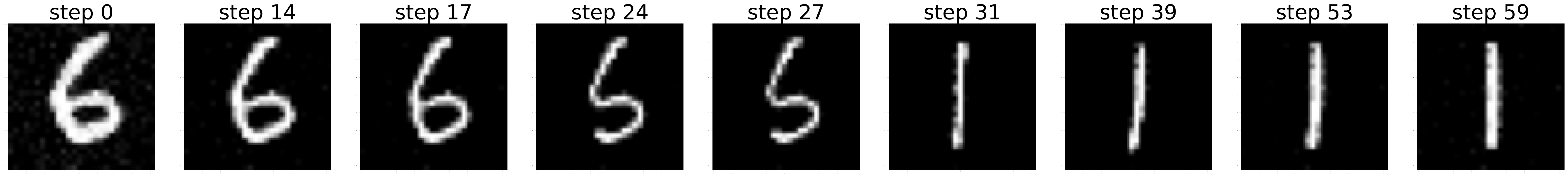}}
\centerline{\includegraphics[width=\columnwidth]{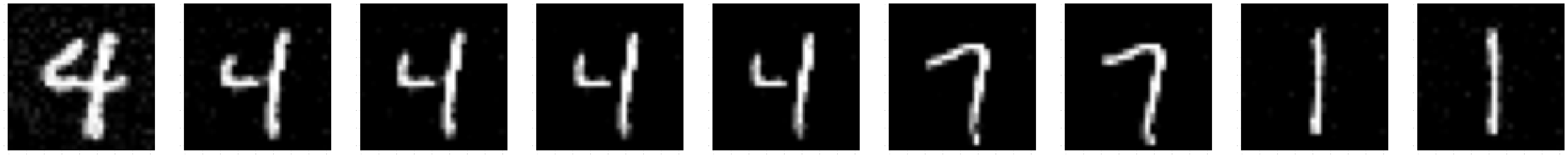}}
\centerline{\includegraphics[width=\columnwidth]{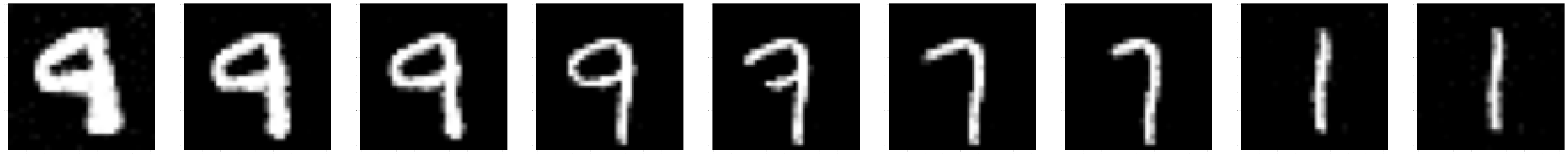}}
\caption{Samples of reward training after pretraining on MNIST. The reward favors \textbf{darker} images ($\lambda / \beta = -30$). Images are re-sampled with the same seed every five steps (see Appendix~\ref{apx:exp-denoising}).
\label{fig:mnist-neg}}
\end{center}
\end{figure}

We report results on models pretrained on the image datasets MNIST \citep{lecun1998}, CIFAR-10 \citep{Krizhevsky2009learningmultiple}, and LSUN (bedrooms) \citep{yu2016lsun}. 
For MNIST, we use a $2.5\si{\mega\nothing}$ parameters model (no label conditioning). Our reward is the average brightness (i.e.~average of all pixel values).
For CIFAR-10 and LSUN, we pretrain a $53.2\si{\mega\nothing}$ parameters model, with label conditioning for CIFAR-10.
Our reward is the average brightness of the red channel minus the average on the other channels. For pretraining, we follow the simple diffusion method \citep{hoogeboom2023simple} and use U-Net models \citep{ronneberger2015u}. We display visual examples in Figures~\ref{fig:intro-examples},~\ref{fig:lsun-pos},~\ref{fig:mnist-neg} and in Appendix~\ref{apx:experimental-details}, where we also report additional metrics. While the finetuned models diverge from the original distribution, they retain overall semantic information (e.g.~brighter digits are thicker, rather than on a gray background). We observe in Figure~\ref{fig:metrics-mnist-cifar} the %
competition between reward and divergence to the pretrained distribution.

Possible limitations of our approach include sensitivity to the choice of hyperparameters (learning rate $\eta$, reward strength $\lambda/\beta$, size of the queue $M$), bias in the gradient estimation, practical applicability to larger scale problems or more complex rewards. We plan to investigate these questions in future research.

\subsection*{Acknowledgments}

The authors would like to thank Fabian Pedregosa for very fruitful discussions on implicit differentiation and bilevel optimization that led to this project, Vincent Roulet for very insightful notes and comments about early drafts of this work as well as help with experiment implementation, Emiel Hoogeboom for extensive help on pretraining diffusion models, and Clément Crépy for help with open-sourcing. PM and AK thank Google for their academic support in the form respectively of a Google PhD Fellowship and a gift in support of her academic research.

\bibliographystyle{abbrvnat}
\bibliography{biblio}

\section*{Checklist}

 \begin{enumerate}

 \item For all models and algorithms presented, check if you include:
 \begin{enumerate}
   \item A clear description of the mathematical setting, assumptions, algorithm, and/or model. \textbf{Yes}
   \item An analysis of the properties and complexity (time, space, sample size) of any algorithm. \textbf{Yes}
   \item (Optional) Anonymized source code, with specification of all dependencies, including external libraries. \textbf{No}
 \end{enumerate}

The setting and algorithms are described in Sections \ref{sec:intro}--\ref{sec:methods}, and further details are given in Section~\ref{apx:algos}. We open-sourced the source code related to the experiments on reward training of Langevin processes.

 \item For any theoretical claim, check if you include:
 \begin{enumerate}
   \item Statements of the full set of assumptions of all theoretical results. \textbf{Yes}
   \item Complete proofs of all theoretical results. \textbf{Yes}
   \item Clear explanations of any assumptions. \textbf{Yes}
 \end{enumerate}

See Section \ref{sec:theory} for precise mathematical statements and assumptions, and Appendix \ref{apx:proofs} for proofs.

 \item For all figures and tables that present empirical results, check if you include:
 \begin{enumerate}
   \item The code, data, and instructions needed to reproduce the main experimental results (either in the supplemental material or as a URL). \textbf{No}
   \item All the training details (e.g., data splits, hyperparameters, how they were chosen). \textbf{Yes}
         \item A clear definition of the specific measure or statistics and error bars (e.g., with respect to the random seed after running experiments multiple times). \textbf{Yes}
         \item A description of the computing infrastructure used. (e.g., type of GPUs, internal cluster, or cloud provider). \textbf{Yes}
 \end{enumerate}

We open-sourced the source code related to the experiments on reward training of Langevin processes. Error bars are provided over independent repetitions for Langevin experiments, as described in Section \ref{sec:experiments} and Appendix \ref{apx:experimental-details}, while they are too costly to compute for the denoising diffusion experiments. The computing infrastructure is described in Section \ref{sec:experiments} and Appendix \ref{apx:experimental-details}.

 \item If you are using existing assets (e.g., code, data, models) or curating/releasing new assets, check if you include:
 \begin{enumerate}
   \item Citations of the creator If your work uses existing assets. \textbf{Yes}
   \item The license information of the assets, if applicable. \textbf{No}
   \item New assets either in the supplemental material or as a URL, if applicable. \textbf{Not Applicable}
   \item Information about consent from data providers/curators. \textbf{Not Applicable}
   \item Discussion of sensible content if applicable, e.g., personally identifiable information or offensive content. \textbf{Not Applicable}
 \end{enumerate}

See Section \ref{sec:experiments} and Appendix \ref{apx:experimental-details} for citations of the datasets and main code packages used in this project.

 \item If you used crowdsourcing or conducted research with human subjects, check if you include:
 \begin{enumerate}
   \item The full text of instructions given to participants and screenshots. \textbf{Not Applicable}
   \item Descriptions of potential participant risks, with links to Institutional Review Board (IRB) approvals if applicable. \textbf{Not Applicable}
   \item The estimated hourly wage paid to participants and the total amount spent on participant compensation. \textbf{Not Applicable}
 \end{enumerate}

 \end{enumerate}

\newpage
\onecolumn

\appendix

\begin{center}
    \Large{\textbf{APPENDIX}}
\end{center}

\paragraph{Organization of the Appendix.} Section \ref{apx:algos} is devoted to explanations of our methodology. Section~\ref{subsec:gradient-estimation-abstraction} explains the gradient estimation setting we consider. Then, in the case of Langevin dynamics (Section \ref{subsec:apx:joint-optimization-algos}), and denoising diffusions (Section \ref{apx:adjoint-method}), we explain how Definition \ref{def:implicit-gradient-estimation} and our Implicit Differentiation algorithms (Algorithms \ref{alg:implicit-diff} and \ref{alg:implicit-diff-maxi}) can be instantiated. Section \ref{apx:implicit-differentiation} gives more details about the implicit differentiation approaches sketched in Section~\ref{sec:grad-sampling}.
Section \ref{apx:proofs} contains the proofs of our theoretical results, while Section \ref{apx:experimental-details} gives details for the experiments of Section \ref{sec:experiments} as well as additional explanations and plots. Finally, Section \ref{apx:additional-related-work} is dedicated to additional related work.

\section{IMPLICIT DIFFUSION ALGORITHMS}  \label{apx:algos}
\subsection{Gradient estimation abstraction: $\Gamma$}  \label{subsec:gradient-estimation-abstraction}
As discussed in Section~\ref{subsec:overview}, we focus on settings where the gradient of the loss $\ell: \R^p \to \R$, defined by
\[
\ell(\theta) := \cF(\pi^\star(\theta))\, ,
\]
can be estimated by using a function $\Gamma$. More precisely, following Definition~\ref{def:implicit-gradient-estimation}, we assume that there exists a function $\Gamma: \cP \times \R^p \to \R^p$ such that $\nabla \ell(\theta) = \Gamma(\pi^\star(\theta), \theta)$. In practice, for almost every setting there is no closed form for $\pi^\star(\theta)$, and even sampling from it can be challenging (e.g.~here, if it is the outcome of infinitely many sampling steps). When we run our algorithms, the dynamic is in practice applied to variables, as discussed in Section \ref{subsec:sampling-optimization}. Using a batch of variables of size $n$, initialized independent with $X_0^{i} \sim p_0$, we have at each step $k$ of joint sampling and optimization a batch of variables forming an empirical measure $\hat p^{(n)}_k$. We consider cases where the operator $\Gamma$ is well-behaved: if $\hat p^{(n)} \approx p$, then $\Gamma(\hat p^{(n)}, \theta) \approx \Gamma(p, \theta)$, and therefore where this finite sample approximation can be used to produce an accurate estimate of~$\nabla \ell(\theta)$.

\subsection{Langevin dynamics}    \label{subsec:apx:joint-optimization-algos}

We explain how to derive the formulas \eqref{eq:gamma-1}--\eqref{eq:gamma-2} for $\Gamma$, and give the sample version of Algorithm \ref{alg:implicit-diff} in these cases.

Recall that the stationary distribution of the dynamics \eqref{eq:langevin} is the Gibbs distribution \eqref{eq:gibbs},
with the normalization factor $Z_\theta = \int \exp(-V(x, \theta)) \ud x$. Assume that the outer objective can be written as the expectation of some (potentially non-differentiable) reward $R$, namely $\cF(p) := -\E_{x\sim p}[R(x)]$. Then our objective is
\[
\ell_{\textnormal{rew}}(\theta) = -\int R(x) \frac{\exp(-V(x, \theta))}{Z_\theta} \ud x \, .
\]
As a consequence,
\begin{align*}
\nabla \ell_{\textnormal{rew}}(\theta) &= \int R(x) \nabla_2 V(x, \theta) \frac{\exp(-V(x, \theta))}{Z_\theta} \ud x \\
&\qquad- \int R(x) \frac{\exp(-V(x, \theta)) \int \nabla_2 V(x', \theta) \exp(-V(x', \theta)) \ud x'}{Z_\theta^2} \ud x \\
&= \E_{X \sim \pi^\star(\theta)}[R(X) \nabla_2 V(X, \theta)] - \E_{X \sim \pi^\star(\theta)}[R(X)] \E_{X \sim \pi^\star(\theta)}[\nabla_2 V(X, \theta)] \\
&= \cov\nolimits_{X \sim \pi^\star(\theta)}[R(X), \nabla_2 V(X, \theta)] \, .
\end{align*}
This computation is sometimes referred to as the REINFORCE trick \citep{williams1992simple}.
This suggests taking 
\[
\Gamma_{\textnormal{rew}}(p, \theta) = \cov\nolimits_{X \sim p}[R(X), \nabla_2 V(X, \theta)] \, .
\]
In this case, the sample version of Algorithm \ref{alg:implicit-diff} is%
\begin{align*}
X_{k+1}^{(i)} &= X_k^{(i)} - \gamma_X \nabla_1 V(X_k^{(i)}, \theta_k) + \sqrt{2\gamma_X} \Delta B_k^{(i)} \, , \quad \text{for all $i \in \{1, \dots, n\}$} \\ %
\theta_{k+1} &= \theta_k - \gamma_\theta \hat \cov[R(X_k^{(i)}), \nabla_2 V(X_k^{(i)}, \theta_k)] \, , %
\end{align*}
where $(\Delta B_k)_{k \geq 0}$ are i.i.d.~standard Gaussian random variables and $\hat \cov$ is the empirical covariance over the sample.

When $\cF(p) := \text{KL}(\pref \,| \, p)$, e.g., when we want to %
regularize towards a reference distribution $p_{\text{ref}}$ with sample access, for $\ell_{\text{ref}}(\theta) = \cF(\pi^\star(\theta))$, we have
\[
\ell_{\text{ref}}(\theta) = \int \log \Big(\frac{\pref[x]}{\pi^\star(\theta)[x]}\Big) \pref[x]\ud x \, ,
\]
thus
\[
\nabla \ell_{\text{ref}}(\theta) = - \int \frac{\partial \pi^\star(\theta)}{\partial \theta}[x] \cdot \frac{1}{\pi^\star(\theta)[x]} \pref[x]\ud x \, .
\]
Leveraging the explicit formula \eqref{eq:gibbs} for $\pi^\star(\theta)$, we obtain
\begin{align*}
\nabla \ell_{\text{ref}}(\theta) 
&= \int \frac{\nabla_2 V(x, \theta) \exp(-V(x, \theta))}{\pi^\star(\theta)[x] Z_\theta} \pref[x] \ud x + \int \frac{\exp(-V(x, \theta)) \nabla_\theta Z_\theta}{\pi^\star(\theta)[x] Z_\theta^2} \pref[x] \ud x \\
&= \int \nabla_2 V(x, \theta) \pref[x] \ud x + \int \frac{\nabla_\theta Z_\theta}{Z_\theta} \pref[x] \ud x \\
&= \E_{X \sim p_{\text{ref}}}[\nabla_2 V(X, \theta)] - \int \nabla_2 V(x, \theta) \frac{\exp(-V(x, \theta))}{Z_\theta} \ud x \\
&= \E_{X \sim p_{\text{ref}}}[\nabla_2 V(X, \theta)] - \E_{X \sim \pi^\star(\theta)}[\nabla_2 V(X, \theta)] \, ,
\end{align*}
where the third equality uses that $\int \pref[x] \ud x = 1$. This suggests taking
\[
\Gamma_{\textnormal{ref}}(p, \theta) = \E_{X \sim p}[\nabla_2 V(X, \theta)] - \E_{X \sim \pi^\star(\theta)}[\nabla_2 V(X, \theta)] \, .
\]
The terms in this gradient can be estimated: for a model $V(\cdot, \theta)$, the gradient function w.r.t $\theta$ can be obtained by automatic differentiation. Samples from $p_{\text{ref}}$ are available by assumption, and samples from $\pi^\star(\theta)$ can be replaced by the $X_k$ in joint optimization as above. We recover the formula for contrastive learning of energy-based model to data from $p_{\text{ref}}$ \citep{gutmann2012noise}. 

This can also be used for finetuning, combining a reward $R$ and a KL term, with $\cF(p) = -\lambda \E_{X \sim p}[R(x)] + \beta\text{KL}(p\, || \, p_{\text{ref}})$. The sample version of Algorithm \ref{alg:implicit-diff} then can be written
\begin{align*}
X_{k+1}^{(i)} &= X_k^{(i)} - \gamma_X \nabla_1 V(X_k^{(i)}, \theta_k) + \sqrt{2\gamma_X} \Delta B_k^{(i)} \quad \text{for all $i \in \{1, \dots, n\}$} \\ 
\theta_{k+1} &= \theta_k - \gamma_\theta \Big[\hat \lambda \cov[R(X_k), \nabla_2 V(X_k, \theta_k)] + \beta\Big(\sum_{j=1}^m \nabla_2 V(\tilde X_k^{(j)}, \theta) - \frac{1}{n} \sum_{i=1}^n \nabla_2 V(X_k^{(i)}, \theta) \Big)\Big]
 , 
\end{align*}
where $(\Delta B_k)_{k \geq 0}$ are i.i.d.~standard Gaussian random variables, $\hat \cov$ is the empirical covariance over the sample, and $\tilde X_k^{(j)} \sim p_{\text{ref}}$.

\subsection{Adjoint method and denoising diffusions}     \label{apx:adjoint-method}
We explain how to use the adjoint method to backpropagate through differential equations, and apply this to derive instantiations of Algorithm \ref{alg:implicit-diff-maxi} for denoising diffusions.

\paragraph{ODE sampling.} We begin by recalling the adjoint method in the ODE case \citep{pontryagin2018mathematical}. Consider the ODE $\ud Y_t = \mu(t, Y_t, \theta) \ud t$ integrated between~$0$ and some $T>0$. For some differentiable function $R: \R^d \to \R$, the derivative of $R(Y_T)$ with respect to~$\theta$ can be computed by the adjoint method. More precisely, it is equal to $G_T$ defined by
\begin{alignat*}{3}
    Z_0 &= Y_T \, , \qquad &&\ud Z_t = - \mu(t, Z_t, \theta) \ud t \, , \\
    A_0 &= \nabla R(Y_T) \, , \qquad &&\ud A_t = A_t^\top \nabla_2 \mu(T-t, Z_t, \theta) \ud t \, , \\
    G_0 &= 0 \, , \qquad &&\ud G_t= A_t^\top \nabla_3 \mu(T-t, Z_t,\theta) \ud t \, .
\end{alignat*}
Note that sometimes the adjoint equations are written with a reversed time index ($t' = T-t$), which is not the formalism we adopt here.

In the setting of denoising diffusion presented in Section \ref{subsec:examples}, we are not interested in computing the derivative of a function of a single realization of the ODE, but of the expectation over $Y_T \sim \pi^\star(\theta)$ of the derivative of $R(Y_T)$ with respect to $\theta$. In other words, we want to compute $\nabla \ell(\theta) = \nabla (\cF \circ \pi^\star)(\theta)$, where $\cF(p) = \E_{x\sim p}[R(x)]$. Rewriting the equations above in this case, we obtain that $G_T$ defined by
\begin{alignat*}{3}
    Z_0 &\sim \pi^\star(\theta) \, , \qquad &&\ud Z_t = - \mu(t, Z_t, \theta) \ud t \, , \\
    A_0 &= \nabla R(Z_0) \, , \qquad &&\ud A_t = A_t^\top \nabla_2 \mu(T-t, Z_t, \theta) \ud t \, , \\
    G_0 &= 0 \, , \qquad &&\ud G_t= A_t^\top \nabla_3 \mu(T-t, Z_t,\theta) \ud t
\end{alignat*}
is an unbiased estimator of $\nabla \ell(\theta)$. Recalling Definition \ref{def:implicit-gradient-estimation}, this means that we can take $\Gamma(p, \theta) := G_T$ defined by
\begin{alignat*}{3}
    Z_0 &\sim p, \quad
    &&\ud Z_t = - \mu(t, Z_t, \theta) \ud t \\
    A_0 &= \nabla R(Z_0), \quad 
     &&\ud A_t = A_t^\top \nabla_2 \mu(T-t, Z_t, \theta) \ud t \\
    G_0 &= 0, \quad
    &&\ud G_t = A_t^\top \nabla_3 \mu(T-t, Z_t,\theta) \ud t \, .
\end{alignat*}
This is exactly the definition of $\Gamma$ given in Section \ref{sec:grad-sampling}. We apply this to the case of denoising diffusions, where is $\mu$ given by \eqref{eq:diffusion-ode-back}. To avoid a notation clash between the number of iterations $T=M$ of the sampling algorithm, and the maximum time $T$ of the ODE in \eqref{eq:diffusion-ode-back}, we rename the latter to $T_\textnormal{horizon}$. A direct instantiation of Algorithm \ref{alg:implicit-diff-maxi} with an Euler solver is the following algorithm.

\begin{algorithm}[H]
\caption{Implicit Diff. optimization, denoising diffusions with ODE sampling} \label{alg:implicit-diff-denoising-diff}
\begin{algorithmic}
\INPUT $\theta_0 \in \R^p$, $p_0 \in \cP$
\INPUT $P_M = [Y_0^{(0)},\ldots, Y_0^{(M)}] \sim \mathcal{N}(0, 1)^{\otimes (m \times d)}$
\FOR{$k \in \{0, \dots, K-1\}$}
    \STATE $Y_{k+1}^{(0)} \sim \mathcal{N}(0, 1)$
    \STATE \textbf{parallel} $Y_{k+1}^{(m+1)} \gets Y_k^{(m)} + \frac{1}{M} \mu(\frac{m T_\textnormal{horizon}}{M}, Y_k^{(m)}, \theta_k)$ for $m \in [M-1]$
    \STATE $Z_k^{(0)} \gets Y_k^{(M)}$
    \STATE $A_k^{(0)} \gets \nabla R (Z_k^{(0)})$
    \STATE $G_k^{(0)} \gets 0$
    \FOR{$t \in \{0, \dots, T-1\}$}
        \STATE $Z_k^{(t+1)} \gets Z_k^{(t)} - \frac{1}{T} \mu(\frac{t T_\textnormal{horizon}}{T}, Z_k^{(t)}, \theta_k)$
        \STATE $A_k^{(t+1)} \gets A_k^{(t)} + \frac{1}{T} (A_k^{(t)})^\top \nabla_2 \mu(\frac{t T_\textnormal{horizon}}{T}, Z_k^{(t)}, \theta_k)$
        \STATE $G_k^{(t+1)} \gets G_k^{(t)} + \frac{1}{T} (G_k^{(t)})^\top \nabla_2 \mu(\frac{t T_\textnormal{horizon}}{T}, Z_k^{(t)}, \theta_k)$
    \ENDFOR
    \STATE $\theta_{k+1} \gets \theta_k - \eta G_k^{(T)}$
\ENDFOR
\OUTPUT $\theta_K$
\end{algorithmic}
\end{algorithm}

Several comments are in order. First, the dynamics of $Y_k^{(M)}$ in the previous $M$ steps, from $Y_{k-M}^{(0)}$  to $Y_{k-1}^{(M-1)}$, uses the $M$ previous values of the parameter $\theta_{k-M}, \ldots ,\theta_{k-1}$.
This means that $Y_k^{(M)}$ does not correspond to the result of sampling with any given parameter $\theta$, since we are at the same time performing the sampling process and updating $\theta$.

Besides, the computation of $\Gamma(p, \theta)$ is the outcome of an iterative process, namely calling an ODE solver. Therefore, it is also possible to use the same queuing trick as for sampling iterations to decrease the cost of this step by leveraging parallelization. For completeness, the variant is given below.

\begin{algorithm}[H]
\caption{Implicit Diff. optimization, denoising diffusions with ODE sampling, variant with a double queue} \label{alg:implicit-diff-denoising-diff-double-queue}
\begin{algorithmic}
\INPUT $\theta_0 \in \R^p$, $p_0 \in \cP$
\INPUT $P_M = [Y_0^{(0)},\ldots, Y_0^{(M)}] \sim \mathcal{N}(0, 1)^{\otimes (m \times d)}$
\FOR{$k \in \{0, \dots, K-1\}$ (joint single loop)}
    \STATE $Y_{k+1}^{(0)} \sim \mathcal{N}(0, 1)$
    \STATE $Z_{k+1}^{(0)} \gets Y_k^{(M)}$
    \STATE $A_{k+1}^{(0)} \gets \nabla R (Z_{k+1}^{(0)})$
    \STATE $G_{k+1}^{(0)} \gets 0$
    \STATE \textbf{parallel} $Y_{k+1}^{(m+1)} \gets Y_k^{(m)} + \frac{1}{M} \mu(\frac{m T_\textnormal{horizon}}{M}, Y_k^{(m)}, \theta_k)$ for $m \in [M-1]$
    \STATE \textbf{parallel}  $Z_{k+1}^{(m+1)} \gets Z_k^{(m)} - \frac{1}{M} \mu(\frac{m T_\textnormal{horizon}}{M}, Z_k^{(m)}, \theta_k)$ for $m \in [M-1]$
    \STATE \textbf{parallel}  $A_{k+1}^{(m+1)} \gets A_k^{(m)} + \frac{1}{M} (A_k^{(m)})^\top \nabla_2 \mu(\frac{m T_\textnormal{horizon}}{M}, Z_k^{(m)}, \theta_k)$ for $m \in [M-1]$
    \STATE \textbf{parallel}  $G_{k+1}^{(m+1)} \gets G_k^{(m)} + \frac{1}{M} (G_k^{(m)})^\top \nabla_2 \mu(\frac{m T_\textnormal{horizon}}{M}, Z_k^{(m)}, \theta_k)$ for $m \in [M-1]$
    \STATE $\theta_{k+1} \gets \theta_k - \eta G_{k}^{(M)}$
\ENDFOR
\OUTPUT $\theta_K$
\end{algorithmic}
\end{algorithm}

Second, each variable $Y_k^{(m)}$ consists of a single sample of $\R^d$. The algorithm straightforwardly extends when each variable $Y_k^{(m)}$ is a batch of samples. Finally, we consider so far the case where the size of the queue $M$ is equal to the number of sampling steps $T$. We give below the variant of Algorithm \ref{alg:implicit-diff-denoising-diff} when $M \neq T$ but $M$ divides $T$. Taking $M$ from $1$ to $T$ balances between a single-loop and a nested-loop algorithm.

\begin{algorithm}[H]
\caption{Implicit Diff. optimization, denoising diffusions with ODE sampling, $M \neq T$, $M$ divides $T$} \label{alg:implicit-diff-denoising-diff-M-neq-T}
\begin{algorithmic}
\INPUT $\theta_0 \in \R^p$, $p_0 \in \cP$
\INPUT $P_M = [Y_0^{(0)},\ldots, Y_0^{(M)}] \sim \mathcal{N}(0, 1)^{\otimes (m \times d)}$
\FOR{$k \in \{0, \dots, K-1\}$}
    \STATE $Y_{k+1}^{(0)} \sim \mathcal{N}(0, 1)$
    \STATE \textbf{parallel} $Y_{k+1/2}^{(m+1)} \gets Y_k^{(m)}$ for $m \in [M-1]$
    \FOR{$t \in \{0, \dots, T/M-1\}$ \textbf{in parallel for $m \in [M-1]$}}
        \STATE $Y_{k+1/2}^{(m+1)} \gets Y_{k+1/2}^{(m)} + \frac{1}{T} \mu((\frac{m}{M} + \frac{t}{T}) T_\textnormal{horizon}, Y_{k+1/2}^{(m)}, \theta_k)$
    \ENDFOR
    \STATE \textbf{parallel} $Y_{k+1}^{(m+1)} \gets Y_{k+1/2}^{(m+1)}$ for $m \in [M-1]$
    \STATE $Z_k^{(0)} \gets Y_k^{(M)}$
    \STATE $A_k^{(0)} \gets \nabla R (Z_k^{(0)})$
    \STATE $G_k^{(0)} \gets 0$
    \FOR{$t \in \{0, \dots, T-1\}$}
        \STATE $Z_k^{(t+1)} \gets Z_k^{(t)} - \frac{1}{T} \mu(\frac{t T_\textnormal{horizon}}{T}, Z_k^{(t)}, \theta_k)$
        \STATE $A_k^{(t+1)} \gets A_k^{(t)} + \frac{1}{T} (A_k^{(t)})^\top \nabla_2 \mu(\frac{t T_\textnormal{horizon}}{T}, Z_k^{(t)}, \theta_k)$
        \STATE $G_k^{(t+1)} \gets G_k^{(t)} + \frac{1}{T} (G_k^{(t)})^\top \nabla_2 \mu(\frac{t T_\textnormal{horizon}}{T}, Z_k^{(t)}, \theta_k)$
    \ENDFOR
    \STATE $\theta_{k+1} \gets \theta_k - \eta G_k^{(T)}$
\ENDFOR
\OUTPUT $\theta_K$
\end{algorithmic}
\end{algorithm}

Algorithm \ref{alg:implicit-diff-denoising-diff-double-queue} extends to this case similarly.

\paragraph{SDE sampling.} The adjoint method is also defined in the SDE case \citep{li2020scalable}.
Consider the SDE
\begin{equation}    \label{eq:general-sde-sampling}
\ud Y_t = \mu(t, Y_t, \theta) \ud t + \sqrt{2} \ud B_t \, ,   
\end{equation}
integrated between $0$ and some $T>0$. This setting encompasses the denoising diffusion SDE~\eqref{eq:diffusionback} with the appropriate choice of $\mu$. For some differentiable function $R: \R^d \to \R$ and for a \textit{given realization} $(Y_t)_{0 \leq t \leq T}$ of the SDE, the derivative of $R(Y_T)$ with respect to~$\theta$ is equal to $G_T$ defined by
\begin{equation}    \label{eq:adjoint-sde}
\begin{alignedat}{3}
    A_0 &= \nabla R(Y_T) \, , \qquad &&\ud A_t = A_t^\top \nabla_2 \mu(T-t, Y_{T-t}, \theta) \ud t \, , \\
    G_0 &= 0 \, , \qquad &&\ud G_t= A_t^\top \nabla_3 \mu(T-t, Y_{T-t},\theta) \ud t \, .
\end{alignedat}
\end{equation}
This is a similar equation as in the ODE case. The main difference is that it is not possible to recover $Y_{T-t}$ only from the terminal value of the path $Y_T$, but that we need to keep track of the randomness from the Brownian motion $B_t$. Efficient ways to do so are presented in \cite{li2020scalable,kidger2021efficient}. In a nutshell, they consist in only keeping in memory the seed used to generate the Brownian motion, and recomputing the path from the seed. 

Using the SDE sampler allows us to incorporate a KL term in the reward. Indeed, consider the SDE \eqref{eq:general-sde-sampling} for two different parameters $\theta_1$ and $\theta_2$, with associated variables $Y_t^1$ and $Y_t^2$. Then, by Girsanov's theorem (see \citealp[][Chapter III.8]{protter2005stochastic}, and \citealp{tzen2019theoretical} for use in a similar context), the KL divergence between the paths $Y_t^1$ and $Y_t^2$ is 
\begin{equation}    \label{eq:kl-girsanov}
\KL((Y_t^1)_{t \geq 0} \, || \, (Y_t^2)_{t \geq 0}) = \int_0^T \E_{y \sim q_t^1}\|\mu(t, y, \theta_1) - \mu(t, y, \theta_2)\|^2 \ud t \, ,    
\end{equation}
where $q_t^1$ denotes the distribution of $Y_t^1$. This term can be (stochastically) estimated at the same time as the SDE \eqref{eq:general-sde-sampling} is simulated, by appending a new coordinate to $Y_t$ (and to $\mu$) that integrates \eqref{eq:kl-girsanov} over time. Then, adding the KL in the reward is as simple as adding a linear term in $\tilde{R}: \R^{d+1} \to \R$, that is, $\tilde{R}(x) = R(x[:-1]) + x[-1]$ (using Numpy notation, where `$-1$' denotes the last index). The same idea is used in \cite{dvijotham2023algorithms} to incorporate a KL term in reward finetuning of denoising diffusion models. Finally, note that, if we had at our disposal a reward $R$ that indicates if an image is ``close'' to $p_{\textnormal{data}}$ (for instance implemented by a neural network), we could use our algorithm to train from scratch a denoising diffusion.

\subsection{Implicit differentiation}   \label{apx:implicit-differentiation}

\paragraph{Finite dimension.}
Take $g: \R^m \times \R^p \to \R$ a continuously differentiable function. Then $x^\star(\theta) = \argmin g(\cdot, \theta)$ %
implies a stationary point condition $\nabla_1 g(x^\star(\theta_0), \theta_0) = 0$. In this case, it is possible to define and analyze the function $x^\star: \R^p \to \R^m$ and its variations. Note that this generalizes to the case where $x^\star(\theta)$ can be written as the root of a parameterized system.

More precisely, the \textbf{implicit function theorem}
\citep[see, e.g.,][and references therein]{griewank_2008,krantz2002implicit} can be applied. Under differentiability assumptions on $g$, for $(x_0, \theta_0)$ such that $\nabla_1 g(x_0,
\theta_0) = 0$ with a continuously differentiable $\nabla_1 g$, and if the Hessian
$\nabla_{1, 1} g$ evaluated at $(x_0, \theta_0)$ is a square invertible matrix, then there exists a function $x^\star(\cdot)$ over a neighborhood of
$\theta_0$ satisfying $x^\star(\theta_0) = x_0$. Furthermore, for all $\theta$ in this
neighborhood, we have that $\nabla_1 g(x^\star(\theta), \theta) = 0$ and
its Jacobian $\partial x^\star(\theta)$ exists. It is then possible to differentiate with respect to $\theta$ both sides of the equation $\nabla_1 g(x^\star(\theta_0), \theta_0) = 0$, which yields a linear equation satisfied by this Jacobian
\[
\nabla_{1, 1} g(x^\star(\theta_0), \theta_0) \partial x^\star(\theta) + \nabla_{1, 2} g(x^\star(\theta_0), \theta_0) = 0\, .
\]
This formula can be used for automatic implicit differentiation, when both the evaluation of the derivatives in this equation and the inversion of the linear system can be done automatically \cite{blondel2022efficient}.

\paragraph{Extension to space of probabilities.}
When $\cG: \cP \times \R^p \to \R$ and $\pi^\star(\theta) = \argmin \cG(\cdot, \theta)$ as in \eqref{eq:inner-sampling}, under assumptions on differentiability and uniqueness of the solution on $\cG$, this can also be extended to a distribution setting. We write here the infinite-dimensional equivalent of the above equations, involving derivatives or variations over the space of probabilities, and refer to \cite{ambrosio2005gradient} for more details.

First, we have that
\[
\nabla \ell(\theta) = \nabla_\theta \big(\cF(\pi^\star(\theta)\big) = \int \cF'(p)[x] \nabla_\theta \pi^\star(\theta)[x] \ud x \, , 
\]
where $\cF'(p):\cX \to  \R$ denotes the first variation of $\cF$ at $p\in \cP$ (see \Cref{def:first_var}). This yields $
\nabla \ell(\theta)
= \Gamma(\pi^\star(\theta), \theta)$ with %
\[
\Gamma(p, \theta) = \int \cF'(p)[x] \gamma(p, \theta)[x] \ud x \, .
\]
where $\gamma(p, \theta)$ is the solution of the linear system
\[
\int \nabla_{1, 1} \cG(p, \theta)[x, x'] \gamma(p, \theta)[x'] \ud x' = - \nabla_{1, 2} \cG(p, \theta) [x]\, ,
\]
Although this gives us a general way to define gradients of $\pi^\star(\theta)$ with respect to $\theta$, solving this linear system is generally not feasible. One exception is when sampling over a finite state space $\cX$, in which case~$\cP$ is finite-dimensional, and the integrals boil down to matrix-vector products. 

\section{THEORETICAL ANALYSIS}  \label{apx:proofs}

\subsection{Langevin with continuous flow} 

\subsubsection{Additional definitions}\label{sec:details_math}
\textbf{Notations.} %
We denote by 
$\cP_2(\R^d)$ the set of probability measures on $\R^d$ with bounded second moments. 
Given a Lebesgue measurable map $ T: X\to X$ and $\mu\in \cP_2(X)$, $ T_{\#}\mu$ is the pushforward measure of $\mu$ by $T$.
 For any $\mu \in \cP_2(\X)$, $L^2(\mu)$ is the space of functions $f : \X \to \R$ such that $\int \|f\|^2 d\mu < \infty$. 
 We denote by $\Vert \cdot \Vert_{L^2(\mu)}$ and $\ps{\cdot,\cdot}_{L^2(\mu)}$ respectively the norm and the inner product of the Hilbert space $L^2(\mu)$. 
We consider, for $\mu,\nu \in \cP_2(\X)$, the 2-Wasserstein distance $W_2 (\mu, \nu) = \inf_{s \in \mathcal{S}(\mu,\nu)} \int \|x-y\|^2 ds(x,y)$, where $\mathcal{S}(\mu,\nu)$ is the set of couplings between $\mu$ and $\nu$. The metric space
$(\cP_2(\X), W_2)$ is called the Wasserstein space.

Let $\cF:\cP(\R^d) \to \R^+$ a functional. 
\begin{definition}\label{def:first_var}
Fix $\nu\in\cP(\R^d)$. If it exists, the \textit{first variation of $\cF$ at $\nu$} is the  function $ \cF'(\nu):\X \rightarrow \R$ s. t. for any $\mu \in \mathcal{P}(\X)$, with $\xi = \mu-\nu$:
\begin{equation*}
\lim_{\epsilon \rightarrow 0}\frac{1}{\epsilon}(\cF(\nu+\epsilon  \xi) -\cF(\nu))%
=\int_{\R^d} \cF'(\nu)(x)d \xi(x),
\end{equation*} 
and is defined uniquely up to an additive constant.
\end{definition}
We will extensively apply the following formula:
\begin{equation}    \label{eq:chain-rule}
    \frac{\ud \cF(p_t)}{\ud t}= \int \cF'(p_t)\frac{\partial p_t}{\partial t} = \int \cF'(p_t)[x]\frac{\partial p_t[x]}{\partial t}\ud x.
\end{equation}

We will also rely regularly on the definition of a Wasserstein gradient flow, since Langevin dynamics correspond to a Wasserstein gradient flow of the Kullback-Leibler (KL) divergence \cite{jordan1998variational}. A Wasserstein gradient flow of $\cF$ \cite{ambrosio2005gradient} can be described by the following continuity equation: 
\begin{equation}\label{eq:wgf}
    \frac{\partial \mu_t}{\partial t} = \div(\mu_t \nabla_{W_2}\cF(\mu_t)),\quad\nabla_{W_2}\cF(\mu_t)=\nabla \cF'(\mu_t),
\end{equation}
where $\cF'$ denotes the first variation. Equation \eqref{eq:wgf} holds in the sense of distributions (i.e. the equation above holds when integrated against a smooth function with compact support), see \cite[][Chapter 8]{ambrosio2005gradient}. In particular, if $\cF=\KL(\cdot|\pi)$ for $\pi \in \cP_2(\R^d)$, then $\nabla_{W_2}\cF(\mu) = \nabla \log(\nicefrac{\mu}{\pi})$. In this case, the corresponding continuity equation is known as the Fokker-Planck equation, and in particular it is known that the law $p_t$ of Langevin dynamics:
\begin{equation*}
    \ud X_t = \nabla \log(\pi(X_t)) \ud t+\sqrt{2}\ud B_t
\end{equation*}
satisfies the Fokker-Planck equation \citep[Chapter 3]{pavliotis2016stochastic}.

\subsubsection{Gaussian mixtures satisfy the Assumptions}     \label{apx:gaussian-mixture}

We begin by a more formal statement of the result alluded to in Section \ref{subsec:theory-langevin-continuous}. 
\begin{proposition}
\label{pro:gaussian-mixture-2}
Let
\begin{equation*}   %
V(x, \theta) := - \log \Big( \sum_{i=1}^p H(\theta_i) \exp(-\|x-z_i\|^2)\Big),    
\end{equation*}
for some fixed $z_1, \dots, z_p \in \R^d$ and where  
$$H(x) := \eta + (1 - \eta) \cdot \frac{1}{1 + e^{-x}}$$
is a shifted version of the logistic function for some $\eta \in (0, 1)$.
Then Assumptions \ref{ass:potential_log_sobolev} and \ref{ass:gradient_bounded} hold. 
\end{proposition}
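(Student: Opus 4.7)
The plan is to verify Assumption \ref{ass:gradient_bounded} by a direct calculation and Assumption \ref{ass:potential_log_sobolev} by reducing to a $\theta$-independent reference measure via the Holley--Stroock perturbation principle, then invoking a known LSI for equal-weight mixtures of Gaussians with a common covariance.

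First, for Assumption \ref{ass:gradient_bounded}, I would differentiate directly to obtain
\[
\nabla_{\theta_j} V(x,\theta) = -\frac{H'(\theta_j)\, e^{-\|x-z_j\|^2}}{\sum_{i=1}^p H(\theta_i)\, e^{-\|x-z_i\|^2}}.
\]
Bounding the denominator below by the $i=j$ term yields a ratio at most $1/H(\theta_j) \leq 1/\eta$. Since $H'(\theta_j) = (1-\eta) e^{-\theta_j}/(1+e^{-\theta_j})^2$ is a bell-shaped function bounded above by $(1-\eta)/4$ uniformly in $\theta_j$, we obtain $|\nabla_{\theta_j} V(x,\theta)| \leq (1-\eta)/(4\eta)$, and hence $\|\nabla_2 V(x,\theta)\| \leq \sqrt{p}(1-\eta)/(4\eta) =: C$, as required.

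Next, observe that $\pi^\star(\theta)$ is a mixture of Gaussians with fixed means $z_i$, common covariance $\tfrac{1}{2} I$, and weights $H(\theta_i)/\sum_j H(\theta_j) \in [\eta/p,\,1/(p\eta)]$, bounded away from $0$ and infinity uniformly in $\theta$. To separate the $\theta$-dependence from the spatial structure, I would introduce the equal-weight reference $\pi_0 \propto e^{-U_0}$ with $U_0(x) := -\log \sum_i e^{-\|x-z_i\|^2}$, and rewrite
\[
V(x,\theta) - U_0(x) \;=\; -\log \sum_i H(\theta_i)\, w_i(x),
\qquad w_i(x) := \frac{e^{-\|x-z_i\|^2}}{\sum_j e^{-\|x-z_j\|^2}}.
\]
Since $(w_i(x))_i$ is a convex-combination vector and $H(\theta_i) \in [\eta, 1]$, the right-hand side is uniformly bounded with values in $[0, -\log \eta]$. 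The Holley--Stroock perturbation principle then transfers any LSI constant for $\pi_0$ to $\pi^\star(\theta)$, losing only a multiplicative factor depending on $\eta$ but \emph{not} on $\theta$.

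The main obstacle, and the only step that is not an elementary calculation, is establishing the LSI for $\pi_0$ itself. Because $U_0$ is strongly convex only outside a compact set when the $z_i$ are spread out, and is not a bounded perturbation of any single Gaussian, Bakry--\'Emery does not apply directly. I would invoke existing results on log-Sobolev inequalities for equal-weight mixtures of Gaussians with a common covariance, whose constants depend only on the number of components $p$ and on the diameter of the set $\{z_1,\ldots,z_p\}$. Combining this with the Holley--Stroock bound above yields a uniform LSI constant $\mu > 0$ for $\pi^\star(\theta)$ independent of $\theta$, completing the verification of Assumption \ref{ass:potential_log_sobolev}.
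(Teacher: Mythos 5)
Your proof is correct and, on the LSI part, it takes a genuinely different (and somewhat more careful) route than the paper.

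For Assumption~\ref{ass:gradient_bounded}, your calculation is right, but the paper's version is slightly cleaner: it bounds the $\ell_1$ norm of $\nabla_2 V$ directly as
$\|\nabla_2 V(x,\theta)\|_1 = \sum_j H'(\theta_j)e^{-\|x-z_j\|^2}/\sum_i H(\theta_i)e^{-\|x-z_i\|^2} \le 1/\eta$
(using only $H'\le 1$, $H\ge\eta$), which is dimension-free and automatically dominates the $\ell_2$ norm. Your per-coordinate bound followed by multiplying by $\sqrt{p}$ is a valid but looser alternative; the tighter estimate $H'\le(1-\eta)/4$ is then spent undoing the factor you lose by going coordinatewise.

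For Assumption~\ref{ass:potential_log_sobolev}, the paper simply invokes a known LSI for Gaussian mixtures with arbitrary weights \citep[Corollary~1]{chen2021dimension} applied directly to $\pi^\star(\theta)$. You instead decompose $V(x,\theta)=U_0(x)+\big(V(x,\theta)-U_0(x)\big)$ with $U_0$ the equal-weight log-density, observe that
$V-U_0=-\log\sum_i H(\theta_i)w_i(x)\in[0,-\log\eta]$
uniformly in $(x,\theta)$, and apply Holley--Stroock to transfer an LSI from $\pi_0\propto e^{-U_0}$ to every $\pi^\star(\theta)$ at the cost of a factor $1/\eta$. Both arguments ultimately rest on the same kind of external input (an LSI for a Gaussian mixture on a fixed set of centers); the paper uses it with general weights, you with equal weights. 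What your route buys is an \emph{explicit} demonstration that the constant $\mu$ can be chosen independent of $\theta$, which Assumption~\ref{ass:potential_log_sobolev} actually requires (``for all $t\ge 0$'') and which the paper's one-line citation leaves implicit (the constant in \citealp{chen2021dimension} does depend on the mixture weights, so one still needs the observation that the normalized weights $H(\theta_i)/\sum_j H(\theta_j)$ live in $[\eta/p,\,1/(p\eta)]$ to conclude uniformity). Your argument makes that uniformity transparent at the cost of one extra step.
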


\begin{proof}
    Assumption \ref{ass:potential_log_sobolev} holds since a mixture of Gaussians is Log-Sobolev with a bounded constant \citep[Corollary~1]{chen2021dimension}. Note that the constant deteriorates as the modes of the mixture get further apart.

Furthermore, Assumption \ref{ass:gradient_bounded} holds since, for all $\theta \in \R^p$ and $x \in \R^d$,
\[
\|\nabla_2 V(x, \theta)\|_1 = \frac{\sum_{i=1}^p H'(\theta_i) \exp(-\|x-z_i\|^2)}{\sum_{i=1}^p H(\theta_i) \exp(-\|x-z_i\|^2)} \leq \frac{\sum_{i=1}^p \exp(-\|x-z_i\|^2)}{\sum_{i=1}^p \eta \exp(-\|x-z_i\|^2)} = \frac{1}{\eta} \, .
\]
\end{proof}

\subsubsection{Proof of Proposition \ref{prop:verif-ass-gamma-lip}}
In the case of the functions $\Gamma$ defined by \eqref{eq:gamma-1}--\eqref{eq:gamma-2}, we see that $\Gamma$ is bounded under Assumption \ref{ass:gradient_bounded} and when the reward $R$ is bounded. The Lipschitz continuity can be obtained as follows. Consider for instance the case of \eqref{eq:gamma-2} where $\Gamma(p, \theta)$ is given by
\[
\Gamma_{\textnormal{ref}}(p, \theta) = \E_{X \sim p_{\text{ref}}}[\nabla_2 V(X, \theta)] - \E_{X \sim p}[\nabla_2 V(X, \theta)] \, .
\]
Then
\begin{align*}
\|\Gamma_{\textnormal{ref}}(p, \theta) - \Gamma_{\textnormal{ref}}(q, \theta)\|
&= \|\E_{X \sim q}[\nabla_2 V(X, \theta)] - \E_{X \sim p}[\nabla_2 V(X, \theta)]\| \\
&\leq C \TV(p, q) \\
&\leq \frac{C}{\sqrt{2}} \sqrt{\KL(p || q)} \, ,
\end{align*}
where the first inequality comes from the fact that the total variation distance is an integral probability metric generated by the set of bounded functions, and the second inequality is Pinsker's inequality \citep[][Lemma 2.5]{tsybakov2009introduction}. 
The first case of Section \ref{subsec:apx:joint-optimization-algos} unfolds similarly.

\subsubsection{Proof of Theorem \ref{thm:langevin-continuous}}
The dynamics \eqref{eq:langevin-theory} can be rewritten equivalently on $\cP_2(\R^d)$ and $\R^p$ as
\begin{align}
    \frac{\partial p_t}{\partial t} &= \div(p_t \nabla_{W_2} \cG(p_t, \theta_t)) \label{eq:langevin-inner-proof} \\
    \ud \theta_t &= - \varepsilon_t \Gamma (p_t, \theta_t) \ud t \, ,  \label{eq:langevin-outer-proof}
\end{align}
where $\cG(p, \theta) = \KL(p \, || \, \pi_\theta^*)$, see \Cref{sec:details_math}.
The Wasserstein gradient in \eqref{eq:langevin-inner-proof} is taken with respect to the first variable of $\cG$. 

\paragraph{Evolution of the loss.} Recall that $\Gamma$ satisfies by Definition \ref{def:implicit-gradient-estimation} that $\nabla \ell(\theta) =\Gamma (\pi^\star(\theta), \theta)$. Thus we have, by \eqref{eq:langevin-outer-proof},
\begin{align*}
    \frac{\ud \ell}{\ud t}(t) 
    &= \Big\langle \nabla \ell(\theta_t), \frac{\ud\theta_t}{\ud t} \Big\rangle \\
    &= - \varepsilon_t \langle \nabla \ell(\theta_t), \Gamma (p_t, \theta_t) \rangle \\
    &= - \varepsilon_t \langle \nabla \ell(\theta_t), \Gamma (\pi^\star(\theta_t), \theta_t) \rangle + \varepsilon_t \langle \nabla \ell(\theta(t)), \Gamma (\pi^\star(\theta_t), \theta_t) - \Gamma (p_t, \theta_t) \rangle \\
    &\leq - \varepsilon_t \|\nabla \ell(\theta_t)\|^2 + \varepsilon_t \|\nabla \ell(\theta_t)\| \|\Gamma (\pi^\star(\theta_t), \theta_t) - \Gamma (p_t, \theta_t)\| \, .
\end{align*} 
Then, by \Cref{ass:Gamma_Lipschitz},
\begin{equation*}
\frac{\ud \ell}{\ud t}(t) \leq - \varepsilon_t \|\nabla \ell(\theta_t)\|^2 + \varepsilon_t K_\Gamma \|\nabla \ell(\theta_t)\| \sqrt{\KL(p_t \, || \, \pi^\star(\theta_t) )} \, .
\end{equation*}
Using $ab \leq \frac{1}{2}(a^2 + b^2)$, we get
\begin{equation} \label{eq:evolution-loss-langevin}
\frac{\ud \ell}{\ud t}(t) \leq - \frac{1}{2} \varepsilon_t \|\nabla \ell(\theta_t)\|^2 + \frac{1}{2} \varepsilon_t K_\Gamma^2 \KL(p_t \, || \, \pi^\star(\theta_t)) \, ,
\end{equation}

\paragraph{Bounding the KL divergence of $p_t$ from $\pi^*(\theta_t)$.} Recall that
\[
\KL(p_t\,||\,\pi^\star(\theta_t)) = \int \log\left(\frac{p_t}{\pi^\star(\theta_t)}\right) p_t \, .
\]
Thus, by the chain rule formula \eqref{eq:chain-rule},
\begin{equation*}
    \frac{\ud\KL(p_t\,||\,\pi^\star(\theta_t))}{\ud t}=\int \log\left(\frac{p_t}{\pi^\star(\theta_t)}\right) \frac{\partial p_t}{\partial t} - \int \frac{p_t}{\pi^\star(\theta_t)}\frac{\partial \pi^\star(\theta_t)}{\partial t}:= a - b \, .
\end{equation*}
From an integration by parts, using \eqref{eq:langevin-inner-proof} and by \Cref{ass:potential_log_sobolev}, we have 
\begin{multline*}
    a= \int \log\left(\frac{p_t}{\pi^\star(\theta_t)}\right) \frac{\partial p_t}{\partial t} =\int \log\left(\frac{p_t}{\pi^\star(\theta_t)}\right) \div(p_t \nabla \log\left(\frac{p_t}{\pi^\star(\theta_t)} \right))\\
   = \left\langle \nabla \log\left(\frac{p_t}{\pi^\star(\theta_t)} \right), -\nabla \log\left(\frac{p_t}{\pi^\star(\theta_t)} \right) \right \rangle_{L^2(p_t)} = - \left \Vert \nabla \log\left(\frac{p_t}{\pi^\star(\theta_t)} \right)  \right \Vert^2_{L^2(p_t)} \\
   \le - 2\mu \KL(p_t \, || \, \pi^\star(\theta_t)) \, .
\end{multline*}
Moving on to $b$, we have
\begin{equation*}
    b = \int \frac{p_t}{\pi^\star(\theta_t)}\frac{\partial \pi^\star(\theta_t)}{\partial t} = \int p_t \frac{\partial \log(\pi^\star(\theta_t))}{\partial t} \, .
\end{equation*}
By the chain rule and \eqref{eq:langevin-outer-proof}, we have for $x \in \cX$
\[
    \frac{\partial \pi^\star(\theta_t)}{\partial t}[x] 
    = \left \langle \frac{\partial \pi^\star(\theta_t)}{\partial \theta}[x],\frac{d\theta_t}{\ud t}\right \rangle
    = \left \langle \frac{\partial \pi^\star(\theta_t)}{\partial \theta}[x],-\varepsilon_t \Gamma(p_t, \theta_t)\right \rangle \, .
\]
Using $\pi^\star(\theta)\propto e^{-V(\theta,\cdot)}$ (with similar computations as for $\nabla \ell_{\text{ref}}$ in Section \ref{subsec:apx:joint-optimization-algos}), we have
\[
\frac{\partial \pi^\star(\theta_t)}{\partial t}[x] =  \left  \langle -\nabla_2 V(x, \theta_t)\pi^\star(\theta_t)[x] + \E_{X \sim \pi^\star(\theta_t)}(\nabla_2 V(X, \theta_t))\pi^\star(\theta_t)[x],-\varepsilon_t \Gamma(p_t, \theta_t)\right \rangle, 
\]
and
\[
\frac{\partial \log(\pi^\star(\theta_t)) }{\partial t} = \varepsilon_t\ps{\nabla_2 V(\cdot, \theta_t) - \E_{X \sim \pi^\star(\theta_t)}(\nabla_2 V(X, \theta_t)), \Gamma(p_t, \theta_t)}.
\]
This yields 
\begin{align*}
    |b| &=  \bigg| \varepsilon_t \int \ps{\nabla_2 V(x, \theta_t)- \E_{X \sim \pi^\star(\theta_t)}(\nabla_2 V(X, \theta_t)), \Gamma(p_t, \theta_t)} p_t[x]\ud x \bigg| \\
    &=  \varepsilon_t \bigg|\left \langle \int \nabla_2 V(x, \theta_t) dp_t[x] - \E_{X \sim \pi^\star(\theta_t)}(\nabla_2 V(X, \theta_t)),\Gamma(p_t, \theta_t)\right \rangle\bigg| \\
    &\le \varepsilon_t \| \Gamma(p_t, \theta_t)\|_{\R^p} \big(\|\nabla_2 V(\cdot, \theta_t)\|_{L^2(p_t)} + \|\nabla_2 V(\cdot, \theta_t)\|_{L^2(\pi^\star(\theta_t))}\big) \\
    &\leq 2C^2 \varepsilon_t \, ,
\end{align*}
where the last step uses Assumptions \ref{ass:gradient_bounded} and \ref{ass:Gamma_Lipschitz}. Putting everything together, we obtain
\[
\frac{d\KL(p_t \, || \, \pi^\star(\theta_t))}{\ud t} \leq - 2\mu \KL(p_t \, || \, \pi^\star(\theta_t)) + 2 C^2 \varepsilon_t.
\]
Using Grönwall's inequality \citep{pachpatte1997inequalities} to integrate the inequality, the KL divergence can be bounded by
\[
\KL(p_t \, || \, \pi^\star(\theta_t)) \leq \KL(p_0 \, || \, \pi^\star(\theta_0)) e^{-2\mu t} + 2 C^2 \int_0^t \varepsilon_s e^{2 \mu (s-t)} \ud s \, .
\]
Coming back to \eqref{eq:evolution-loss-langevin}, we get
\[
\frac{\ud \ell}{\ud t}(t) \leq - \frac{1}{2} \varepsilon_t \|\nabla \ell(\theta_t)\|^2 + \frac{1}{2} \varepsilon_t K_\Gamma^2 \Big(\KL(p_0 \, || \, \pi^\star(\theta_0)) e^{-2\mu t} + 2 C^2 \int_0^t \varepsilon_s e^{2 \mu (s-t)} \ud s\Big) \, .
\]
Integrating between $0$ and $T$, we have
\begin{align*}
\ell(T) - \ell(0) &\leq  -\frac{1}{2} \int_0^T \varepsilon_t \|\nabla \ell(\theta_t)\|^2 \ud t + \frac{K_\Gamma^2 \KL(p_0 \, || \, \pi^\star(\theta_0))}{2} \int_0^T \varepsilon_t e^{-2\mu t} \ud t \\
&\qquad+ K_\Gamma^2 C^2 \int_0^T \int_0^t \varepsilon_t \varepsilon_s e^{2 \mu (s-t)} \ud s \ud t \, .
\end{align*}
Since $\varepsilon_t$ is decreasing, we can bound $\varepsilon_t$ by $\varepsilon_T$ in the first integral and rearrange terms to obtain
\begin{align}    \label{eq:upper-bound-grads}
\begin{split}
\frac{1}{T} \int_0^T \|\nabla \ell(\theta_t)\|^2 \ud t &\leq \frac{2}{T \varepsilon_T} (\ell(0) - \inf \ell) + \frac{K_\Gamma^2 \KL(p_0 \, || \, \pi^\star(\theta_0))}{T \varepsilon_T} \int_0^T \varepsilon_t e^{-2\mu t} \ud t  \\
&\qquad+ \frac{2 K_\Gamma^2 C^2}{T \varepsilon_T} \int_0^T \int_0^t \varepsilon_t \varepsilon_s e^{2 \mu (s-t)} \ud s \ud t  \, .     
\end{split}
\end{align}
Recall that, by assumption of the Theorem, $\varepsilon_t = \min(1, \frac{1}{\sqrt{t}})$. Thus $T \varepsilon_T = \sqrt{T}$, and the first term is bounded by a constant times $T^{-1/2}$. It is also the case of the second term since $\int_0^T \varepsilon_t e^{-2\mu t} \ud t$ is converging. 
Let us now estimate the magnitude of the last term. Let $T_0 \geq 2$ (depending only on $\mu$) such that $\frac{\ln(T_0)}{2 \mu} \leq \frac{T_0}{2}$. For $t \geq T_0$, let $\alpha(t) := t - \frac{\ln t}{2 \mu}$. We have, for $t \geq T_0$,
\begin{align*}
\int_0^t \varepsilon_s e^{2 \mu (s-t)}ds &= \int_0^{\alpha(t)} \varepsilon_s e^{2 \mu (s-t)}ds + \int_{\alpha(t)}^t \varepsilon_s e^{2 \mu (s-t)}ds \\
&\leq \varepsilon_0 e^{-2 \mu t} \int_0^{\alpha(t)} e^{2 \mu s}ds + (t - \alpha(t)) \varepsilon_{\alpha(t)} \\
&\leq \frac{\varepsilon_0}{2 \mu}e^{2 \mu(\alpha(t) - t)} + \frac{\varepsilon_{\alpha(t)} \ln t}{2 \mu} \\
&\leq \frac{\varepsilon_0}{2 \mu t} + \frac{\varepsilon_{t/2} \ln t}{2 \mu} \, ,
\end{align*}
where in the last inequality we used that $\alpha(t) \geq t/2$ and $\varepsilon_t$ is decreasing. 
For $t < T_0$, we can simply bound the integral $\int_0^t \varepsilon_s e^{\mu (s-t)}ds$ by $\varepsilon_0 T_0$. We obtain
\[
\int_0^T \int_0^t \varepsilon_t \varepsilon_s e^{2 \mu (s-t)} \ud s \leq \int_0^{T_0} \varepsilon_t \varepsilon_0 T_0 \ud t + \int_{T_0}^T \frac{\varepsilon_t \varepsilon_0}{2 \mu t} + \frac{\varepsilon_t \varepsilon_{t/2} \ln t}{2 \mu} \ud t \, .
\]
Recall that $\varepsilon_t = \min(1, \frac{1}{\sqrt{t}})$, and that $T_0 \geq 2$. Thus
\begin{align*}
\int_0^T \int_0^t \varepsilon_t \varepsilon_s e^{2 \mu (s-t)} \ud s \leq \int_0^{T_0} \varepsilon_0 T_0 \ud t + \frac{\varepsilon_0}{2 \mu} \int_{T_0}^T \frac{\varepsilon_t}{t} \ud t + \frac{\ln T}{2 \mu} \int_{2}^T \varepsilon_t \varepsilon_{t/2} \ud t \, .    
\end{align*}
The first two integrals are converging when $T \to \infty$ and the last integral is $\mathcal{O}(\ln T)$. Plugging this into~\eqref{eq:upper-bound-grads}, we finally obtain the existence of a constant $c > 0$ such that
\[
\frac{1}{T} \int_0^T \|\nabla \ell(\theta_t)\|^2 \ud t \leq \frac{c (\ln T)^2}{T^{1/2}} \, .
\]

\subsection{Langevin with discrete flow--proof of Theorem \ref{thm:langevin-discrete}}
We take 
\begin{equation}    \label{eq:def-gamma-k-epsilon-k}
\gamma_k = \frac{1}{k^{1/3}} \min \Big(\frac{1}{L_X}, \frac{1}{\sqrt{L_\Theta}}, \frac{1}{\mu}, \frac{\mu}{4 L_X^2} \Big) \quad \textnormal{and} \quad \varepsilon_k = \frac{1}{k^{1/3}}.    
\end{equation}

\paragraph{Bounding the KL divergence of $p_{k+1}$ from $\pi^\star(\theta_{k+1})$.} Recall that $p_k$ is the law of $X_k$. 
We leverage similar ideas to the proof of \cite[Theorem~3]{cb-clmcmckld-18}, exploiting the Log Sobolev inequality to bound the KL along one Langevin Monte Carlo iteration (an approach that was further streamlined in \cite[Lemma~3]{vempala2019rapid}). 
The starting point is to notice that one Langevin Monte Carlo iteration can be equivalently written as a continuous-time process over a small time interval $[0, \gamma_k]$. More precisely, let 
\[
\rho_0:=p_k\,, \quad x_0 \sim \rho_0\,,
\]
and $x_t$ satisfying the SDE
\[
\ud x_t =  -  \nabla_1V(x_0,\theta_k)\ud t+\sqrt{2}\ud B_t \,.
\]
Then, following the proof of \cite[Theorem~3]{cb-clmcmckld-18}, $p_{k+1}$ has the same distribution as the output at time $\gamma := \gamma_k$ of the continuity equation
\begin{equation*}
    \frac{\partial \rho_t}{\partial t}[x] = \div(\rho_t[x](\E_{\rho_{0|t}}[\nabla_1 V(x_k,\theta_k)|x_t = x] +\nabla \log \rho_t))
\end{equation*}
where $\rho_{0|t}$ is the conditional distribution of $x_0$ given $x_t$. Similarly, $\theta_{k+1}$ is equal to the output at time $\gamma$ of 
\begin{equation}    \label{eq:evolution-theta-discrete}
    \vartheta_{t} := \theta_k - t \varepsilon_k \Gamma(\mu_k,\theta_k).
\end{equation}
We have:
\begin{equation*}
    \frac{d\KL(\rho_t\,||\,\pi^\star(\vartheta_t))}{\ud t}=\int \log\left(\frac{\rho_t}{\pi^\star(\vartheta_t)}\right) \frac{\partial \rho_t}{\partial t} + \int \frac{\rho_t}{\pi^\star(\vartheta_t)}\frac{\partial \pi^\star(\vartheta_t)}{\partial t}:= a+b.
\end{equation*}
We first bound $b$ similarly to the proof of Theorem \ref{thm:langevin-continuous}, under Assumptions \ref{ass:gradient_bounded} and \ref{ass:Gamma_Lipschitz}:
\begin{align*}
    b =  \varepsilon_k \int \ps{\nabla_2 V(x, \vartheta_t)- \E_{X \sim \pi^\star(\vartheta_t)}(\nabla_2 V(X, \vartheta_t)), \Gamma(\mu_k, \theta_k)} p_t[x]\ud x \le 2 \varepsilon_k C^2.
\end{align*}
Then we write $a$ as
\begin{align*}
    a &= \int \log(\frac{\rho_t[x]}{\pi^\star(\vartheta_t)[x]})\div(\rho_t [x](\E_{\rho_{0|t}}[\nabla_1 V(x_0,\theta_k)|x_t = x] +\nabla \log \rho_t[x])) \ud x \\
     &= -\int \rho_t(x) \bigg \langle \nabla \log(\frac{\rho_t[x]}{\pi^\star(\vartheta_t)[x]}), \E_{\rho_{0|t}}[\nabla_1 V(x_0,\theta_k)|x_t = x] + \nabla \log \rho_t[x] \bigg \rangle \ud x \\
    & = -\int \rho_t(x) \bigg \langle \nabla \log(\frac{\rho_t[x]}{\pi^\star(\vartheta_t)[x]}), \E_{\rho_{0|t}}[\nabla_1 V(x_0,\theta_k)|x_t = x] -\nabla_1 V(x,\theta_k) +\nabla_1 V(x,\theta_k) \\ 
    & \qquad \qquad \qquad \qquad \qquad \qquad \qquad \qquad \qquad \qquad 
    - \nabla_1 V(x,\vartheta_t) +\nabla \log \left(\frac{\rho_t[x]}{\pi^\star(\vartheta_t)[x]}\right)\bigg \rangle \ud x\\
    & = -\int \rho_t \Big\|\nabla \log(\frac{\rho_t}{\pi^\star(\vartheta_t)})\Big\|^2 \\
    &\qquad+ \int \rho_t[x] \bigg \langle \nabla \log \left(\frac{\rho_t[x]}{\pi^\star(\vartheta_t)[x]}\right), \nabla_1 V(x,\theta_k) - \E_{\rho_{0|t}}[\nabla_1 V(x_0,\theta_k)|x_t = x]\bigg \rangle \ud x \\
    &\qquad+ \int \rho_t[x] \bigg \langle \nabla \log \left(\frac{\rho_t[x]}{\pi^\star(\vartheta_t)[x]}\right), \nabla_1 V(x,\vartheta_t) - \nabla_1 V(x,\theta_k)\bigg \rangle \ud x \\
    &=: a_1+a_2 + a_3.
\end{align*}
Denote the Fisher divergence by
\[
\FD(\rho_t\,||\,\pi^\star(\vartheta_t)) := \int \rho_t \Big\|\nabla \log(\frac{\rho_t}{\pi^\star(\vartheta_t)})\Big\|^2 \, .
\]
The first term $a_1$ is equal to $-\FD(\rho_t\,||\,\pi^\star(\vartheta_t))$. To bound the second term $a_2$, denote $\rho_{0t}$ the joint distribution of $(x_0, x_t)$. Then
\begin{align*}
    a_2 &= \int \rho_{0t}[x_0, x_t] \ps{ \nabla \log \left(\frac{\rho_t[x_t]}{\pi^\star(\vartheta_t)[x_t]}\right), \nabla_1 V(x_t,\theta_k) - \nabla_1 V(x_0,\theta_k)}  \ud x_0 \ud x_t
\end{align*}
Using $\langle a, b\rangle \leq \|a\|^2 + \frac{1}{4}\|b\|^2$ and recalling that $x \mapsto \nabla_1 V(x,\theta)$ is $L_X$-Lipschitz for all $\theta \in \R^p$ by Assumption \ref{ass:smoothness},
\begin{align*}
    a_2 &\leq \E_{(x_0, x_t) \sim \rho_{0t}} \|\nabla_1 V(x_t,\theta_k) - \nabla_1 V(x_0,\theta_k)\|^2 + \frac{1}{4} \E_{(x_0, x_t) \sim \rho_{0t}} \Big\|\nabla \log \left(\frac{\rho_t[x_t]}{\pi^\star(\vartheta_t)[x_t]}\right)\Big\|^2 \\
    &\leq L_X^2 \E_{(x_0, x_t) \sim \rho_{0t}} \|x_t - x_0\|^2 + \frac{1}{4} \FD(\rho_t\,||\,\pi^\star(\vartheta_t)) \, .
\end{align*}
Proceeding similarly for $a_3$, we obtain
\begin{align*}
a_3 \leq \E_{x \sim \rho_{t}} \|\nabla_1 V(x,\vartheta_t) - \nabla_1 V(x,\theta_k)\|^2 + \frac{1}{4} \FD(\rho_t||\pi^\star(\vartheta_t)) \, .
\end{align*}
Since $\theta \mapsto \nabla_1 V(x, \theta)$ is $L_\Theta$-Lipschitz for all $x \in \R^d$ by Assumption \ref{ass:smoothness}, we get
\[
a_3 \leq L_\Theta \|\vartheta_t - \theta_k\|^2 + \frac{1}{4} \FD(\rho_t\,||\,\pi^\star(\theta_t)) \, .
\]
Moreover, by \eqref{eq:evolution-theta-discrete} and under \Cref{ass:Gamma_Lipschitz}, we have $\|\vartheta_t - \theta_k\|^2 = t^2\varepsilon_k^2\|\Gamma(\mu_k,\theta_k)\|^2 \le t^2\varepsilon_k^2 C^2$, which yields
\[
a_3 \leq L_\Theta t^2\varepsilon_k^2 C^2 + \frac{1}{4} \FD(\rho_t\,||\,\pi^\star(\vartheta_t)) \, .
\]
Putting everything together,
\begin{align*}
    \frac{d\KL(\rho_t\,||\,\pi^\star(\vartheta_t))}{\ud t}&= a+b \\
    &\leq - \frac{1}{2} \FD(\rho_t\,||\,\pi^\star(\vartheta_k)) + L_X^2 \E_{(x_0, x_t) \sim \rho_{0t}} \|x_t - x_0\|^2 + L_\Theta t^2\varepsilon_k^2 C^2 + 2 \varepsilon_k C^2 \\
    &\leq - \mu \KL(\rho_t\,||\,\pi^\star(\vartheta_t)) + L_X^2 \E_{(x_0, x_t) \sim \rho_{0t}} \|x_t - x_0\|^2 + L_\Theta t^2\varepsilon_k^2 C^2 + 2 \varepsilon_k C^2
\end{align*}
where the last inequality uses \Cref{ass:potential_log_sobolev} and  where the two last terms in the r.h.s. can be seen as additional bias terms compared to the analysis of \cite[Theorem~3]{cb-clmcmckld-18} and \cite[Lemma 3]{vempala2019rapid}. Let us now bound $\E_{(x_0, x_t) \sim \rho_{0t}} \|x_t - x_0\|^2$. Recall that $x_t \stackrel{d}{=} x_0 - t \nabla_1 V(x_0, \theta_k) + \sqrt{2 t} z_0$, where $z_0 \sim \mathcal{N}(0, I)$ is independent of $x_0$. Then
\begin{align*}
    \E_{(x_0, x_t) \sim \rho_{0t}} \|x_t - x_0\|^2 &= \E_{(x_0, x_t) \sim \rho_{0t}} \|-t \nabla_1 V(x_0, \theta_k) + \sqrt{2 t} z_0\|^2 \\
    &= t^2 \E_{x_0 \sim \rho_{0}} \|\nabla_1 V(x_0, \theta_k)\|^2 + 2 t d.
\end{align*}
Finally, since $x \mapsto \nabla_1 V(x, \theta)$ is $L_X$-Lipschitz for all $x \in \R^d$ by Assumption \ref{ass:smoothness}, and under \Cref{ass:potential_log_sobolev}, we get by \cite[Lemma 12]{vempala2019rapid} that
\[
\E_{x_0 \sim \rho_{0}} \|\nabla_1 V(x_0, \theta_k)\|^2 \leq \frac{4 L_X^2}{\mu} \KL(\rho_0 \, || \, \pi^\star(\theta_k)) + 2 d L_X \, .
\]
All in all,
\begin{align*}
\frac{d\KL(\rho_t \, || \, \pi^\star(\vartheta_t))}{\ud t} &\leq - \mu \KL(\rho_t \, || \, \pi^\star(\vartheta_t)) + \frac{4 L_X^4 t^2}{\mu} \KL(\rho_0 \, || \, \pi^\star(\theta_k)) \\
&\qquad + 2 d L_X^3 t^2 + 2 L_X^2 t d + L_\Theta t^2\varepsilon_k^2 C^2 + 2 \varepsilon_k C^2 \, .
\end{align*}
Recall that we want to integrate $t$ between $0$ and $\gamma$. For $t \leq \gamma$, we have
\begin{align*}
\frac{d\KL(\rho_t \, || \, \pi^\star(\vartheta_t))}{\ud t}
&\leq - \mu \KL(\rho_t \, || \, \pi^\star(\vartheta_t)) + \frac{4 L_X^4 \gamma^2}{\mu} \KL(\rho_0 \, || \, \pi^\star(\theta_k)) \\
&\qquad+ 2 d L_X^3 \gamma^2 + 2 L_X^2 \gamma d + L_\Theta \gamma^2\varepsilon_k^2 C^2 + 2 \varepsilon_k C^2 \\ 
&\leq - \mu \KL(\rho_t \, || \, \pi^\star(\vartheta_t)) + \frac{4 L_X^4 \gamma^2}{\mu} \KL(\rho_0 \, ||  \,\pi^\star(\theta_k)) + 4 L_X^2 \gamma d + 3 \varepsilon_k C^2 
\end{align*}
since $L_X \gamma \leq 1$, $L_\Theta \gamma^2 \leq 1$ and $\varepsilon_k \leq 1$ by \eqref{eq:def-gamma-k-epsilon-k}.
Denote by $C_1$ the second term and $C_2$ the sum of the last two terms. Then, by Grönwall's inequality \citep{pachpatte1997inequalities},
\[
\KL(\rho_t \, || \, \pi^\star(\vartheta_t)) \leq \frac{(C_1 + C_2) e^{-\mu t} (e^{\mu t} - 1)}{\mu} + \KL(\rho_0 \, || \, \pi^\star(\theta_k)) e^{-\mu t} \, .
\]
Since $\mu t \leq \mu \gamma \leq 1$ by \eqref{eq:def-gamma-k-epsilon-k}, we have $e^{\mu t} \leq 1 + 2 \mu \gamma$, and
\begin{align*}
\KL(\rho_t \, ||\, \pi^\star(\vartheta_t)) &\leq 2 (C_1 + C_2) \gamma e^{-\mu t} + \KL(\rho_0 \,||\, \pi^\star(\theta_k)) e^{-\mu t} \\
&= 2 C_2 \gamma e^{-\mu t} + \Big(1 + \frac{8 L_X^4 \gamma^3}{\mu}\Big) \KL(\rho_0 \,||\, \pi^\star(\theta_k)) e^{-\mu t} \, .    
\end{align*}
Since $\gamma \leq \frac{\mu}{4 L_X^2}$ by \eqref{eq:def-gamma-k-epsilon-k}, we have $\frac{8 L_X^4 \gamma^3}{\mu} \leq \frac{\mu \gamma}{2}$, and
\[
\KL(\rho_t \, || \, \pi^\star(\vartheta_t)) \leq 2 C_2 \gamma e^{-\mu t} + \Big(1 + \frac{\mu \gamma}{2}\Big) \KL(\rho_0 \, || \, \pi^\star(\theta_k)) e^{-\mu t} \, . 
\]
We therefore obtain, by evaluating at $t=\gamma$ and renaming $p_{k+1} = \rho_\gamma$, $\pi^\star(\theta_{k+1}) = \pi^\star(\vartheta_\gamma)$, $p_k = \rho_0$, and $\gamma_k = \gamma$,
\[
\KL(p_{k+1} \, || \, \pi^\star(\theta_{k+1})) \leq 2 C_2 \gamma_k e^{-\mu \gamma_k} + \Big(1 + \frac{\mu \gamma_k}{2}\Big) \KL(p_k \, || \, \pi^\star(\theta_k)) e^{-\mu \gamma_k} \, . 
\]

\paragraph{Bounding the KL over the whole dynamics.} Let $C_3 := (1 + \frac{\mu \gamma_k}{2}) e^{-\mu \gamma_k}$. We have $C_3 < 1$, and by summing and telescoping,
\[
\KL(p_k \, || \, \pi^\star(\theta_k)) \leq \KL(p_0 \, || \, \pi^\star(\theta_0)) C_3^k + \frac{2 C_2 C_3 \gamma_k e^{-\mu \gamma_k}}{1 - C_3} \, . 
\]
We have
\[
\frac{C_3 e^{-\mu \gamma_k}}{1 - C_3} = \frac{C_3}{e^{\mu \gamma_k} - (1 + \frac{\mu \gamma_k}{2})} \leq \frac{2}{\mu \gamma_k} \, ,
\]
by using $e^x \geq 1 + x$ and $C_3 \leq 1$.  Thus
\[
\KL(p_k \, || \, \pi^\star(\theta_k)) \leq \KL(p_0 \,||\, \pi^\star(\theta_0)) C_3^k + \frac{4 C_2}{\mu} \, .
\]
Replacing $C_2$ by its value,
\begin{equation*} 
\KL(p_k \,||\, \pi^\star(\theta_k)) \leq \KL(p_0 \,||\, \pi^\star(\theta_0)) C_3^k + \frac{16 L_X^2 d \gamma_k}{\mu} + \frac{12 C^2 \varepsilon_k}{\mu} \, .    
\end{equation*}
Since $e^x \geq 1 + x$, $C_3 \leq e^{- \mu \gamma_k / 2}$, and
\begin{equation}    \label{eq:bound-kl-discrete}
\KL(p_k \,||\, \pi^\star(\theta_k)) \leq \KL(p_0 \,||\, \pi^\star(\theta_0)) e^{- \frac{\mu \gamma_k k}{2}} + \frac{16 L_X^2 d \gamma_k}{\mu} + \frac{12 C^2 \varepsilon_k}{\mu} \, .    
\end{equation}
We obtain three terms in our bound that have different origins. The first term corresponds to an exponential decay of the KL divergence at initialization. The second term is linked to the discretization error, and is proportional to $\gamma_k$. The third term is due to the fact that $\pi^\star(\theta_k)$ is moving due to the outer problem updates. It is proportional to the ratio of learning rates~$\varepsilon_k$.

\paragraph{Evolution of the loss.} By Assumption \ref{ass:smoothness}, the loss $\ell$ is $L$-smooth, and recall that $\theta_{k+1} = \theta_k - \gamma_k \varepsilon_k \Gamma(p_{k},\theta_k)$. We have
\begin{align*}
    \ell(\theta_{k+1}) &= \ell(\theta_k - \gamma_k \varepsilon_k \Gamma(p_{k},\theta_k)) \\
    &\leq \ell(\theta_k) - \gamma_k \varepsilon_k \langle \nabla \ell(\theta_k), \Gamma(p_{k},\theta_k) \rangle + \frac{L \gamma_k^2 \varepsilon_k^2}{2} \|\Gamma(p_{k},\theta_k)\|^2 \\
    &\leq \ell(\theta_k) - \gamma_k \varepsilon_k \langle \nabla \ell(\theta_k), \Gamma(p_{k},\theta_k) \rangle + \frac{L C^2 \gamma_k^2 \varepsilon_k^2}{2}
\end{align*}
by \Cref{ass:Gamma_Lipschitz}. Furthermore, 
\[
\Gamma(\mu_{k},\theta_k) = \Gamma(\pi^\star(\theta_k),\theta_k) + \Gamma(p_{k},\theta_k) - \Gamma(\pi^\star(\theta_k),\theta_k) = \nabla \ell(\theta_k) + \Gamma(p_{k},\theta_k) - \Gamma(\pi^\star(\theta_k),\theta_k) \, .
\]
Thus
\begin{align*}
\ell(\theta_{k+1}) &\leq \ell(\theta_k) - \gamma_k \varepsilon_k \|\nabla \ell(\theta_k)\|^2 + \gamma_k \varepsilon_k \|\nabla \ell(\theta_k)\| \|\Gamma(p_{k},\theta_k) - \Gamma(\pi^\star(\theta_k),\theta_k)\| + \frac{L C^2 \gamma_k^2 \varepsilon_k^2}{2} \\
&\leq \ell(\theta_k) - \gamma_k \varepsilon_k \|\nabla \ell(\theta_k)\|^2 + \gamma_k \varepsilon_k \|\nabla \ell(\theta_k)\| \sqrt{\KL(p_k || \pi^\star(\theta_k))} + \frac{L C^2 \gamma_k^2 \varepsilon_k^2}{2}
\end{align*}
by \Cref{ass:Gamma_Lipschitz}. Using $ab \leq \frac{1}{2}(a^2 + b^2)$, we obtain
\[
\ell(\theta_{k+1}) \leq \ell(\theta_k) - \frac{\gamma_k \varepsilon_k}{2} \|\nabla \ell(\theta_k)\|^2 + \frac{\gamma_k \varepsilon_k}{2}  \KL(p_k || \pi^\star(\theta_k)) + \frac{L C^2 \gamma_k^2 \varepsilon_k^2}{2} \, .
\]
\paragraph{Conclusion.} Summing and telescoping,
\[
\ell(\theta_{K+1}) - \ell(\theta_1) \leq - \frac{1}{2} \sum_{k=1}^{K} \gamma_k \varepsilon_k \|\nabla \ell(\theta_k)\|^2 + \frac{1}{2} \sum_{k=1}^{K} \gamma_k \varepsilon_k \KL(p_k || \pi^\star(\theta_k)) + \frac{L C^2}{2} \sum_{k=1}^{K} \gamma_k^2 \varepsilon_k^2 \, .
\]
Lower bounding $\gamma_k$ by $\gamma_K$ and $\varepsilon_k$ by $\varepsilon_K$ in the first sum, then reorganizing terms, we obtain
\[
\frac{1}{K} \sum_{k=1}^{K} \|\nabla \ell(\theta_k)\|^2 \leq \frac{2 (\ell(\theta_1) - \inf \ell)}{K \gamma_K \varepsilon_K} + \frac{1}{K \gamma_K \varepsilon_K} \sum_{k=1}^{K} \gamma_k \varepsilon_k \KL(p_k || \pi^\star(\theta_k)) + \frac{L C^2}{K \gamma_K \varepsilon_K} \sum_{k=1}^{K} \gamma_k^2 \varepsilon_k^2 \, .
\]
Bounding the KL divergence by \eqref{eq:bound-kl-discrete}, we get
\begin{align*}
\frac{1}{K} \sum_{k=1}^{K} \|\nabla \ell(\theta_k)\|^2 &\leq \frac{2 (\ell(\theta_1) - \inf \ell)}{K \gamma_K \varepsilon_K} + \frac{1}{K \gamma_K \varepsilon_K} \sum_{k=1}^{K} \KL(p_0 || \pi^\star(\theta_0)) \gamma_k \varepsilon_k e^{- \frac{\mu \gamma_k k}{2}}  \\
&+ \frac{1}{K \gamma_K \varepsilon_K} \sum_{k=1}^{K} \frac{16 L_X^2 d \gamma_k^2 \varepsilon_k}{\mu} + \frac{1}{K \gamma_K \varepsilon_K} \sum_{k=1}^{K} \frac{12 C^2 \gamma_k \varepsilon_k^2}{\mu} + \frac{L C^2 \gamma}{K \gamma_K \varepsilon_K} \sum_{k=1}^{K} \gamma_k^2 \varepsilon_k^2 \, .    
\end{align*}
By definition \eqref{eq:def-gamma-k-epsilon-k} of $\gamma_k$ and $\varepsilon_k$, we see that the first and last sums are converging, and the middle sums are $\mathcal{O}(\ln K)$. Therefore, we obtain
\[
\frac{1}{K} \sum_{k=1}^{K} \|\nabla \ell(\theta_k)\|^2 \leq \frac{c \ln K}{K^{1/3}}
\]
for some $c > 0$ depending only on the constants of the problem.

\subsection{Denoising diffusion}    \label{apx:theory-diffusion-denoising}

Our goal is first to find a continuous-time equivalent of Algorithm \ref{alg:implicit-diff-maxi}. To this aim, we first introduce a slightly different version of the algorithm where the queue of $M$ versions of $p_k$ is not present at initialization, but constructed during the first~$M$ steps of the algorithm. As a consequence, $\theta$ changes for the first time after $M$ steps of the algorithm, when the queue is completed and the $M$-th element of the queue has been processed through all the sampling steps.

\begin{algorithm}[H]
\caption{Implicit Diff. optimization, finite time (no warm start)} \label{alg:implicit-diff-maxi-no-warm-start}
\begin{algorithmic}
\INPUT $\theta_0 \in \R^p$, $p_0 \in \cP$
\STATE $p_0^{(0)} \gets p_0$ 
\FOR{$k \in \{0, \dots, K-1\}$ (joint single loop)}
    \STATE $p_{k+1}^{(0)} \gets p_0$
    \STATE \textbf{parallel} $p_{k+1}^{(m+1)} \gets \Sigma_{m}(p_k^{(m)}, \theta_k)$ for $m \in [\min(k, M-1)]$
    \IF{$k \geq M$}
        \STATE $\theta_{k+1} \gets \theta_k - \eta_k \Gamma(p_k^{(M)}, \theta_k)$ (or another optimizer)
    \ENDIF
\ENDFOR
\OUTPUT $\theta_K$
\end{algorithmic}
\end{algorithm}

In order to obtain the continuous-time equivalent of Algorithm \ref{alg:implicit-diff-maxi-no-warm-start}, it is convenient to change indices defining $p_k^{(m)}$. Note that in the update of $p_k^{(m)}$ in the algorithm, the quantity $m-k$ is constant. Therefore, denoting $j = m-k$, the algorithm above is exactly equivalent to
\begin{align*}
    p_j^{(0)} &= p_0 \\
    p_j^{(m+1)} &= \Sigma_m(p_j^{(m)}, \theta_{j+m}), \quad m \in [M-1] \\
    \theta_{k+1} &= \theta_k \quad \textnormal{ if } k < M \\
    \theta_{k+1} &= \theta_k - \eta_k \Gamma(p_{k-M}^{(M)}, \theta_k) \quad \textnormal{ else.}
\end{align*}
It is then possible to translate this algorithm in a continuous setting in the case of denoising diffusions. We obtain
\begin{align}   \label{eq:implit-diff-finite-denoising}
    \begin{split}
    Y^0_t &\sim \cN(0, 1) \\
    \ud Y^\tau_t &= \{Y^\tau_t + 2 s_{\theta_{t + \tau}}(Y^\tau_t, T-\tau)\} \ud \tau + \sqrt{2} \ud B_\tau \\
    \ud \theta_t &= 0 \quad \textnormal{ for } t < T \\
    \ud \theta_t &= - \eta_k \Gamma(Y_{t-T}^T, \theta_t) \ud t \quad \textnormal{ for } t \geq T,
    \end{split}
\end{align}
Let us choose the score function $s_\theta$ as in Section \ref{theory:diffusion-denoising}. The backward equation \eqref{eq:diffusionback} then writes
\begin{equation}    \label{eq:1d-diffusion-back}
    \ud Y_t = (Y_t + 2 s_\theta(Y_t, T-t) )\ud t + \sqrt{2}\ud B_t = -(Y_t - 2 \theta e^{-(T-t)} )\ud t + \sqrt{2}\ud B_t\, .    
\end{equation}
We now turn our attention to the computation of $\Gamma$. Recall that $\Gamma$ should satisfy $\Gamma(\pi^\star(\theta), \theta) = \nabla \ell(\theta)$, where $\pi^\star(\theta)$ is the distribution of $Y_T$ and $\ell(\theta) = \E_{Y \sim \pi^\star(\theta)}(L(Y))$ for some loss function $L:\R \to \R$. To this aim, for a given realization of the SDE \eqref{eq:1d-diffusion-back}, let us compute the derivative of $L(Y_T)$ with respect to $\theta$ using the adjoint method \eqref{eq:adjoint-sde}. We have $\nabla_2 \mu(T-t, Y_{T-t}, \theta) = -1$, hence $\frac{\ud A_t}{\ud t} = -1$, and $A_t = L'(Y_T) e^{-t}$. Furthermore, $\nabla_3 \mu(T-t, Y_{T-t}, \theta) = 2e^{-t}$. Thus
\[
    \frac{\ud L(Y_T)}{\ud \theta} = \int_0^T A_t \nabla_3 \mu(T-t, Y_{T-t}, \theta) \ud t = \int_0^T 2 L'(Y_T) e^{-2t} \ud t = L'(Y_T) (1 - e^{-2T}) \, .
\]
As a consequence,
\[
    \nabla \ell(\theta) = \E_{Y \sim q_T^*(\theta)}\Big(\frac{\ud L(Y)}{\ud \theta}\Big) = \E_{Y \sim q_T^*(\theta)}(L'(Y) (1 - e^{-2T})) \, .
\]
This prompts us to define
\[
    \Gamma(p, \theta) = \E_{X \sim p}(L'(X) (1 - e^{-2T})) = \E_{X \sim p}((X - \theta_{\textnormal{target}}) (1 - e^{-2T})) = (\E_{X \sim p}X - \theta_{\textnormal{target}}) (1 - e^{-2T})
\]
for $L$ defined by $L(x) = (x - \theta_{\textnormal{target}})^2$ as in Section \ref{theory:diffusion-denoising}.
Note that, in this case, $\Gamma$ depends only on $p$ and not on $\theta$.

Replacing $s_\theta$ and $\Gamma$ by their values in \eqref{eq:implit-diff-finite-denoising}, we obtain the coupled differential equations in $Y$ and $\theta$
\begin{align}  \label{eq:implit-diff-finite-denoising-instantiated}
\begin{split}
    Y^0_t &\sim \cN(0, 1) \\
    \ud Y^\tau_t &= \{-Y^\tau_t + 2 \theta_{t + \tau} e^{-(T-\tau)}\} \ud \tau + \sqrt{2} \ud B_\tau \\
    \ud \theta_t &= 0 \quad \textnormal{ for } t < T \\
    \ud \theta_t &= - \eta (\E Y_{t-T}^T - \theta_{\textnormal{target}}) (1 - e^{-2T}) \ud t \quad  \textnormal{ for } t \geq T,
\end{split}
\end{align}
We can now formalize Proposition \ref{prop:diffusion-denoising-proof}, with the following statement.
\begin{proposition}
    Consider the dynamics \eqref{eq:implit-diff-finite-denoising-instantiated}. Then
    \[
    \|\theta_{2T} - \theta_{\textnormal{target}}\| = \mathcal{O}(e^{-T}) \, , 
    \quad \textnormal{and} \quad 
    \pi^\star(\theta_{2T}) = \cN(\mu, 1) \textnormal{ with } \mu = \theta_{\textnormal{target}} + \mathcal{O}(e^{-T}) \, .
    \]
\end{proposition}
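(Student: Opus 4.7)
The plan is to convert the joint SDE/ODE system (\ref{eq:implit-diff-finite-denoising-instantiated}) into a closed two-dimensional linear ODE on the pair $(m(t), \theta_t)$, where $m(t) := \E Y^T_{t-T}$, and then analyze its convergence to the equilibrium on the window $t \in [T, 2T]$.

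First, since the inner SDE in (\ref{eq:implit-diff-finite-denoising-instantiated}) is an inhomogeneous Ornstein--Uhlenbeck process in $\tau$, the mean $\E Y^\tau_t$ satisfies the ODE $\frac{d}{d\tau}\E Y^\tau_t = -\E Y^\tau_t + 2\theta_{t+\tau} e^{-(T-\tau)}$ with $\E Y^0_t = 0$. Integrating and evaluating at $\tau = T$ gives the closed form $\E Y^T_t = 2 e^{-2T}\int_0^T \theta_{t+s} e^{2s}\, ds$. After the change of variable $r = (t-T)+s$, this reads $m(t) = 2e^{-2t}\int_{t-T}^{t}\theta_r e^{2r}\,dr$, and differentiating in $t$ yields $m'(t) = -2 m(t) + 2\theta_t - 2\theta_{t-T} e^{-2T}$. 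For $t \in [T, 2T]$ we have $t - T \in [0, T]$, on which $\theta$ is frozen at $\theta_0$, so the delay term simplifies to the constant $-2\theta_0 e^{-2T}$.

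Combining this with $\theta'(t) = -\eta(1-e^{-2T})[m(t) - \theta_{\textnormal{target}}]$ gives a linear system with unique equilibrium $m^* = \theta_{\textnormal{target}}$ and $\theta^* = \theta_{\textnormal{target}} + \theta_0 e^{-2T}$. Setting $\tilde m = m - m^*$, $\tilde\theta = \theta - \theta^*$ reduces the system on $[T, 2T]$ to
\begin{equation*}
\tilde m'(t) = -2\tilde m(t) + 2\tilde\theta(t),\qquad \tilde\theta'(t) = -\eta(1-e^{-2T})\tilde m(t),
\end{equation*}
whose matrix has characteristic polynomial $\lambda^2 + 2\lambda + 2\eta(1-e^{-2T}) = 0$, with roots $\lambda_{\pm} = -1 \pm \sqrt{1 - 2\eta(1-e^{-2T})}$. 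For $\eta$ chosen so that $2\eta(1-e^{-2T}) \ge 1$, both roots have real part equal to $-1$, and the initial conditions $\tilde m(T) = \theta_0(1-e^{-2T}) - \theta_{\textnormal{target}}$, $\tilde\theta(T) = \theta_0 - \theta^*$ are bounded independently of $T$. Evolving over the interval of length $T$ therefore gives $|\tilde\theta(2T)| = \mathcal{O}(e^{-T})$, whence $\theta_{2T} = \theta^* + \mathcal{O}(e^{-T}) = \theta_{\textnormal{target}} + \mathcal{O}(e^{-T})$. Plugging into $\pi^\star(\theta_{2T}) = \cN(\theta_{2T}(1-e^{-2T}), 1)$ then gives $\mu_{2T} = \theta_{\textnormal{target}}(1-e^{-2T}) + \mathcal{O}(e^{-T}) = \theta_{\textnormal{target}} + \mathcal{O}(e^{-T})$, as claimed.

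The main obstacle is that $m(t)$ a priori depends on a full history of $\theta$ over a sliding window of length $T$, which makes the system integro-differential in general. The key simplification, which makes the analysis tractable on $[T, 2T]$, is that the portion of the window lying in $[0, T]$ carries the constant value $\theta_0$, turning the delay into a constant forcing term and reducing everything to a finite-dimensional linear ODE. The only genuine tuning question is choosing $\eta$ large enough that the real part of both eigenvalues is at most $-1$, ensuring the $e^{-T}$ decay rate; this is where the informal nature of the statement is resolved in the formal version.
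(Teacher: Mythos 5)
Your proof takes essentially the same route as the paper: both reduce the integro-differential dynamics to a two-dimensional linear ODE on $[T,2T]$ by introducing an auxiliary memory variable (your $m(t)=\E Y^T_{t-T}$ is exactly $2\psi_t$ where $\psi_t=\int_0^T\theta_{t-\tau}e^{-2\tau}\,\ud\tau$ is the paper's auxiliary variable), exploit that $\theta$ is frozen at $\theta_0$ on $[0,T]$ so the delay term becomes a constant forcing, identify the equilibrium, and read off exponential decay from the eigenvalues of the same companion matrix. The one place you go beyond the paper is worth flagging: you note explicitly that obtaining the $\mathcal{O}(e^{-T})$ rate over a window of length $T$ requires $\eta$ to be large enough that both eigenvalues of $A$ have real part $\le -1$ (here $2\eta(1-e^{-2T})\ge 1$ suffices); the paper only states that $A$ has eigenvalues with negative real part, which, for small $\eta$, would give a slower decay rate and so does not by itself deliver the claimed $e^{-T}$ bound. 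Your observation that this is the point where the informal statement must be made precise is the correct reading of the gap.
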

\begin{proof}
Let us first compute the expectation of $Y^\tau_T$. To this aim, denote $Z^\tau_t = e^\tau Y^\tau_t$. Then we have
\[
    \ud Z^\tau_t = e^\tau (\ud Y^\tau_t + Y^\tau_t \ud t) = 2 \theta_{t + \tau} e^{2\tau - T} \ud \tau + \sqrt{2} \ud B_\tau \, .
\]
Since $\E(Z^0_t) = \E(Y^0_t) = 0$, we obtain that $\E(Z^T_t) = 2 \int_{0}^{T} \theta_{t + \tau} e^{2(\tau-T)}\ud\tau$, and
\[
    \E(Y^T_t) = e^{-T} \E(Z^T_t) = 2 \int_0^T \theta_{t + \tau} e^{2(\tau-T)}\ud\tau \, .
\]
Therefore, we obtain the following evolution equation for $\theta$ when $t \geq T$:
\[
\dot{\theta}_t = - \eta \Big( 2 \int_0^T \theta_{t-T+\tau} e^{2(\tau-T)}\ud \tau - \theta_{\textnormal{target}} \Big) (1 - e^{-2T}) \, .
\]
By the change of variable $\tau \gets T - \tau$ in the integral, we have, for $t \geq T$,
\[
\dot{\theta}_t = - \eta \Big( 2 \int_0^T \theta_{t-\tau} e^{-2\tau}\ud \tau - \theta_{\textnormal{target}} \Big) (1 - e^{-2T}) \, .
\]
Let us introduce the auxiliary variable $\psi_t = \int_0^T \theta_{t-\tau} e^{-2\tau}\ud \tau$. We have
$\dot{\theta}_t = - \eta(2 \psi_t - \theta_{\textnormal{target}})$, and
\[
\dot{\psi}_t = \int_0^T \dot{\theta}_{t-\tau} e^{-2\tau}\ud \tau = - [\theta_{t-\tau} e^{-2\tau}]_0^T - 2 \int_0^T \theta_{t-\tau} e^{-2\tau} d\tau = \theta_t - \theta_{t - T} e^{-2T} - 2 \psi_t \, .
\]
Recall that $\theta_t$ is constant equal to $\theta_0$ for $t \in [0, T]$. Therefore, for $t \in [T, 2T]$, $\xi := (\theta, \psi)$ satisfies the first order linear ODE with constant coefficients
\[
\dot{\xi} = A \xi + b, \quad 
A = \begin{pmatrix}
0 & -2\eta \\
1 & -2
\end{pmatrix}, \quad
b = \begin{pmatrix}
\eta \theta_{\textnormal{target}} (1-e^{-2T})  \\
- \theta_0 e^{-2T}
\end{pmatrix} \, .
\]
For $\eta > 0$, $A$ is invertible, and 
\[
A^{-1} = \frac{1}{2 \eta} \begin{pmatrix}
-2 & 2\eta \\
-1 & 0
\end{pmatrix} \, .
\]
Hence the linear ODE has solution, for $t \in [T, 2T]$,
\[
\xi_t = -(I - e^{(t-T)A}) A^{-1} b + e^{(t-T)A} \xi_T = - A^{-1} b + e^{(t-T)A} (A^{-1} b + \xi_T) \, .
\]
Thus
\[
\xi_{2T} = - A^{-1} b + e^{TA} (A^{-1} b + \xi_T) = - A^{-1} b + e^{TA} \Bigg(A^{-1} b + \begin{pmatrix}
\theta_0  \\
\frac{\theta_0 (1 - e^{-2T})}{2} 
\end{pmatrix}\Bigg) \, .
\]
A straightforward computation shows that
\[
[A^{-1}b]_0 = - \theta_{\textnormal{target}} (1 - e^{-2T}) + \theta_0 e^{-2T} \, ,
\]
and that $A$ has eigenvalues with negative real part. Putting everything together, we obtain
\[
\theta_{2T} = [\xi_{2T}]_0 = \theta_{\textnormal{target}} + \mathcal{O}(e^{-T}) \, .
\]
Finally, recall that we have $\pi^\star(\theta) = \cN(\theta (1-e^{-2T}), 1)$. Thus $\pi^\star(\theta_{2T}) = \cN(\mu_{2T}, 1)$ with $\mu_{2T} = \theta_{\textnormal{target}} + \mathcal{O}(e^{-T})$.
\end{proof}

\section{EXPERIMENTAL DETAILS}  \label{apx:experimental-details}
We provide here details about the experiments that we have presented in Section~\ref{sec:experiments}. 
\subsection{Langevin processes}
We provide in this section details about our experiments on Langevin processes. In Section~\ref{apx:langevin-reward}, we do so for the experiment described in Section~\ref{sec:experiments-langevin}, where the reward is an explicit function $R(\cdot)$. In Section~\ref{apx:langevin-scratch}, we show additional experiments showing that Implicit Diffusion can also be used to ``train from scratch'' a model: in this case the reward is a negative KL term that is evaluated from samples of a target distribution. We present results in several parametrization settings.  

\subsubsection{Reward training}
\label{apx:langevin-reward}

We consider a parameterized family of potentials for $x \in \R^2$ and $\theta \in \R^6$ defined by
\[
V(x, \theta) = -\log\Big(\sum_{i=1}^6 \sigma(\theta)_i \exp(-\|x - \mu_i\|^2) \Big)\, ,
\]
where the $\mu_i \in  \R^2$ are the six vertices of a regular hexagon and $\sigma$ is the softmax function mapping $\R^6$ to the unit simplex. In this setting, for any $\theta \in \R^6$,
\[
\pi^\star(\theta) = \frac{1}{Z}\sum_{i=1}^6 \sigma(\theta)_i \exp(-\|x - \mu_i\|^2) \, ,
\]
where $Z$ is an absolute renormalization constant that is independent of $\theta$. This fact simplifies drawing contour lines, but we do not use this prior knowledge in our algorithms, and only use calls to functions $\nabla_1 V(\cdot,\theta)$ and $\nabla_2 V(\cdot,\theta)$ for various $\theta \in \R^6$.

\begin{figure}[ht]
\vskip 0.1in
\begin{center}
\centerline{\includegraphics[width=\textwidth]{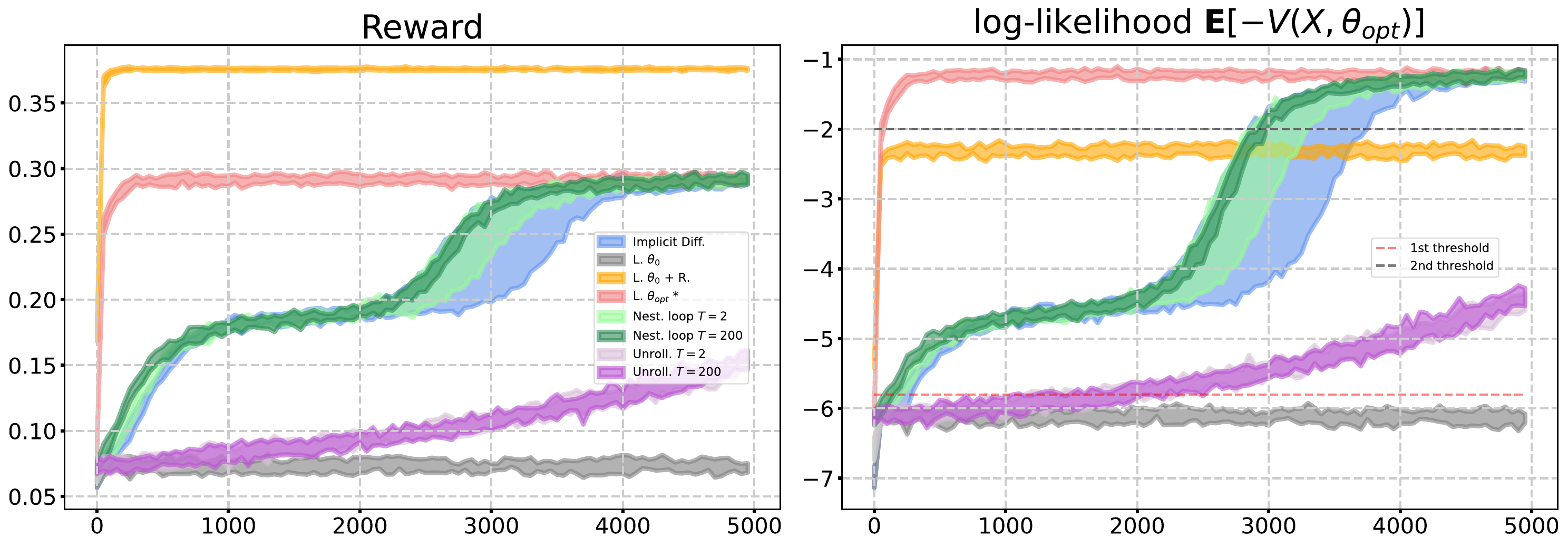}}
\caption{Confidence intervals for metrics for reward training of Langevin processes. \textbf{Left}: Evolution of the reward. \textbf{Right}: Evolution of the log-likelihood. \label{fig:values-ci-only}}
\end{center}
\end{figure}

We run six sampling algorithms, all initialized with $p_0 = \cN(0, I_2)$. For all of them we generate a batch of variables $X^{(i)}$ of size $1,000$, all initialized independently with $X_0^{(i)} \sim \cN(0, I_2)$. 
The sampling and optimization steps are realized in parallel over the batch. 
The samples are represented after $K=5,000$ steps of each algorithm in Figure~\ref{fig:contour-langevin}, and used to compute the values of reward and likelihood reported in Figure~\ref{fig:rewards-langevin}. We also display in Figure~\ref{fig:contour-langevin-history} the dynamics of the probabilities throughout these algorithms.

\begin{figure}[H]
\vskip 0.1in
\begin{center}
\centerline{\includegraphics[width=\textwidth]{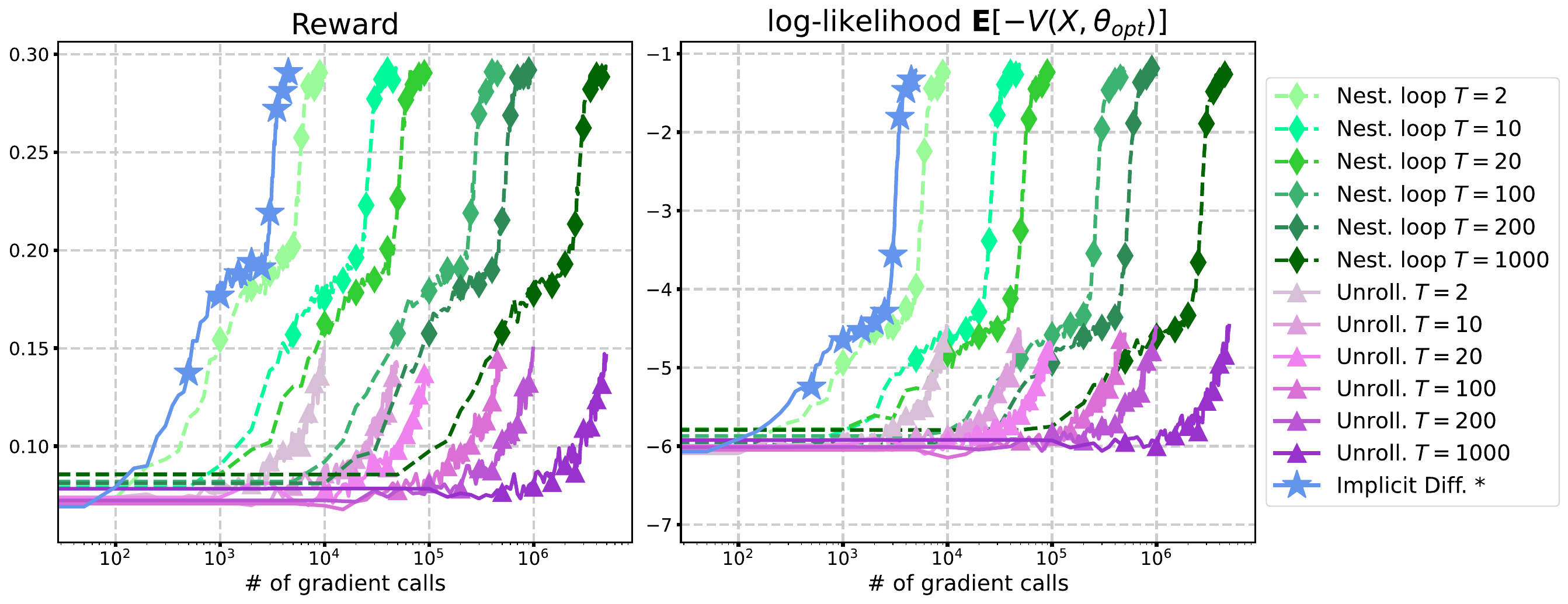}}
\caption{Comparison between Implicit Diffusion, nested loop algorithm and unrolling algorithm, for various number of inner steps $T$. The x-axis is the total number of gradient evaluations (roughly equal to the number of optimization steps multiplied by the number of inner loop steps $T$). \textbf{Left}: Evolution of the reward. \textbf{Right}: Evolution of the log-likelihood. \label{fig:values-shifted}}
\end{center}
\end{figure}

\newpage

We provide here additional details of and motivation for these algorithms, denoted by the colored markers that represent them in these figures.

\afterpage{\clearpage}
\begin{figure}[p]
\vskip 0.1in
\begin{center}
\centerline{\includegraphics[width=\textwidth]{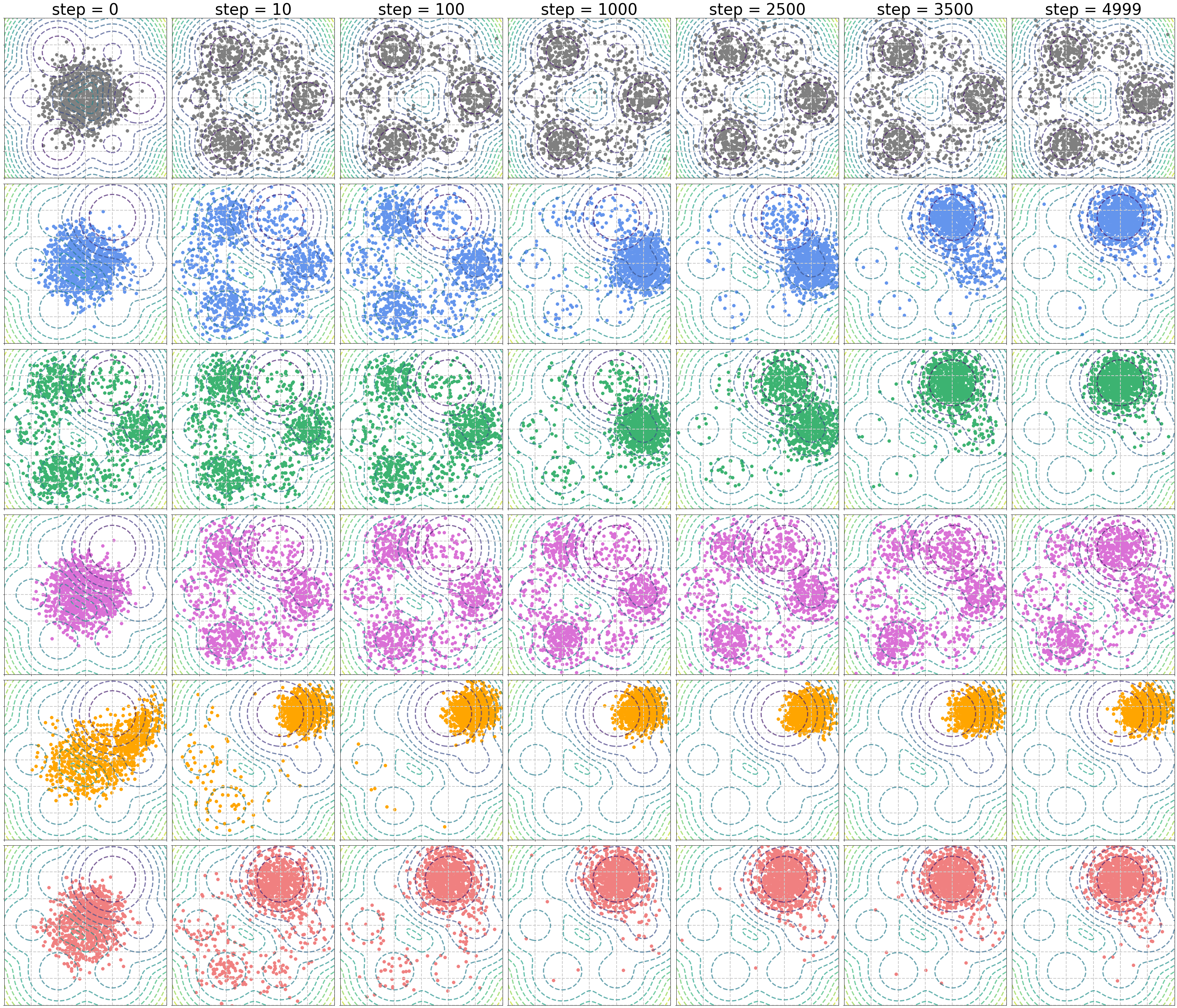}}
\caption{Dynamics of samples for four sampling algorithms after different time steps (for instance, the first column is after one step). \textbf{First row:}  Langevin $\theta_0$ $({\color{gray} \CIRCLE})$ with $\pi^\star(\theta_0)$ contour lines. \textbf{Second:} Implicit Diffusion $({\color{myblue} \CIRCLE})$ with $\pi^\star(\theta_\text{opt})$ contour lines. \textbf{Third:} Nested loop algorithm with $T=100$~$({\color{mydarkgreen} \CIRCLE})$. \textbf{Fourth:} Unrolling through the last step of sampling with $T=100$ $({\color{mydarkorchid} \CIRCLE})$. \textbf{Fifth:} Langevin $\theta_0$ + smoothed Reward $({\color{myorange} \CIRCLE})$. \textbf{Sixth:} Langevin $\theta_{\text{opt}}$ $({\color{mycoral} \CIRCLE})$.
\label{fig:contour-langevin-history}}
\end{center}
\end{figure}

\begin{itemize}
    \item[-] Langevin $\theta_0$ $({\color{gray} \blacksquare})$: This is the discrete-time process (a Langevin Monte Carlo process) approximating a Langevin diffusion with potential $V(\cdot, \theta_0)$ for fixed $\theta_0: =(1, 0, 1, 0, 1, 0)$. There is no reward here; the time-continuous Langevin process converges to $\pi^\star(\theta_0)$, which has some symmetries. It can be thought of as a pretrained model, and the Langevin sampling algorithm as an inference-time generative algorithm.
    \item[-] \textbf{Implicit Diffusion} $({\color{myblue} \bigstar})$: We run the infinite-time horizon version of our method (Algorithm~\ref{alg:implicit-diff}), aiming to minimize 
    $\ell(\theta) := \cF(\pi^\star(\theta))$ for $\cF(p) = - \E_{X\sim p}[R(X)]$ with $R(x) = \mathbf{1}(x_1 >0) \exp(-\|x-\mu\|^2)$
    where $\mu = (1, 0.95)$. This algorithm yields both a sample $\hat p_K$ and parameters $\theta_{\text{opt}}$ after $K$ steps, and can be thought of as jointly sampling and reward finetuning.
    \item[-] Nested loop $({\color{mydarkgreen} \blacklozenge})$: We run Algorithm \ref{alg:naive-nested} with $T$ inner sampling steps for each gradient step. For $T=1$, this is exactly Implicit Diffusion. For $T \gg 1$, it means we compute nearly perfectly $\pi^\star(\theta_t)$ at each optimization step.
    \item[-] Unrolling through the last step of sampling $({\color{mydarkorchid} \blacktriangle})$: For each optimization step, we perform $T$ sampling steps, then differentiate through the last step of sampling by automatic differentiation. This is akin to a stop gradient method. The learning rate here is chosen as $2 \gamma_\theta / \gamma_X$ to improve its performance, for a fair comparison. Recent studies show that differentiating through the last sampling step is an efficient and theoretically-grounded method in bilevel optimization \citep{bolte2023onestep}. It has been applied successfully to denoising diffusions \citep{clark2024directly}.
    \item[-] Langevin $\theta_0$ + R $({\color{myorange} \blacktriangledown})$: This is the discretization of a Langevin diffusion with reward-guided potential $V(\cdot, \theta_0) - \lambda R_{\text{smooth}}$, where $R_{\text{smooth}}$ is a smoothed version of $R$ (replacing the indicator by a sigmoid). Using this approach is different from finetuning: it proposes to modify the sampling algorithm, and does not yield new parameters~$\theta$. This is akin to guidance of generative models \citep{dhariwal2021diffusion}. Note that this requires a differentiable reward $R_{\text{smooth}}$, contrarily to our approach that handles non-differentiable rewards. 
    \item[-] Langevin $\theta_{\text{opt}}$ - post \textbf{Implicit Diffusion} $({\color{mycoral} \CIRCLE})$: This is a discrete-time process approximating a Langevin diffusion with potential $V(\cdot, \theta_{\text{opt}})$, where $\theta_\text{opt}$ is the outcome of reward training by our algorithm. This can be thought of as doing inference with the new model parameters, post reward training with Implicit Diffusion.
\end{itemize}

As mentioned in Section~\ref{sec:experiments-langevin}, this setting illustrates the advantage of our method, which allows the efficient optimization of a function over a constrained set of distribution, without overfitting outside this class. We display in Figure~\ref{fig:contour-langevin-history} snapshots throughout some selected steps of these six algorithms (in the same order and with the same colors as indicated above). We observe that the dynamics of Implicit Diffusion are slower than those of Langevin processes (sampling), which can be observed also in the metrics reported in Figure~\ref{fig:rewards-langevin}. The reward and log-likelihood change slowly, plateauing several times: when $\theta_k$ in this algorithm is initially close to $\theta_0$, the distribution gets closer to $\pi^\star(\theta_0)$ (steps 0-100). It then evolves towards another distribution (steps 1000-2500), after $\theta$ has been affected by accurate gradient updates, before converging to $\pi^\star(\theta_\text{opt})$. The two-timescale dynamics is by design: the sampling dynamics are much faster, aiming to quickly lead to an accurate evaluation of gradients with respect to $\theta_k$. This corresponds to our theoretical setting where $\varepsilon_k \ll 1$. To complement the comparisons between our algorithm and other baselines included in Section~\ref{sec:experiments-langevin}, we also provide in Figure~\ref{fig:values-shifted} a comparison between Implicit Diffusion, the nested loop and unrolling approaches, in terms of reward and log-likelihood optimization, \textbf{as a function of the number of gradient evaluations} (i.e., number of sampling steps), rather than number of optimization steps. Again, it is apparent that the algorithmic cost of doing several steps ($T>1$) of inner loop is much higher than the small improvement obtained by a better estimate of the gradients. Finally, the confidence intervals in Figure~\ref{fig:rewards-langevin} are computed by performing $10$ independent repetitions of the experiment, and reporting the largest and lowest metrics across the $10$ repetitions, at each time step. For readability, Figure~\ref{fig:values-ci-only} shows the same plot with confidence intervals only (without plotting the average value).

\newpage

\subsubsection{Training from scratch}
\label{apx:langevin-scratch}
We present in Figure \ref{fig:training-from-scratch} a variant of this experiment where we start from a model generating a standard Gaussian, and our goal is to learn to generate a mixture of several Gaussians. For this, comparing with the setup presented above, we add a $7$th potential well at the origin, and choose at initialization $\theta_0 = (-7, -7, -7, -7, -7, -7, 11)$. This means that the distribution at initialization is extremely close to being a standard Gaussian, as can be seen in the top-right plot of Figure \ref{fig:training-from-scratch-top}. The target is $\theta^* = (1.5, 0, 1.5, 0, 1.5, 0, 0)$. We use Implicit Diffusion where the reward is the KL between the target distribution and the current one. This KL admits explicit gradients (see Section \ref{sec:grad-sampling}) which can be evaluated with samples of the target distribution. We train for $K=40,000$ steps with a batch of size $1,000$.
Learning rates are $\gamma_X = 2.5 \cdot 10^{-2}$ and $\gamma_\theta = 7 \cdot 10^{-3}$.

\paragraph{Training from scratch with a full Gaussian mixture model parameterization.}
We present in Figure \ref{fig:training-from-scratch-full-gmm} a variant of this experiment where we now parameterize the means and covariances of the Gaussians in addition to the weights of the mixture. More precisely, we consider the potential
\[
V(x, \theta) = -\log\Big(\sum_{i=1}^2 \frac{\sigma(w)_i}{2\pi \sqrt{\det(\Sigma_i)}} \exp(-(x - \mu_i)^\top \Sigma_i^{-1} (x - \mu_i) )  \Big)\, ,
\]
where $\sigma$ is still the softmax function mapping $\R^2$ to the unit simplex, and $\theta = \{w, (\mu_i, \Sigma_i)_{1\leq i \leq 2}\} \in \R^2 \times (\R^2, \R^{2 \times 2})^2$. At initialization, the parameters are
\[
\theta_0 = \big\{(0, 0), ((4, 0), I_2), ((-2, 2 \sqrt{3}), I_2) \big\} \, ,
\]
while our target is
\[
\theta^* = \bigg\{(1.5, 0), 
\bigg((-4, 0), \begin{pmatrix}
0.75 & -0.5 \\
-0.5 & 1.5
\end{pmatrix}\bigg), 
\bigg((2, -2 \sqrt{3}), \begin{pmatrix}
0.75 & 0.5 \\
0.5 & 1.25
\end{pmatrix} \bigg) \bigg\} \, .
\]
As in the previous experiment, we use Implicit Diffusion where the reward is the KL between the target distribution and the current one. This KL admits explicit gradients (see Section \ref{sec:grad-sampling}) which can be evaluated with samples of the target distribution. We train for $K=40,000$ steps with a batch of size $1,000$. Learning rates are $\gamma_X = 5 \cdot 10^{-2}$ and $\gamma_\theta = 5 \cdot 10^{-4}$. We observe on the Figure that Implicit Diffusion is able to learn the target distribution.

\afterpage{\clearpage}
\begin{figure}[p]
    \begin{subfigure}[b]{0.99\textwidth}
    \begin{center}
    \centerline{\includegraphics[width=0.5\linewidth]{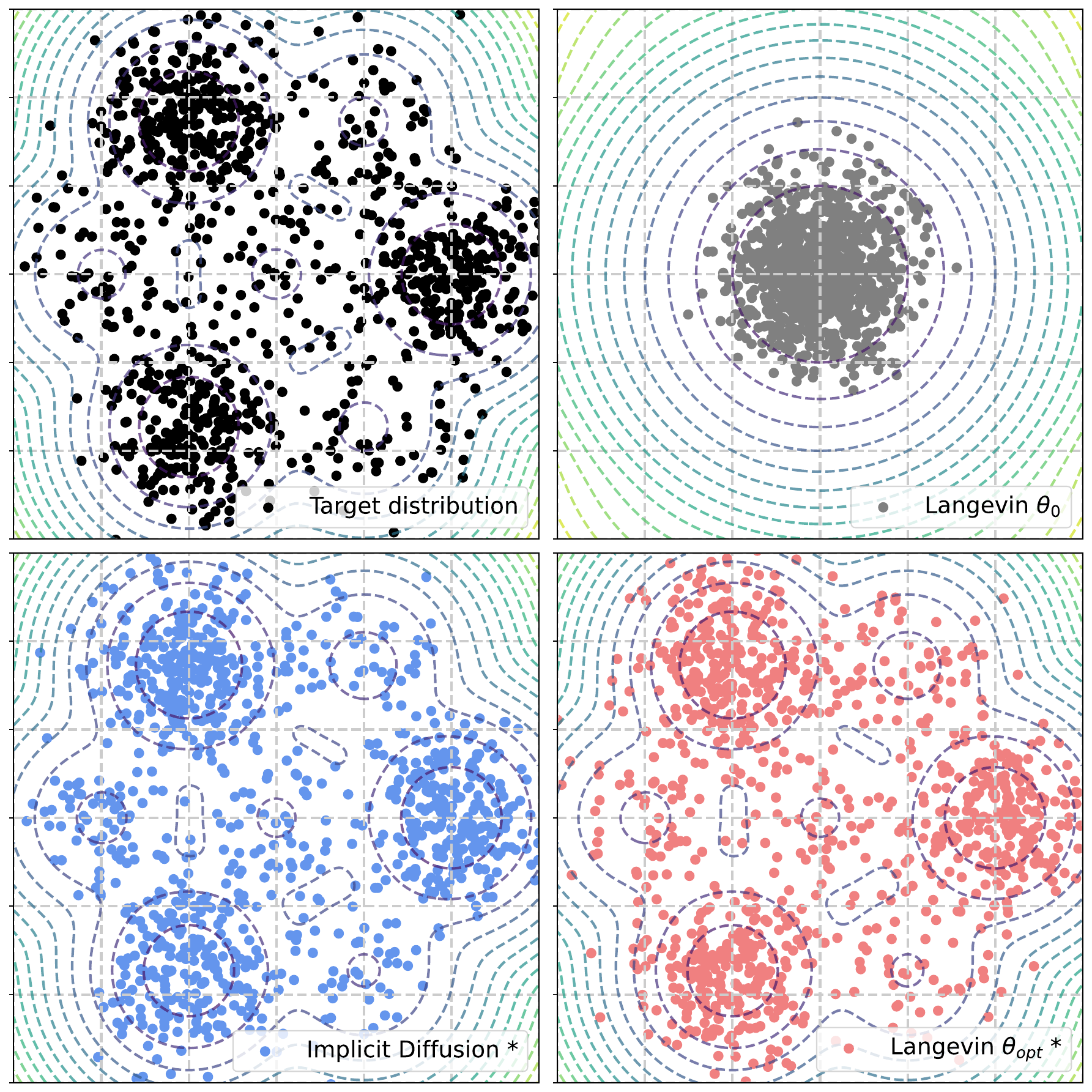}}
    \caption{Contour lines and samples from  sampling algorithms. We start from a model generating a standard Gaussian (top-right figure), and our goal is to learn to generate a mixture of several Gaussians (top-left figure). We use Implicit Diffusion where the reward is the KL between the target distribution and the current one. We observe that Implicit Diffusion is able to learn the target distribution (bottom-left figure). After running Implicit Diffusion, it is easy to generate new samples that are close to the target distribution (bottom-right figure).
    }
    \label{fig:training-from-scratch-top}
    \end{center}
    \end{subfigure}
    \begin{subfigure}[b]{0.99\textwidth}
    \begin{center}
    \centerline{\includegraphics[width=0.7\linewidth]{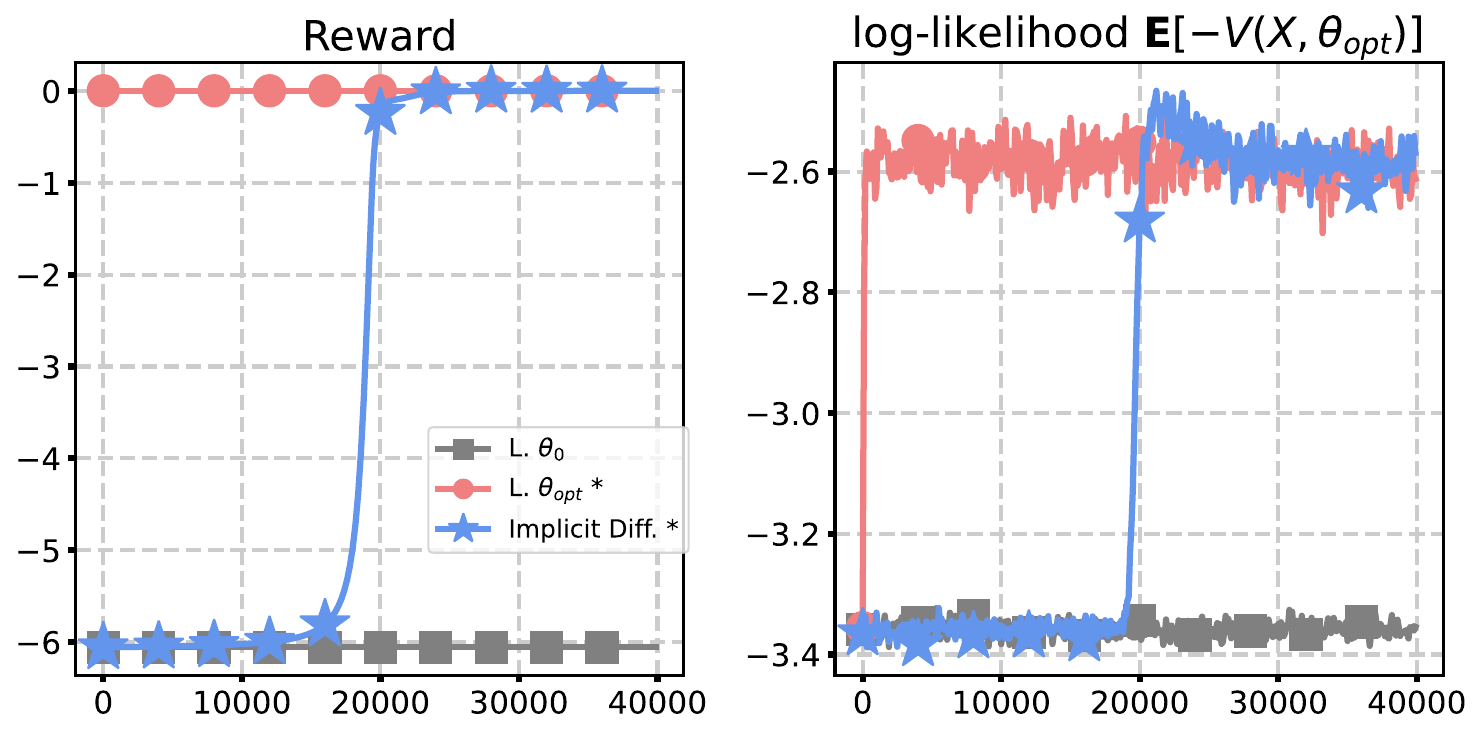}}
    \caption{Evolution of the reward and of the log-likelihood of the samples for the initial model, for the Implicit Diffusion algorithm, and for the trained model after Implicit Diffusion. The reward is the (opposite of the) KL between the target distribution and the current one, so a reward equal to zero means we learnt to reproduce the target distribution.}
    \end{center}
\end{subfigure}
\caption{Optimizing through sampling with \textbf{Implicit Diffusion} to train from scratch an energy-based model (Langevin diffusion).}
\label{fig:training-from-scratch}
\end{figure}

\afterpage{\clearpage}
\begin{figure}[p]
    \begin{subfigure}[b]{0.99\textwidth}
    \begin{center}
    \centerline{\includegraphics[width=0.5\linewidth]{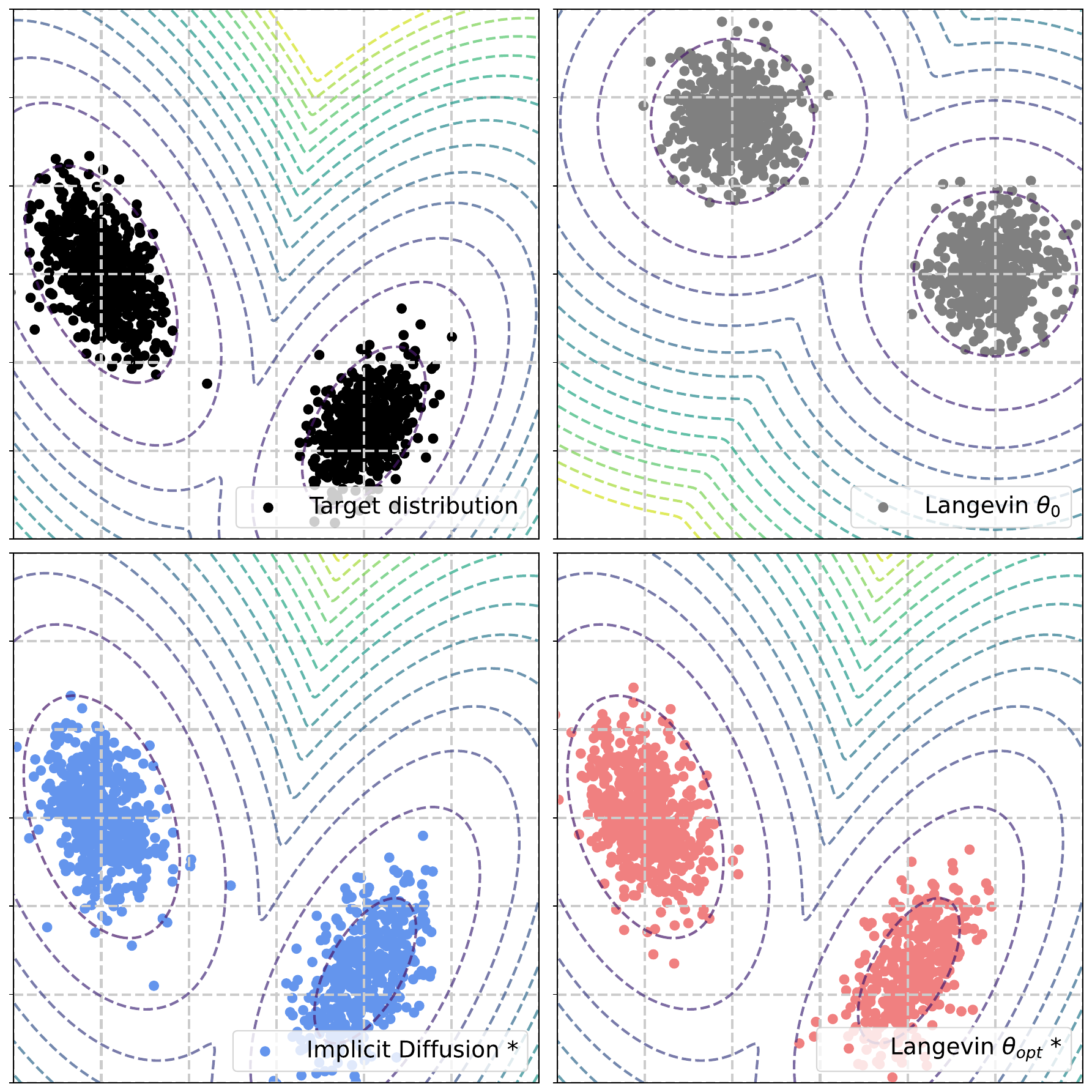}}
    \caption{Contour lines and samples from  sampling algorithms. We start from a model generating a mixture of two Gaussians (top-right figure), and our goal is to learn to generate another mixture with different means, covariances and weights (top-left figure). We use Implicit Diffusion where the reward is the KL between the target distribution and the current one. We observe that Implicit Diffusion is able to learn the target distribution (bottom-left figure). After running Implicit Diffusion, it is easy to generate new samples that are close to the target distribution (bottom-right figure).
    }
    \end{center}
    \end{subfigure}
    \begin{subfigure}[b]{0.99\textwidth}
    \begin{center}
    \centerline{\includegraphics[width=0.7\linewidth]{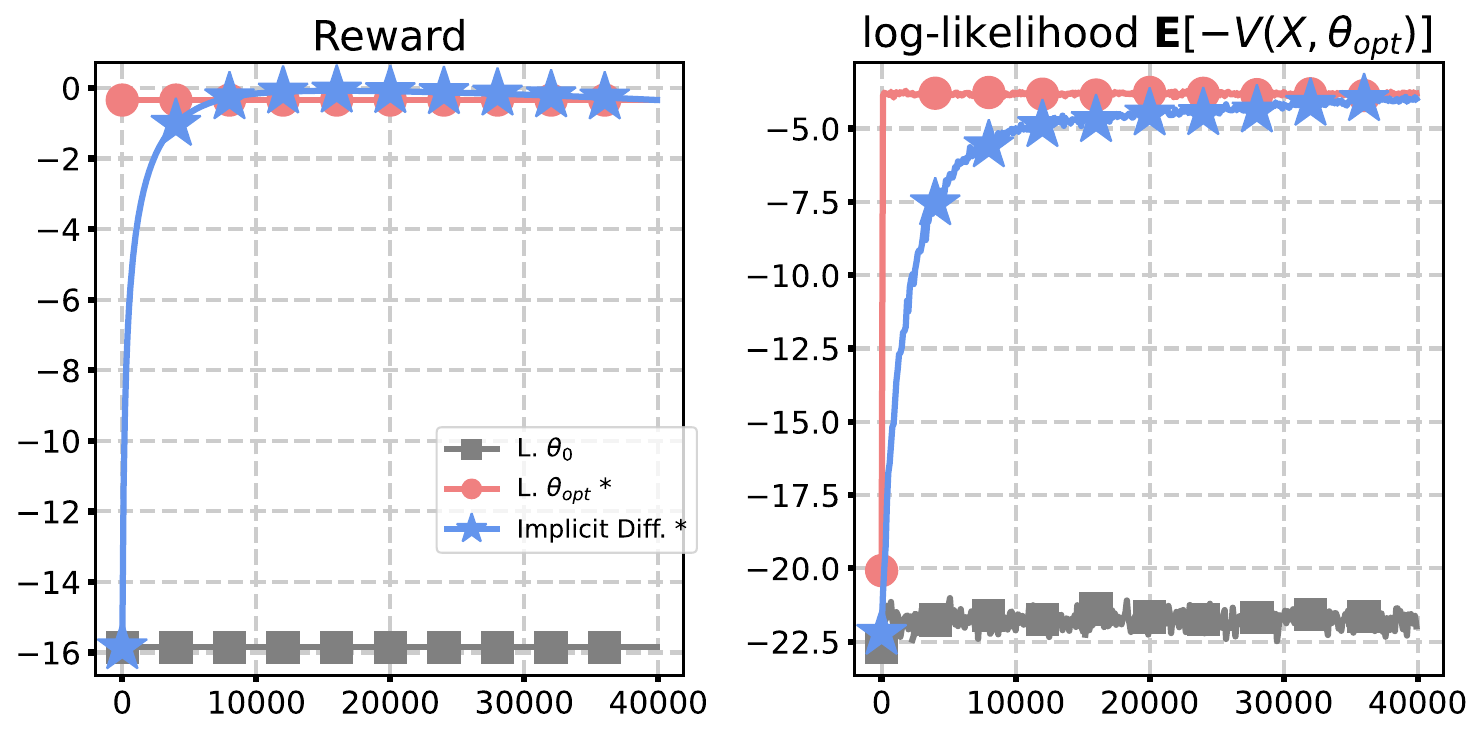}}
    \caption{Evolution of the reward and of the log-likelihood of the samples for the initial model, for the Implicit Diffusion algorithm, and for the trained model after Implicit Diffusion. The reward is the (opposite of the) KL between the target distribution and the current one, so a reward equal to zero means we learnt to reproduce the target distribution.}
    \end{center}
\end{subfigure}
\caption{Optimizing through sampling with \textbf{Implicit Diffusion} to train from scratch an energy-based model (Langevin diffusion). The potential $V$ is a logsumexp of $2$ quadratic forms, where the weights, centers, and the matrix of the forms are learnable parameters, so that the outcome distribution can be any mixture of $2$ Gaussians.}
\label{fig:training-from-scratch-full-gmm}
\end{figure}

\newpage

\subsection{Denoising diffusion models} \label{apx:exp-denoising}

We first provide additional experimental configurations that are common between both datasets before explaining details specific to each one.

\paragraph{Common details.} The KL term in the reward is computed using Girsanov's theorem as explained in Appendix~\ref{apx:adjoint-method}. We use the Adam optimizer \citep{kingmaAdamMethodStochastic2017}, with various values for the learning rate (see e.g.~Figure~\ref{fig:metrics-mnist-cifar}). The code was implemented in JAX \citep{jax2018github}. As mentioned in the main text, we use a U-Net model \citep{ronneberger2015u}.

\paragraph{MNIST.} We use an Ornstein-Uhlenbeck noise schedule, meaning that the forward diffusion is $\ud X_t = -X_t \ud t + \sqrt{2} \ud B_t$ (as presented in Section \ref{subsec:examples}). We pretrain for $18\si{\kilo\nothing}$ steps in $7$ minutes on 4 TPUv2. For reward training, we train on a TPUv2 for $4$ hours with a queue of size $M=4$, $T=64$ steps, and a batch size of~$32$.
Further hyperparameters for pretraining and reward training are given respectively in Tables \ref{tab:hyperparams-mnist-pretraining} and \ref{tab:hyperparams-mnist-reward}.

\begin{table}[ht]
    \centering
    \begin{tabular}{cc}
    \toprule
    {\bf Name} & {\bf Value} \\
    \midrule
    Noise schedule & Ornstein-Uhlenbeck \\
    Optimizer & Adam with standard hyperparameters \\
    EMA decay & $0.995$ \\
    Learning rate & $10^{-3}$ \\
    Batch size & $32$ \\
    \bottomrule
    \end{tabular}
    \caption{Hyperparameters for pretraining of denoising diffusion models on MNIST.}
    \label{tab:hyperparams-mnist-pretraining}
\end{table}

\begin{table}[ht]
    \centering
    \begin{tabular}{cc}
    \toprule
    {\bf Name} & {\bf Value} \\
    \midrule
    Number of sampling steps & $256$ \\
    Sampler & Euler \\
    Noise schedule & Ornstein-Uhlenbeck \\
    Optimizer & Adam with standard hyperparameters \\
    \bottomrule
    \end{tabular}
    \caption{Hyperparameters for reward training of denoising diffusion models pretrained on MNIST.}
    \label{tab:hyperparams-mnist-reward}
\end{table}

\paragraph{CIFAR-10 and LSUN.} For CIFAR-10, we pretrain for $500\si{\kilo\nothing}$ steps in $30$ hours on $16$ TPUv2, reaching an FID score \citep{heusel2017gans} of~$2.5$. For reward training, we train on a TPUv3 for~$9$ hours with a queue of size $M=4$ and $T=64$ steps, and a batch size of $32$. For LSUN, we pretrain for $13$ hours, reaching a FID score of~$2.26$. For reward training, we train on a TPUv3 for~$9$ hours with a queue of size $M=2$, $T=64$ steps, and a batch size of $16$. Further hyperparameters for pretraining and reward training are given respectively in Tables \ref{tab:hyperparams-cifar-pretraining} and \ref{tab:hyperparams-cifar-reward}. 

\begin{table}[H]
    \centering
    \begin{tabular}{cc}
    \toprule
    {\bf Name} & {\bf Value} \\
    \midrule
    Number of sampling steps & $1,024$ \\
    Sampler & DDPM \\
    Noise schedule & Cosine \citep{nichol2021improved} \\
    Optimizer & Adam with $\beta_1=0.9$, $\beta_2=0.99$, $\varepsilon=10^{-12}$ \\
    EMA decay & $0.9999$ \\
    Learning rate & $2 \cdot 10^{-4}$ \\
    Batch size & $2048$ \\
    Number of samples for FID evaluation & $50\si{\kilo\nothing}$ \\
    \bottomrule
    \end{tabular}
    \caption{Hyperparameters for pretraining of denoising diffusion models on CIFAR-10 and LSUN.}
    \label{tab:hyperparams-cifar-pretraining}
\end{table}

\begin{table}[H]
    \centering
    \begin{tabular}{cc}
    \toprule
    {\bf Name} & {\bf Value} \\
    \midrule
    Number of sampling steps & $1,024$ \\
    Sampler & Euler \\
    Noise schedule & Cosine \citep{nichol2021improved} \\
    Optimizer & Adam with standard hyperparameters \\
    \bottomrule
    \end{tabular}
    \caption{Hyperparameters for reward training of denoising diffusion models pretrained on CIFAR-10 and LSUN.}
    \label{tab:hyperparams-cifar-reward}
\end{table}

\paragraph{Additional figures for MNIST.}
We report in Figure \ref{fig:metrics-mnist-pos} metrics on the rewards and KL divergence with respect to the original distribution, in the case where $\lambda > 0$. As in Figure~\ref{fig:metrics-mnist-cifar}, we observe the competition between the reward and the divergence with respect to the distribution after pretraining. 
The results are sensitive to the choice of hyperparameters. In particular, when the ratio $\lambda/\beta$ is too small, we do not observe an improvement of the reward. Note that we did not perform extensive hyperparameter finetuning for these plots, so it is likely that better results could be obtained with more hyperparameter finetuning.
We also display in Figures \ref{fig:app-mnist-pos} some additional selected examples of samples generated by our denoising diffusion model with parameters $\theta_k$, at several steps $k$ of our algorithm. Note that the random number generator system of JAX allows us, for illustration purposes, to sample for different parameters from the same seed. We take advantage of this feature to visualize the evolution of a given realization of the stochastic denoising process depending on $\theta$.

Recall that we consider
\[
\cF(p) :=- \lambda \E_{x \sim p}[R(x)] + \beta \KL(p \, || \, \pi^\star(\theta_0))\, ,
\]
where $R(x)$ is the average value of all the pixels in $x$. The figures in the main text and in the appendix present samples for negative and positive $\lambda$, rewarding respectively darker and brighter images. We emphasize that these samples are generated for evaluation purposes. To generate the samples, at various steps of the optimization procedure, we run the full denoising process for the current value of the parameters. In particular, these samples are different from the ones used to perform the joint sampling and parameter updates in Implicit Diffusion.

We have purposefully chosen, for illustration purposes, samples for experiments with the highest magnitude of $\lambda / \beta$, i.e. those that favor reward optimization over proximity to the original distribution. As noted in Section~\ref{sec:experiments}, we observe qualitatively that reward training, while shifting some aspects of the distribution (here the average brightness), and necessarily diverging from the original pretrained model, manages to do so while retaining some important global characteristics of the dataset--even though the pretraining dataset is never observed during reward training. Since we chose to display samples from experiments with the most extreme incentives towards the reward, we observe that the similarity with the pretraining dataset can be forced to break down after a certain number of reward training steps. We also observe some mode collapse; we comment further on this point below.

\begin{figure}[H]
\begin{center}
\centerline{\includegraphics[width=\textwidth]{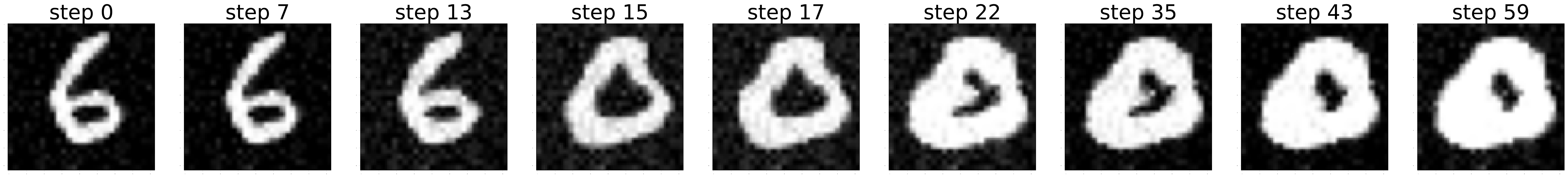}}
\centerline{\includegraphics[width=\textwidth]{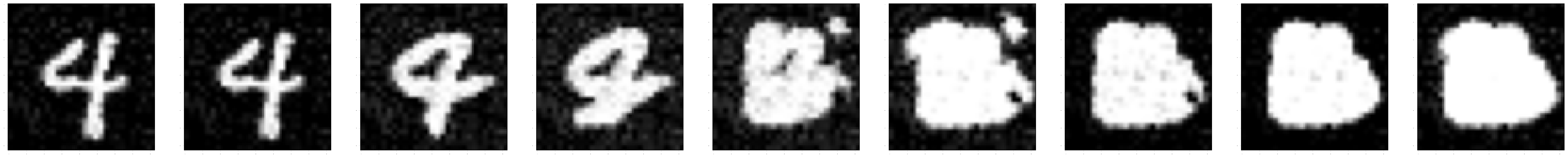}}
\centerline{\includegraphics[width=\textwidth]{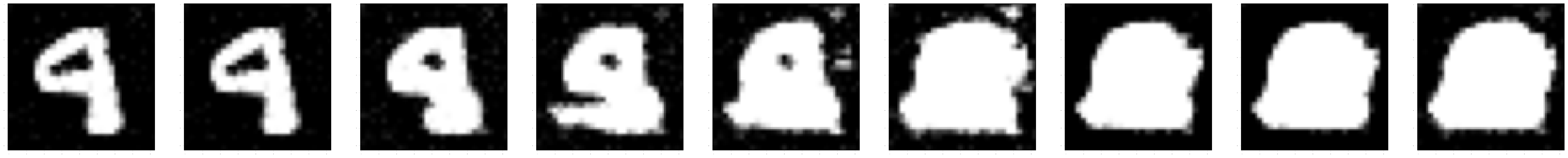}}
\caption{Reward training for a model pretrained on MNIST. The reward favors \textbf{brighter} images ($\lambda>0$, $\beta>0$). Selected examples are shown coming from a single experiment with $\lambda / \beta = 30$. All digits are re-sampled at the same selected steps of the Implicit Diffusion algorithm.
\label{fig:app-mnist-pos}}
\end{center}
\vskip -0.2in
\end{figure}

\begin{figure}[H]
\begin{center}
\centerline{\includegraphics[width=0.67\textwidth]{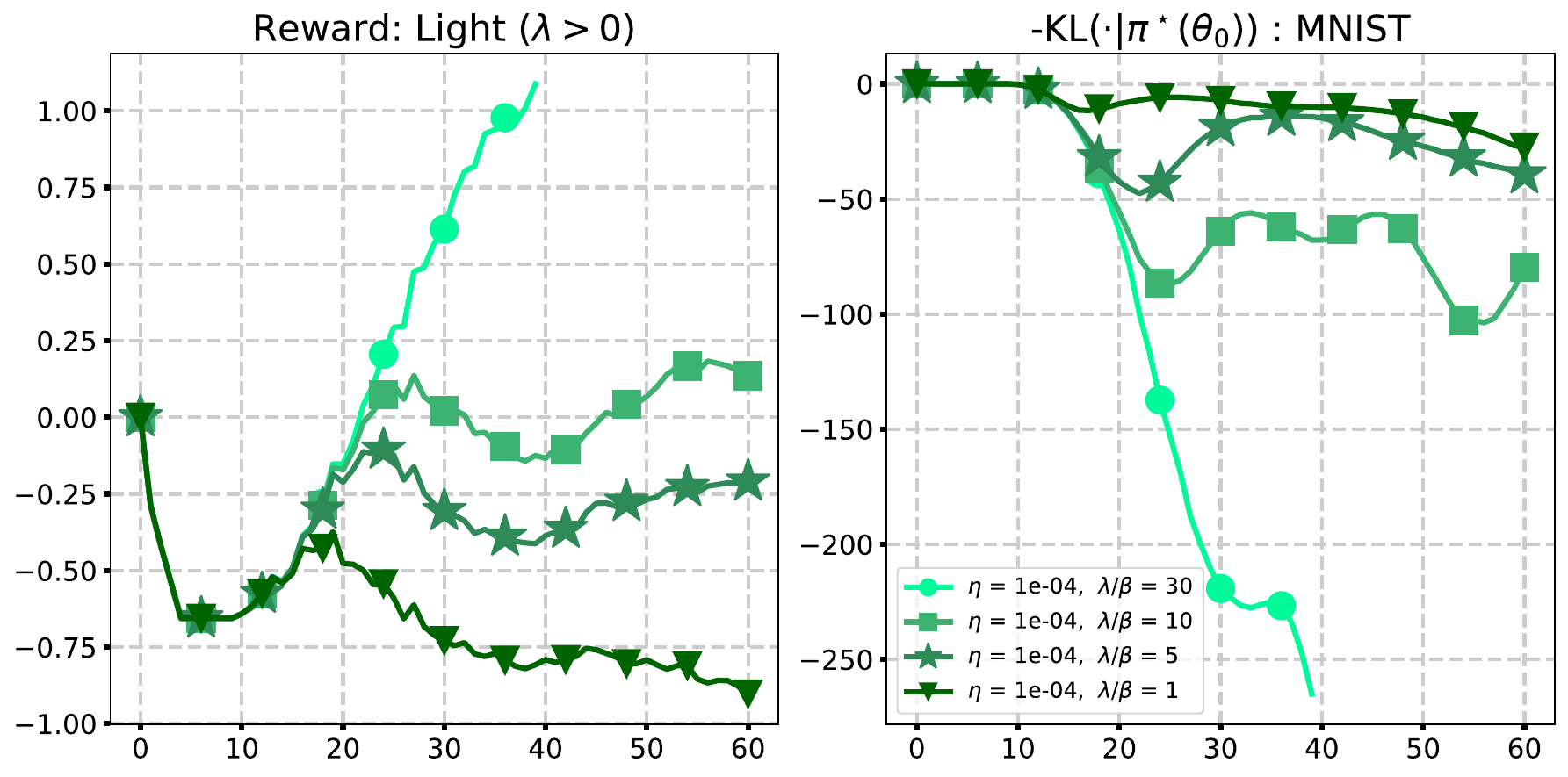}}
\caption{Score function reward training with \textbf{Implicit Diffusion} pretrained on MNIST for various $\lambda > 0$ (brighter). \textbf{Left:} Reward, average brightness of image. \textbf{Right:} Divergence w.r.t.~the original pretrained distribution. 
\label{fig:metrics-mnist-pos}}
\end{center}
\end{figure}

\paragraph{Additional figures for CIFAR-10.} We recall that we consider, for a model with weights $\theta_0$ pretrained on CIFAR-10, the objective function
\[
\cF(p) :=- \lambda \E_{x \sim p}[R(x)] + \beta \KL(p \, || \, \pi^\star(\theta_0))\, ,
\]
where $R(x)$ is the average over the red channel, minus the average of the other channels. We show in Figure~\ref{fig:app-cifar-pos} the result of the denoising process for some fixed samples and various steps of the reward training, for the experiment with the most extreme incentive towards the reward. %

We observe as for MNIST some mode collapse, although less pronounced here. Since the pretrained model has been trained with label conditioning for CIFAR-10, it is possible that this phenomenon could be a byproduct of this pretraining feature.

\begin{figure}[ht]
\vskip 0.1in
\begin{center}
\centerline{\includegraphics[width=\textwidth]{new-figures/image-mnist-appendix-car.pdf}}
\centerline{\includegraphics[width=\textwidth]{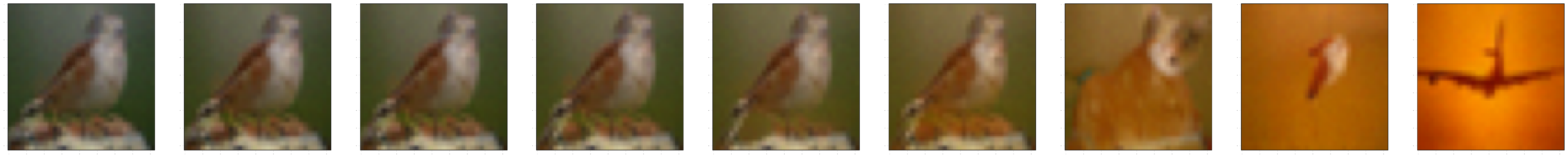}}
\centerline{\includegraphics[width=\textwidth]{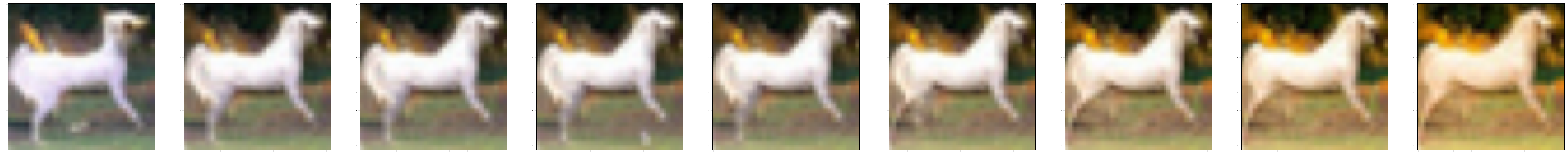}}
\caption{Reward training for a model pretrained on CIFAR-10. The reward favors \textbf{redder} images ($\lambda>0$, $\beta>0$). Selected examples are shown coming from a single experiment with $\lambda / \beta = 1,000$. All images are re-sampled at the same selected steps of the Implicit Diffusion algorithm, as explained in Appendix \ref{apx:exp-denoising}.
\label{fig:app-cifar-pos}}
\end{center}
\vskip -0.35in
\end{figure}

\paragraph{Additional figures for LSUN.} As for other datasets, we report in Figure \ref{fig:metrics-lsun} metrics on the rewards and KL divergence with respect to the original distribution.

\begin{figure}[ht]
\vskip 0.1in
\begin{center}
\centerline{\includegraphics[width=0.8\textwidth]{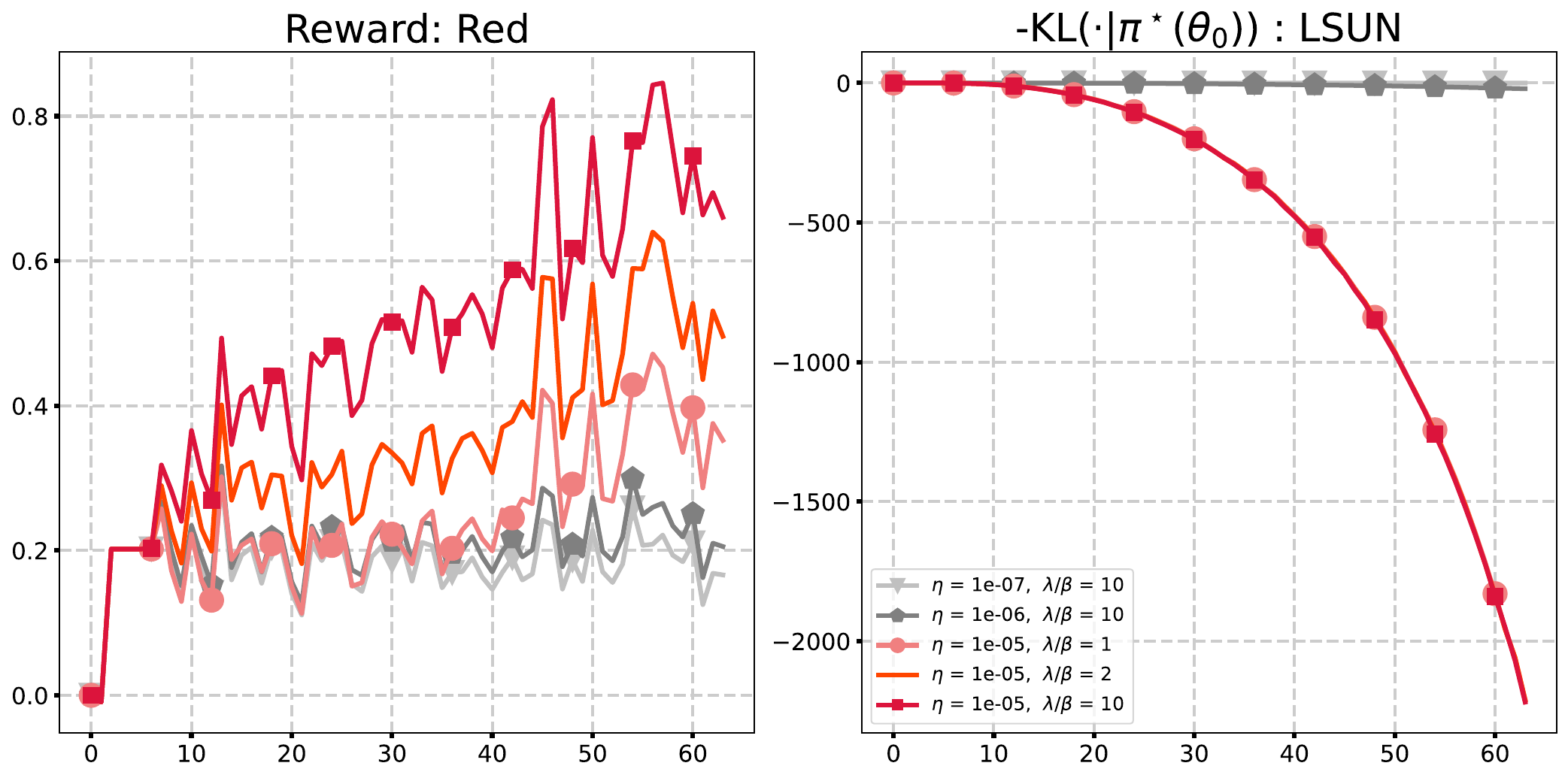}}
\caption{Score function reward training with \textbf{Implicit Diffusion} pretrained on LSUN for various $\lambda > 0$ (brighter). \textbf{Left:} Reward, average brightness of image. \textbf{Right:} Divergence w.r.t.~the original pretrained distribution. 
\label{fig:metrics-lsun}}
\end{center}
\vskip -0.35in
\end{figure}

\paragraph{Extensions.} We emphasize that our methodology covers a wide range of sampling processes and rewards. For instance, it could be applied to diffusions in discrete spaces for language modeling, or for more complex rewards such as aesthetic scores.

\section{ADDITIONAL RELATED WORK}   \label{apx:additional-related-work}

\paragraph{Reward finetuning of denoising diffusion models.}
A large body of work has recently tackled the task of finetuning denoising diffusion models, with various point of views. 
\cite{wu2023human} update weight parameters in a supervised fashion by building a high-reward dataset, then using score matching.
Other papers use reinforcement learning approaches to finetune the parameters of the model \citep{dvijotham2023algorithms,fan2023dpok,black2024training}.
Closer to our approach are works that propose finetuning of denoising diffusion models by backpropagating through sampling \citep{watson2022learning,dong2023raft,wallace2023end,clark2024directly}. However, they sample only once \citep{lee2023aligning}, or use a nested loop approach (described in Section \ref{subsec:overview}) and resort to implementation techniques such as gradient checkpointing or gradient rematerialization to limit the memory burden. We instead depart from this point of view and propose a \textbf{single-loop} approach. Furthermore, our approach is much more general than denoising diffusion models and includes any iterative sampling algorithm such as Langevin sampling. 

We emphasize that the finetuning approach differs from guidance of diffusion models (see, e.g., \citealp{dhariwal2021diffusion,graikos2022diffusion,hertz2023prompttoprompt,kwon2023diffusion,wu2023uncovering,zhang2023unsupervised}). In the latter case, the sampling scheme is modified to bias sampling towards maximizing the reward. On the contrary, finetuning directly modifies the weights of the model without changing the sampling scheme. Both approaches are complementary, and it can happen that in practice people prefer to modify the weights of the model rather than the sampling scheme: e.g., to distribute weights that take into account the reward and that can be used with any standard sampling scheme, without asking downstream users to modify their sampling method or requiring them to share the reward mechanism.

\paragraph{Single-loop approaches for bilevel optimization.} Our single-loop approach for differentiating through sampling processes is inspired by recently-proposed single-loop approaches for bilevel optimization problems \citep{guo2021novel,yang2021provably,chen2022single,dagreou2022framework,hong2023two}. Closest to our setting is \cite{dagreou2022framework}, where strong convexity assumptions are made on the inner problem while gradients for the outer problem are assumed to be Lipschitz and bounded. They also show convergence of the average of the objective gradients, akin to our Theorem \ref{thm:langevin-discrete}. However, contrarily to their analysis, we study the case where the inner problem is a sampling problem (or infinite-dimensional optimization problem). Our methodology also extends to the non-stationary case, e.g. encompassing denoising diffusion models. 

\paragraph{Study of optimization through Langevin dynamics in the linear case.}
In the case where the operator $\Gamma$ can be written as an expectation w.r.t. $p_t$ then the dynamics of $\theta$ in \eqref{eq:langevin-theory} can be seen as a McKean-Vlasov process. \cite{kuntz2023particle} and \cite{sharrock2024tuning} propose efficient algorithms to approximate this process using the convergence of interacting particle systems to McKean-Vlasov process when the number of particles is large. In the same setting, where $\Gamma$ can be written as an expectation w.r.t. $p_t$, discretization of such dynamics have been extensively studied \citep{atchade2017perturbed,debortoli2021efficient,xiao2014proximal,rosasco2020convergence,nitanda2014stochastic,tadic2017asymptotic}. In that setting,  one can leverage convergence results of the Langevin algorithm under mild assumption such as \cite{eberle2016reflection} to prove the convergence of a sequence $(\theta_k)_{k \in \mathbb{N}}$ to a local minimizer such that $\nabla \ell(\theta^\star) = 0$, see \cite[Appendix B]{debortoli2021efficient} for instance. Finally, \cite{wang2024continuous} and \cite{wang2022forward} propose and analyze a single-loop algorithm to differentiate through solutions of SDEs. Their algorithm uses forward-mode differentiation, which does not scale well to large-scale machine learning problems.

\section*{AUTHOR CONTRIBUTION STATEMENT}
PM worked on designing the methodology, implemented the codebase for experiments, proved theoretical guarantees for proposed method, contributed importantly to writing the paper.
AK contributed to designing the methodology, worked on proving theoretical guarantees, made some contributions to the paper.
PB, MB, VDB, AD, FL, CP (by alphabetical order) contributed to discussions in designing the methodology, provided references, made remarks and suggestions on the manuscript and provided some help with the codebase implementation.
QB proposed the initial idea, proposed the general methodology and worked on designing it, contributed to the codebase implementation, ran experiments, and contributed importantly to writing the paper.

\end{document}